\newtheorem{theorem}{Theorem}
\newtheorem{lemma}[theorem]{Lemma}
\newtheorem{proposition}[theorem]{Proposition}
\newtheorem{definition}[theorem]{Definition}
\theoremstyle{remark}
\newtheorem{remark}{Remark}
\newtheorem{assumption}{Assumption}
\def\uu{f}
\def\xx{X}
\def\vv{V}
\def\1{\mathds{1}}
\DeclareMathOperator{\R}{\mathbb{R}}
\newcommand{\E}{\mathbb{E}}
\def\U{\mathcal{U}}
\def\B{\mathcal{B}}
\begin{document}

%

%

\twocolumn[

\aistatstitle{Sampling from Bayesian Neural Network Posteriors with Symmetric Minibatch Splitting Langevin Dynamics}

\aistatsauthor{Daniel Paulin$^{\textbf{*}}$  \And  Peter A. Whalley$^{\textbf{*}}$ \And Neil K. Chada \And Benedict Leimkuhler}

\aistatsaddress{Nanyang Technological U. \And ETH Z\"{u}rich \And City University of Hong Kong  \And  University of Edinburgh} ]

\begin{abstract}
We propose a scalable kinetic Langevin dynamics algorithm for sampling parameter spaces of big data and AI applications.  Our scheme combines a symmetric forward/backward sweep over minibatches with a symmetric discretization of Langevin dynamics. For a particular Langevin splitting method (UBU), we show that the resulting Symmetric Minibatch Splitting-UBU (SMS-UBU) integrator has bias $\mathcal{O}(h^2 d^{1/2})$ in dimension $d>0$ with stepsize $h>0$, despite only using one minibatch per iteration, thus providing excellent control of the sampling bias as a function of the stepsize. We apply the algorithm to explore local modes of the posterior distribution of Bayesian neural networks (BNNs) and evaluate the calibration performance of the posterior predictive probabilities for neural networks with convolutional neural network architectures for classification problems on three different datasets (Fashion-MNIST, Celeb-A and chest X-ray). Our results indicate that BNNs sampled with SMS-UBU can offer significantly better calibration performance compared to standard methods of training and stochastic weight averaging.
\end{abstract}

\section{INTRODUCTION}
Bayesian neural networks (BNNs) are a statistical paradigm which has been widely advocated to improve the reliability of neural networks by formulating questions regarding parameter sensitivity and prediction error in terms of the structure of the posterior parameter distribution.  The ultimate goals of BNNs (already espoused in the seminal works of \cite{Neal} and \cite{mackay}) are to reduce generalization error and enhance robustness of trained models.  Due to the capacity of BNNs to quantify uncertainty, they have been proposed for use in applications such as astronomy, and modelling partial differential equations (\cite{astronomy}, \cite{Zabaras}), but their potential impact is much greater and includes topics in data-driven engineering (\cite{driving2}, \cite{driving}) and automated medical diagnosis (\cite{diabetic}, \cite{cancer}).

While reliance on a Bayesian statistics approach provides
theoretical foundation for BNNs, the high cost of treating these models has hampered uptake in practical applications.  In the modern era, datasets are often very large--a single epoch (pass through the training data) with computation of likelihoods for a large scale network may take hours or days (\cite{scale}). Optimization schemes can robustly use efficient data subsampling, whereas  BNNs require a more delicate approach, since the target distribution of the Markov chain is easily corrupted (\cite{SGHMC}, \cite{HMCBNN}, \cite{Zhang}). Scalable alternatives to MCMC based on Gaussian approximation can lead to some improvements in calibration (\cite{maddox2019simple}, \cite{blundell2015weight}), but they lack theoretical justification.

The high computational cost of deep learning is also a consequence of the size of the model parameter space, which may consist of many millions (or even billions) of parameters.  Sampling the parameters of such models undeniably adds substantial overload in comparison to optimization, which is already seen as costly,  thus a central impediment to exploiting the promise of BNNs is the need for optimal sampling strategies that scale well both in terms of the size of the dataset and parameter space dimension (\cite{rank1}, \cite{ADAM}).   The most popular schemes for high dimensional inference are Markov chain Monte Carlo (MCMC) methods
(\cite{andrieu}), e.g. Hamiltonian Monte Carlo (HMC) and the Metropolis-adjusted Langevin algorithm (MALA) (\cite{talldata}, \cite{SGHMC}, \cite{gouraud2022hmc}), with HMC typically favored in the large scale BNN setting (\cite{BNNpost}, \cite{theo2}).

In this article, we present the use of an ``unadjusted'' sampling scheme, within BNNs, which avoids the costly Metropolis test and is based on the use of efficient symmetric splitting methods for kinetic Langevin dynamics, in particular the UBU discretization, providing second order accuracy in the strong (pathwise) sense and in terms of sampling bias.
We show that second order accuracy can be maintained in combination with a carefully organized minibatch subsampling strategy resulting in a fully symmetric (SMS-UBU) integrator, which provides both efficiency for large datasets and low bias, asymptotically, at high dimension.  We implement this scheme for BNNs and demonstrate the performance in comparison with some alternative methods. Numerical tests are conducted using suitable NNs on a range of classification problems involving image data (the fashion-MNIST dataset (\cite{xiao2017fashion}), celeb-A dataset (\cite{yang2015facial}) and a chest X-ray dataset (\cite{kermany2018identifying})). 
To assess the results, we rely on well-known metrics, namely accuracy, negative log-likelihood on test data, adaptive calibration error, and ranked probability score. Our experiments clearly demonstrate the effectiveness of BNNs trained with SMS-UBU compared to standard training and stochastic weight averaging. Our code is available at \url{github.com/paulindani/SMS_Kinetic_Langevin}.

The main contributions of this work are:
(i) the presentation of a kinetic Langevin scheme incorporating symmetric minibatch subsampling (SMS-UBU) and the demonstration of its second order accuracy, the first result of its kind for a stochastic gradient scheme,  (ii) novel bias bounds for vanilla stochastic gradient kinetic Langevin dynamics as a function of dataset size, considering also the control of the bias for minibatches of the dataset drawn without replacement, and (iii) demonstration of the use of a Bayesian \textcolor{black}{uncertainty quantification (UQ)} framework for NNs by applying our efficient methods to sample BNNs near targeted posterior modes. 
In Bayesian uncertainty quantification, we estimate the uncertainties in the predictions by sampling from the Bayesian posterior distribution of the model's parameters. 

The results are organized as follows:  Section \ref{sec:method} gives an  overview of kinetic Langevin integrators, while the SMS-UBU schemes appears in  Section \ref{sec:sms-ubu} along with a summary of the key theoretical results. Our main theorem is given in Section \ref{sec:main}. Section \ref{sec:uq} takes up  discussion of \textcolor{black}{UQ} in neural networks, followed by the presentation of classification experiments in Section \ref{sec:num}. We conclude in Section \ref{sec:conc}. Detailed proofs are provided in the supplement, along with additional details about the experiments.

\section{Methodology}
In this section, we first review sampling using kinetic Langevin integrators. We begin with a brief overview, before discussing a particular splitting scheme, (UBU). We will then discuss the extension of UBU to stochastic gradients before introducing a new method referred to as SMS-UBU, which is presented in algorithmic form.
\label{sec:method}
\subsection{Kinetic Langevin dynamics and discretizations}
\label{sec:integrators}

Denote by $\pi(d x)$ the target probability measure of form $\pi(dx)\propto \exp(-\uu(x))dx$,
where $\uu(x)$ refers to the potential energy function.
Kinetic Langevin dynamics is the system
\begin{equation}\label{eq:LD}
    \begin{split}
    d\xx_{t} &= \vv_{t}dt,\\
    d\vv_{t} &= -\nabla \uu(\xx_{t}) dt - \gamma \vv_{t}dt + \sqrt{2\gamma}dW_{t},
\end{split}
\end{equation}
on $\mathbb{R}^{2d}$, where $\{W_t\}_{t\geq 0}$ is a standard $d-$dimensional Wiener process, and $\gamma>0$ is a friction coefficient. Under 
standard assumptions (see \cite{pavliotis2014stochastic}), the unique invariant measure of the process $\{X_t,V_t\}_{t\geq 0}$ has density  given by
\begin{align}
\label{eq:inv}
&\bar{\pi}(dx,dv) \propto \exp\left(-\uu(x)-\frac{ \|v\|^2}{2}\right)dx dv
\\ &=\pi(dx) \exp\left (-\frac{ \|v\|^2}{2}\right) dv, \nonumber
\end{align}
with respect to Lebesgue measure.
Due to the product form of the invariant measure, averages with respect to the target measure $\pi(dx)$ can be obtained as the configurations generated by approximating paths of \eqref{eq:LD}.

Bias in the invariant measure arises, principally, due to two factors: (i) finite stepsize discretization of \eqref{eq:LD}, and (ii) the use of stochastic gradients. The choice of integrator, the stepsize, and the friction $\gamma$ all affect both the convergence rate and the discretization bias (\cite{BoOw10}, \cite{LeMa13}, \cite{gouraud2022hmc}), and this is further altered by the procedure used to estimate the gradient.    While the simplest procedure is to use the Euler-Mayurama scheme for the overdamped form of \eqref{eq:LD}, together with stochastic gradients (SG), as in the SG-HMC algorithm of \cite{SGHMC}, there has been significant progress in the numerical analysis community in the development of accurate second order integrators for  kinetic Langevin dynamics \eqref{eq:LD}, see \cite{LeMa13}, \cite{sanz2021wasserstein} and \cite{BUBthesis}. Here we extend some of these integrators via the use of stochastic gradients.

An accurate splitting method was introduced in \cite{BUBthesis} and further studied in \cite{sanz2021wasserstein} and \cite{ububu}. This splitting method only requires one gradient evaluation per iteration but has strong order two. The method is based on breaking up the SDE \eqref{eq:LD} as follows 
\[
\begin{pmatrix}
dX_{t} \\
dV_{t}
\end{pmatrix} = \underbrace{\begin{pmatrix}
0 \\
-\nabla \uu(X_{t})dt
\end{pmatrix}}_{\mathcal{B}} +\underbrace{\begin{pmatrix}
V_{t}dt \\
-\gamma V_{t} dt + \sqrt{2\gamma}dW_t
\end{pmatrix}}_{\mathcal{U}},
\]
where each part can be integrated exactly over a step of size $h$. Given $\gamma > 0$, let $\eta = \exp{\left(-\gamma h/2\right)}$, and for ease of notation, define the following operators 
\begin{equation}\label{eq:Bdef}
\mathcal{B}(x,v,h) = (x,v - h\nabla \uu(x)),
\end{equation}
and
\begin{align}\label{eq:Udef}
&\mathcal{U}(x,v,h/2,\xi^{(1)},\xi^{(2)}) =\Big(x + \frac{1-\eta}{\gamma}v\\
\nonumber
& + \sqrt{\frac{2}{\gamma}}\left(\mathcal{Z}^{(1)}\left(h/2,\xi^{(1)}\right) - \mathcal{Z}^{(2)}\left(h/2,\xi^{(1)},\xi^{(2)}\right) \right),\\
\nonumber
& \eta v + \sqrt{2\gamma}\mathcal{Z}^{(2)}\left(h/2,\xi^{(1)},\xi^{(2)}\right)\Big), \text{ where}\\
\label{eq:Z1def}&\mathcal{Z}^{(1)}\left(h/2,\xi^{(1)}\right) = \sqrt{\frac{h}{2}}\xi^{(1)},\\
\label{eq:Z2def}&\mathcal{Z}^{(2)}\left(h/2,\xi^{(1)},\xi^{(2)}\right) = \\
\nonumber&\sqrt{\frac{1-\eta^{2}}{2\gamma}}\Bigg(\sqrt{\frac{1-\eta}{1+\eta}\cdot \frac{4}{\gamma h}}\xi^{(1)} + \sqrt{1-\frac{1-\eta}{1+\eta}\cdot\frac{4}{\gamma h}}\xi^{(2)}\Bigg),
\end{align}
where $\xi^{(1)},\xi^{(2)} \sim \mathcal{N}(0_{d},I_{d})$ are $d$-dimensional standard Gaussian random variables.

%


The UBU scheme consists of applying first a half-step ($h$ replaced by $h/2$) using \eqref{eq:Udef}, then applying an impulse based on the potential energy term, and following this by another half-step of the mapping $\mathcal{U}$. The symmetry of this scheme plays a role in its accuracy, since it induces a cancellation of the leading order term in the error expansion (in the absence of gradient noise).

Other symmetric splitting methods are possible and have been extensively studied in recent years. In particular, the BAOAB, ABOBA and OBABO schemes (\cite{OBABO}, \cite{LeMa13}, \cite{lemast2016}) are all second order in the weak (sampling bias) sense and make plausible candidates for BNN sampling.  These methods can be viewed as  breaking the ${\cal U}$ stage of the UBU algorithm into two parts and interspersing the resulting solution maps with ${\cal B}$ steps to form a numerical scheme. Several of these methods will be compared in Section \ref{sec:num}.

\subsection{Stochastic gradient algorithms}\label{sec:stochastic_gradients}

We first abstractly define stochastic gradients as unbiased estimators of the gradient of the potential, under the same set of assumptions as \cite{leimkuhler2023contractionb}.
\begin{definition} \label{def:stochastic_gradient}
A \textit{stochastic gradient approximation} of a potential $\uu$ is defined by a function $\mathcal{G}:\R^d \times \Omega \to \R^d$ and a probability distribution $\rho$ on a Polish space $\Omega$, such that for every $x\in \R^d$, $\mathcal{G}(x, \cdot)$ is measurable on $(\Omega,\mathcal{F})$, and for $\omega\sim \rho$,
\[\E(\mathcal{G}(x,\omega)) = \nabla \uu(x).\]
The function $\mathcal{G}$ and the distribution $\rho$  together define the stochastic gradient, which we denote as  $(\mathcal{G}, \rho)$.
\end{definition}

The following assumption is useful for controlling the accuracy of the stochastic gradient approximations.
\begin{assumption}\label{Assumption:Bounded_Variance}
    The Jacobian of the stochastic gradient $\mathcal{G}$, $D_x\mathcal{G}(x,\omega)$ exists and  is measurable on $(\Omega,\mathcal{F})$.
    There exists a bound $C_{G}>0$ such that, for $\omega\sim \rho$,
    \begin{equation*}
        \sup_{x\in \R^d}\mathbb{E}\|D_{x}\mathcal{G}(x,\omega) - \nabla^{2}\uu (x)\|^{2} \leq C_{G}.
    \end{equation*}
\end{assumption}

\begin{assumption}[Moments of Stochastic Gradient]\label{assum:stochastic_gradient}
Let $\mathcal{D}(y,\omega) := \mathcal{G}(y,\omega)-\nabla \uu (y)$, for all $y \in \R^{d}$ be the difference between the stochastic gradient approximation, $\mathcal{G}(y,\omega)$, and the true gradient. The following moment bound holds on $\mathcal{D}$
\begin{align*}
    \mathbb{E}\left[\|\mathcal{D}(y,\omega)\|^{2} \mid y\right]
    &\leq C_{SG}^{2}M^{2}\|y-x^{*}\|^{2}.
\end{align*}
\end{assumption}

In Bayesian inference settings, stochastic gradients are usually considered for a potential $\uu:\mathbb{R}^{d} \to \mathbb{R}$ of the form
\begin{equation}\label{eq:split_potential}
    \uu(x) = \uu_{0}(x) + \sum^{N_{D}}_{i=1}\uu_{i}(x),
\end{equation}
where $x \in \mathbb{R}^{d}$ and $N_{D}$ is the size of the dataset; the individual contributions arise as terms in a summation that defines the log-likelihood of parameters given the data. In many applications, such as the ones we consider in this work, both the dimension $d$ and the size of the dataset $N_{D}$ are large. As a result, gradient evaluations of the potential can be computationally expensive. As a consequence, a stochastic approximation of the gradient is typically used (see e.g. \cite{robbins1951stochastic}) which dramatically reduces the computational cost by relying only on a sample of the dataset of size $N_{b} \ll N_{D}$. For MCMC sampling it is natural to seek a similar approach to improve the scalability  (\cite{welling2011bayesian}, \cite{nemeth2021stochastic}).  Within \eqref{eq:split_potential}, $\uu_{0}$ can be chosen to be the negative log density of the prior distribution. Then random variables of the form $\omega \in [N_D]^{N_b}$ are constructed as uniform draws on $[N_D]=\{1,\ldots,N_{D}\}$, taken i.i.d. with replacement (\cite{baker2019control}).  At each step one uses an unbiased estimator of \eqref{eq:split_potential} instead of the full gradient evaluation, for example
\begin{align}\nonumber
    \mathcal{G}(x,\omega|\hat{x}) &= \nabla \uu_{0}(x) + \sum^{N_D}_{i=1}\nabla \uu_{i}(\hat{x}) \\&+ \frac{N_D}{N_b} \sum_{i \in \omega}[\nabla \uu_{i}(x) - \nabla \uu_{i}(\hat{x})], \label{eq:SG1}
\end{align}
such that $\mathbb{E}(\mathcal{G}(x,\omega)) = \nabla \uu(x)$, where, in case the potential is convex, $\hat{x}$ can be chosen as the minimizer of $\uu$, $x^{*} \in \mathbb{R}^{d}$. 
We define the random variables 
\begin{align}
\nonumber Z^{x}&:= \sqrt{\frac{2}{\gamma}}\left(\mathcal{Z}^{(1)}\left(h/2,\xi^{(1)}\right) - \mathcal{Z}^{(2)}\left(h/2,\xi^{(1)},\xi^{(2)}\right)\right),\\ Z^{v}&:=\sqrt{2\gamma}\mathcal{Z}^{(2)}\left(h/2,\xi^{(1)},\xi^{(2)}\right),\label{def:ZxZv}
\end{align}
based on \eqref{eq:Z1def} and \eqref{eq:Z2def}. In Algorithm \ref{alg:SG-UBU} we formulate a stochastic gradient implementation of the UBU scheme.

\begin{algorithm}[t]
    \footnotesize
    \begin{algorithmic}
		\item Initialize $\left(x_{0},v_{0}\right) \in \mathbb{R}^{2d}$, stepsize $h > 0$, \textcolor{black}{sample size $K>0$} and friction parameter $\gamma > 0$.
		\For {$k = 1,2,...,K$}
		\begin{itemize}
                \item[] Sample $Z^{x}_{k},Z^{v}_{k},\Tilde{Z}^{x}_{k},\Tilde{Z}^{v}_{k}$ according to \eqref{def:ZxZv}
			\item[(U)]
                     $(x,v) \to (x_{k-1} + \frac{1-\eta^{1/2}}{\gamma}v_{k-1}+Z^{x}_k, \eta^{1/2} v_{k-1}+Z^{v}_k)$
   			\item[] 
                    Sample $\omega_{k} \sim \rho$
                   \item[(B)] $v\to v-h  \mathcal{G}(x,\omega_{k})$
			\item[(U)] 
                     $(x_k,v_k) \to (x + \frac{1-\eta^{1/2}}{\gamma}v+\tilde{Z}^{x}_k, \eta^{1/2} v+\tilde{Z}^{v}_k)$
                  \end{itemize}   
            \EndFor
            \item Output: Samples $(x_{k})^{K}_{k=0}$.

    \end{algorithmic}
	\caption{Stochastic Gradient UBU (SG-UBU)}
	\label{alg:SG-UBU}
\end{algorithm}

A limitation of using stochastic gradients is that it can introduce additional bias.  In particular, using the standard stochastic gradient approximation (sampling the minibatches i.i.d. at each iteration) reduces the strong order of a numerical integrator for kinetic Langevin dynamics to  $O(h^{1/2})$. SG-UBU for example will be order $1/2$, a dramatic reduction in accuracy compared to its full gradient counterpart, which has strong order two (see \cite{BUBthesis}, \cite{sanz2021wasserstein}). In terms of the weak order and bias in empirical averages stochastic gradients reduce the order of accuracy in the stepsize from two to one (\cite{Vollmer2016}, \cite{Sekkat2023}).  

A consequence of the reduced accuracy with respect to stepsize is substantially worse non-asymptotic guarantees and scaling in terms of dimension (\cite{chatterji2018theory}, \cite{DalalyanSG}, \cite{gouraud2022hmc}). \textcolor{black}{We note that some practical implementations in the literature of stochastic gradient based sampling schemes also consider sampling without replacement, as discussed in \cite{fortuna}. We do not prove theoretical results for such a version of SG-UBU, but perform some numerical experiments in Sections \ref{sec:numevalbias} and \ref{sec:num}.}
\subsection{Symmetric Minibatch Splitting UBU (SMS-UBU)}
\label{sec:sms-ubu}
We now introduce an alternative stochastic gradient approximation method that relies on a random partition 
$\omega_{1},....,\omega_{N_{m}}$ of $[N_D]=\{1,\ldots, N_D\}$, each of size $N_b$ (i.e. they are sampled uniformly without replacement from $[N_D]$, and $\omega_{1}\cup\ldots \cup \omega_{N_m}=[N_D]$). 
We propose to take UBU steps with gradient approximations using the index-set $\omega_{1}$ first, then $\omega_{2}$, etc., all the way up to $\omega_{N_{m}}$.  We follow this with steps using the same sequence of minibatches taken in the reverse order.   We illustrate this methodology for the UBU integrator below:
\begin{align*}
\underbrace{(\U\B_{\omega_{1}}\U)}_{\textnormal{minibatch 1}}\, &... \underbrace{(\U\B_{\omega_{N_{m}}}\U)}_{\textnormal{minibatch }N_{m}}\,\underbrace{(\U\B_{\omega_{N_{m}}}\U)}_{\textnormal{minibatch }N_{m}}\,
...\underbrace{(\U\B_{\omega_{1}}\U)}_{\textnormal{minibatch 1}},
\end{align*}
where 
\[
\mathcal{B}_{\omega_{l}}(x,v,h) = \left(x,v-hN_{m}\sum_{i \in \omega_{l}}\nabla \uu_{i}(x)\right),
\]
and $N_{m} := N_{D}/N_{b}$ is the number of minibatches. This framework is not specific to the UBU integrator and can be applied to any kinetic Langevin dynamics integrator. However, we detail precisely the SMS stochastic gradient algorithm for the UBU integrator in Algorithm \ref{alg:SMS-UBU}.

A motivation for such a splitting is that it ensures that the integrator is symmetric as an integrator on a longer time horizon $T = 2hN_{m}$ and therefore has improved weak order properties \cite{lemast2016}. It also turns out that it gains improved strong order properties, which we make rigorous in the following sections.

\begin{remark}
    A similar minibatch selection procedure was used within the context of HMC to integrate Hamiltonian dynamics based on symmetric splitting with stochastic gradient approximations for each leapfrog step (\cite{HMCBNN}). Their motivation was different than ours: as they were working in the HMC setting, they required an integrator for Hamiltonian dynamics that offered high Metropolis-Hastings acceptance rate.  They did not study the use of this type of subsampling for unadjusted Langevin. Their approach can only take samples after a complete forward/backward sweep (going through the entire dataset) when the accept/reject step is applied.  This requires going through the entire dataset between consecutive samples, which is not needed for unadjusted Langevin. We numerically evaluate such an HMC-based approach in Section \ref{sec:comparsionwithSMSHMC} of the Supplementary material.
\end{remark}

\begin{remark}
\textcolor{black}{A similar procedure is proposed in \cite{Franzese}, also considering back and forth operators.
That paper studies a variety of numerical methods, including OABAO, a leapfrog scheme, and a higher order scheme, but as pointed out in their paper the order of accuracy is limited by the error due to gradient noise. Although they demonstrate an improved accuracy with the higher order scheme it requires multiple gradient evaluations per sample, which is considerably less efficient than our approach.}
\end{remark}

\begin{algorithm}[t]
    \footnotesize
    \begin{algorithmic}
		\State Initialize $\left(x_{0},v_{0}\right) \in \mathbb{R}^{2d}$, stepsize $h > 0$, \textcolor{black}{sample size $K>0$}, friction parameter $\gamma > 0$ and number of minibatches $N_{m}$.
		\For {$i = 1,2,...,\lceil K/2N_{m}\rceil$}\\
            Sample $\omega_{1},...,\omega_{N_{m}} \in [N_{D}]^{N_{b}}$ uniformly without replacement. \\
            \State \textbf{\underline{Forward Sweep}}\\
            \For {$k = 1,2,...,\min\{N_{m},K-(2i-2)N_{m}\}$} 
                
                \begin{itemize}
                    \item[] Sample $Z^{x}_{k},Z^{v}_{k},\Tilde{Z}^{x}_{ k},\Tilde{Z}^{v}_{k}$ according to \eqref{def:ZxZv}
                     \item[(U)] $(x,v) \to (x_{(2i-2)N_{m} + k-1} + \frac{1-\eta^{1/2}}{\gamma}v_{(2i-2)N_{m} + k-1}+Z^{x}_{k},\eta^{1/2} v_{(2i-2)N_{m} + k-1}+Z^{v}_{k})$ 
                \end{itemize}
                \begin{itemize}
                \item[(B)] $v \to v-h \mathcal{G}(x,\omega_{k})$
                \end{itemize}
			
                \begin{itemize}
                     \item[(U)] $(x_{2N_{m}i + k},v_{2N_{m}i + k}) \to (x + \frac{1-\eta^{1/2}}{\gamma}v+\tilde{Z}^{x}_{k}, \eta^{1/2} v+\tilde{Z}^{v}_{k})$
                \end{itemize}
            \EndFor\\
            \State \textbf{\underline{Backward Sweep}}\\
            \For {$k = 1,2,...,\min\{N_{m},K-(2i-1)N_{m}\}$}
			
                \begin{itemize}
                     \item[] Sample $Z^{x}_{k},Z^{v}_{k},\Tilde{Z}^{x}_{k},\Tilde{Z}^{v}_{k}$ according to \eqref{def:ZxZv}
                     \item[(U)] $(x,v) \to (x_{(2i-1)N_{m} + k-1} + \frac{1-\eta^{1/2}}{\gamma}v_{(2i-1)N_{m} + k-1}+Z^{x}_{k}, \eta^{1/2} v_{(2i-1)N_{m} + k-1}+Z^{v}_{k})$
                \end{itemize}
                \begin{itemize}
                    \item[(B)] $v \to v-h \mathcal{G}(x,\omega_{N_{m} +1 - k})$
                \end{itemize}
                \begin{itemize}
                   \item[(U)]
                    $(x_{(2i-1)N_{m} + k},v_{(2i-1)N_{m} + k}) \to (x + \frac{1-\eta^{1/2}}{\gamma}v+\tilde{Z}^{x}_{k}, \eta^{1/2} v+\tilde{Z}^{v}_{k})$
                \end{itemize}
            \EndFor
        \EndFor
        \State Output: Samples $(x_{k})^{K}_{k=0}$.
    \end{algorithmic}
	\caption{Symmetric Minibatch Splitting UBU (SMS-UBU)}
	\label{alg:SMS-UBU}
\end{algorithm}

\section{Theory}
\label{sec:main}
In this section we present theoretical results related to SMS-UBU. We provide the key assumptions to establish a  Wasserstein contraction result related to the convergence of our numerical scheme, and we give a global error estimate which establishes the $\mathcal{O}(h^2)$ strong accuracy bound, where $h$ is the stepsize used. Proofs of these results are deferred to the appendix.
\subsection{Assumptions}
\label{sec:ass}
We first state the assumptions used in our proofs.

\begin{assumption}\label{assum:convex}
$f$ is $m$-strongly convex if there exists a $m > 0$ such that for all $x,y \in \mathbb{R}^{d}$
\[
\left\langle \nabla f(x) - \nabla f(y),x-y \right\rangle \geq m \left|x-y\right|^{2}.
\]
\end{assumption}

\begin{assumption}
\label{assum:Lip}
$\uu:\R^{d} \to \R$ is $C^2$, and there exists $M>0$ such that for all $x,y \in \R^d$
$$
\| \nabla \uu(x) - \nabla \uu(y)\| \leq M\|x-y\|.
$$
\end{assumption}

\begin{assumption}\label{ass:hessian_lip}
The potential $\uu:\mathbb{R}^{d} \to \mathbb{R}$ is $C^3$ and there exists $M_{1} > 0$ such that for all $x,y \in \mathbb{R}^{d}$,
\[
\|\nabla^{2}\uu(x) - \nabla^{2}\uu(y)\| \leq M_{1}\|x-y\|,
\]
this implies that
\[
\|\nabla^{3}\uu(x)[v,v']\| \leq M_{1}\|v\|\|v'\|,
\]
as used in \cite{sanz2021wasserstein}.
\end{assumption}

The strongly Hessian Lipschitz property relies on a specific tensor norm from \cite{chen2023does}, which we use to establish improved dimension dependence.  A small strongly Hessian Lipschitz constant for $\nabla^{3}\uu$ is shown to hold for some applications of practical interest (see \cite{chen2023does}, \cite{ububu} and Lemma 7 of \cite{paulin2024}).
\begin{definition}\label{def:strongHessianLipschitznorm}
    For $A \in \mathbb{R}^{d \times d \times d}$, let
\begin{equation}\label{eq:strongHessianLipschitz:alternative:def}   
\|A\|_{\{12\}\{3\}}=\left\|\sum_{i_1} A_{i_1,\cdot,\cdot}^T\cdot A_{i_1,\cdot,\cdot}\right\|^{1/2},
\end{equation}
where $\|\cdot\|$ refers to the $L^2$ matrix norm, and $A_{i_1,\cdot,\cdot}=\left(A_{i_1,i_2,i_3}\right)_{1\le i_2\le d, 1\le i_3\le d}$ is a $d\times d$ matrix.
\end{definition}

\begin{assumption}[$M^{s}_{1}$-strongly Hessian Lipschitz]
\label{ass:strong_hessian_lip}
$\uu:\mathbb{R}^{d} \to \mathbb{R}$ is three times continuously differentiable and $M^{s}_{1}$-strongly Hessian Lipschitz if there exists $M^{s}_{1} > 0$ such that
    \[
    \|\nabla^{3}\uu(x)\|_{\{1,2\}\{3\}} \leq M^{s}_{1},
    \]
    for all $x \in \mathbb{R}^{d}$.
\end{assumption}

\subsection{Convergence of the numerical method}
In this subsection, we summarize Wasserstein convergence results from \cite{ububu} which used the technique developed in \cite{LePaWh24}. There have been many works on couplings of kinetic Langevin dynamics and its discretization including \cite{cheng2018underdamped}, \cite{dalalyan2020sampling}, \cite{PM21}, \cite{sanz2021wasserstein}, \cite{chakreflection} and \cite{schuh2024convergence} in the discretized setting and \cite{eberle2019couplings} and  \cite{schuh2022global} in the continuous setting. We remark that Wasserstein convergence for the UBU discretization 
was first studied in \cite{sanz2021wasserstein}.
\begin{definition}[Weighted Euclidean norm]
\label{def:weightednorm}
For $z = (x,v) \in \R^{2d}$ the weighted Euclidean norm of $z$ is defined by
\[
\left|\left| z \right|\right|^{2}_{a,b} = \left|\left| x \right|\right|^{2} + 2b \left\langle x,v \right\rangle + a \left|\left| v \right|\right|^{2},
\]
for $a,b > 0$ with $b^{2}<a$. 
\end{definition}

\begin{remark}
Using the assumption $b^2<a$, we can show that this is equivalent to the Euclidean norm on $\R^{2d}$. Under the condition $b^2\le a/4$, we have
\begin{align} \nonumber
\frac{1}{2}\min(a,1) \|z\|^2&\leq \frac{1}{2}||z||^{2}_{a,0} \leq ||z||^{2}_{a,b} \leq \frac{3}{2}||z||^{2}_{a,0}\\&\leq \frac{3}{2}\max(a,1) \|z\|^2.\label{eq:normequiv}\end{align}
\end{remark}

\begin{definition}[$p$-Wasserstein distance]
\label{def:wass}
Let us define $\mathcal{P}_p(\R^{2d})$
to be the set of probability measures which have $p$-th moment for $p\in[1,\infty)$ (i.e. $\E (\|Z\|^p)<\infty$). Then the $p$-Wasserstein distance in norm $\|\cdot \|_{a,b}$ between two measures $\mu,\nu\in \mathcal{P}_p(\R^{2d})$ is defined as
\begin{equation}
\label{eq:wass_dist}
\mathcal{W}_{p,a,b}(\nu,\mu) =\Big(\inf_{\xi \in \Gamma(\nu,\mu)} \int_{\R^{2d}} \|z_1 - z_2\|^p_{a,b}d\xi(z_1,z_2)\Big)^{1/p},
\end{equation}
where $\| \cdot \|_{a,b}$ is the norm introduced before and that $\Gamma(\nu,\mu)$  is the set of measures with respective marginals of $\nu$ and $\mu$.
\end{definition}

\begin{remark}
    We use $\mathcal{W}_{2}$ to mean the standard Wasserstein-2 distance (equivalent to $\mathcal{W}_{2,1,0}$).
\end{remark}

\subsection{Error estimates}
\begin{theorem}
\label{maintheorem}
Consider the SMS-UBU scheme with friction parameter $\gamma >0$, stepsize $h>0$ and initial measure $\overline{\pi}_{0}$ and assume that $h<\frac{1}{2\gamma}$ and  $\gamma \geq \sqrt{8M}$. Let the potential $\uu$ be $M$-$\nabla$Lipschitz and                          of the form $\uu = \sum^{N_{m}}_{i=1}\uu_{i}$, where each $\uu_{i}$ is $m_{i}$-strongly convex for $i = 1,...,N_{m}$ and $M/N_{m}$-$\nabla$Lipschitz with minimizer $x^{*}\in \mathbb{R}^{d}$. Consider the measure of the position $k$-th SMS-UBU, denoted by $ \Tilde{\pi}_{k}$, which approximates the target measure $\pi$ for a potential $\uu$ that is $M_1$-Hessian Lipschitz. We have that
    \begin{align*}
    \mathcal{W}_{2}(\Tilde{\pi}_{k},\pi) &\leq  \sqrt{2}\exp{\left(-\frac{mh}{8\gamma}\left\lfloor \frac{k}{N_{m}}\right\rfloor\right)}\mathcal{W}_{2,a,b}(\overline{\pi}_{0},\overline{\pi}) \\&+ C(\gamma,m,M,M_{1},N_{m})h^{2}d,
    \end{align*}
    and if $\uu$ is $M^{s}_{1}$-strongly Hessian Lipschitz we have that
    \begin{align*}
    \mathcal{W}_{2}(\Tilde{\pi}_{k},\pi) &\leq  \sqrt{2}\exp{\left(-\frac{mh}{8\gamma}\left\lfloor \frac{k}{N_{m}}\right\rfloor\right)}\mathcal{W}_{2,a,b}(\overline{\pi}_{0},\overline{\pi}) \\&+ C(\gamma,m,M,M^{s}_{1},N_{m})h^{2}\sqrt{d}.
    \end{align*}
    If we impose the stronger stepsize restriction $h<1/(12\gamma N_{m})$, then our bounds simplify to the form
    \begin{align*}
        \mathcal{W}_{2}(\Tilde{\pi}_{k},\pi) &\leq  \sqrt{2}\exp{\left(-\frac{mh}{8\gamma}\left\lfloor \frac{k}{N_{m}}\right\rfloor\right)}\mathcal{W}_{2,a,b}(\overline{\pi}_{0},\overline{\pi}) \\&+ C(\Tilde{\gamma},\Tilde{m},\Tilde{M},\Tilde{M}^{s}_{1})h^{2}N^{5/2}_{m}\sqrt{d},
    \end{align*}
    where $\Tilde{\gamma} = \gamma/\sqrt{N_{m}}, \Tilde{m} = m/N_{m}, \Tilde{M} = M/N_{m}$ and $\Tilde{M}^{s}_{1} = M^{s}_{1}/N_{m}$, and similarly when we don't assume the potential is strongly Hessian Lipschitz.
\end{theorem}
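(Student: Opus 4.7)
The plan is to split the error via the triangle inequality into a Wasserstein contraction term (decay of the initial discrepancy to the equilibrium) and a bias term (local discretisation-plus-minibatch error accumulated over the run). The bias is analysed on full forward+backward sweeps rather than individual steps, so as to exploit the palindromic ordering of the minibatches.

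\textbf{Contraction.} I first upgrade Proposition~\ref{prop:Wasserstein} to SMS-UBU using a synchronous coupling of Brownian increments and of the random minibatch partition for two copies of the chain started from different initial conditions. Over a single $N_m$-step forward half-sweep each minibatch is visited exactly once, so the aggregated gradient contributions equal the full gradient up to martingale terms controlled by Assumption~\ref{assum:stochastic_gradient}. Comparing with a reference UBU chain driven by the full gradient, I transfer one contraction factor $(1-mh/(8\gamma))$ per $N_m$-step half-sweep, giving the $\exp(-\frac{mh}{8\gamma}\lfloor k/N_m\rfloor)$ factor in the bound; the $\sqrt{2}$ prefactor comes from the norm equivalence~\eqref{eq:normequiv}.

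\textbf{Local bias and dimension dependence.} Let $\Psi_h$ denote the Markov kernel corresponding to a full forward+backward sweep (time horizon $T = 2N_m h$). I compare $\Psi_h$ to the exact kinetic Langevin semigroup over time $T$ by Taylor expanding the composition of the $2N_m$ stochastic UBU steps. Two layers of palindromic symmetry are decisive: (i) each UBU step is itself palindromic ($\mathcal{U}\mathcal{B}\mathcal{U}$), so the deterministic part of the local error is already $O(h^3)$ in the strong sense as in \cite{sanz2021wasserstein}; and (ii) the ordering $\omega_1\cdots\omega_{N_m}\omega_{N_m}\cdots\omega_1$ of minibatches makes the stochastic residuals $\mathcal{D}_k(x) := \mathcal{G}(x,\omega_k) - \nabla\uu(x)/N_m$ enter in conjugate pairs, so that their first-order-in-$h$ contribution to the flow cancels, leaving $O(h^3)$ polynomial-in-$N_m$ local bias per sweep. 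The dimension exponent enters through the noise variance: under plain Hessian Lipschitz we pick up $\E\|\xi\|^2 = d$, whereas the strongly Hessian Lipschitz tensor norm of Definition~\ref{def:strongHessianLipschitznorm} controls $\E\|\nabla^3\uu(x)[\xi,\xi]\|^2 \lesssim (M_1^s)^2 d$ via the argument of \cite[Lemma 7]{paulin2024}, replacing $d$ by $\sqrt{d}$ in the final bound.

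\textbf{Telescoping, the small-$h$ regime, and main obstacle.} Summing the per-sweep local errors against the sweep-wise contraction collapses the series geometrically into (local error)$/(c(h)N_m)$, yielding the first two inequalities of the theorem. For the cleaner bound under $h<1/(12\gamma N_m)$, I view a full sweep of SMS-UBU with step $h$ as one effective step with rescaled parameters $(\Tilde\gamma,\Tilde m,\Tilde M,\Tilde M_1^s)=(\gamma/\sqrt{N_m},m/N_m,M/N_m,M_1^s/N_m)$; substituting these into the local-bias formula produces the $N_m^{5/2}\sqrt{d}$ factor after simplification. The hard part will be the cancellation step: one must verify, term by term in a Taylor expansion of the composition of $2N_m$ noisy maps, that the palindromic minibatch ordering annihilates \emph{all} first-order stochastic contributions, so that only their squared second moments survive and contribute $O(h^4)$ in mean square (equivalently $O(h^2)$ in $\mathcal{W}_2$). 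Ensuring the constants remain $N_m$-uniform away from the small-step regime is particularly delicate and relies on the strong convexity of each $\uu_i$ together with Assumption~\ref{assum:stochastic_gradient} to bound $\E\|\mathcal{D}_k\|^2$.
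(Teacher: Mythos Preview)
Your high-level decomposition (contraction term plus accumulated bias, analysed at the level of full sweeps) matches the paper, but two of the key steps are described in a way that would not actually deliver the result.

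\textbf{Contraction.} You route the argument through a reference full-gradient UBU chain and invoke martingale control via Assumption~\ref{assum:stochastic_gradient}. This is unnecessary and does not fit the SMS setting (the minibatches in a sweep are \emph{not} i.i.d.). The paper's argument is much simpler: synchronously couple two SMS-UBU chains using the \emph{same} Brownian increments \emph{and the same minibatch partition}. Conditional on this shared randomness, each step is a deterministic UBU map with potential $N_m f_{\omega_k}$, which is $N_m m_{\omega_k}$-strongly convex and $M$-gradient Lipschitz, so Proposition~\ref{prop:Wasserstein} applies step by step; multiplying over a half-sweep gives the $\exp(-\tfrac{mh}{8\gamma}\lfloor k/N_m\rfloor)$ rate directly.

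\textbf{The cancellation mechanism.} Your claim that the residuals $\mathcal{D}_k$ ``enter in conjugate pairs, so that their first-order-in-$h$ contribution to the flow cancels'' is where the argument breaks. Unrolling the velocity recursion gives weights $\mathcal{E}(h)^{2N_m-1-k}$ in front of each gradient call, and these weights are \emph{not} symmetric under the palindromic reflection, so paired residuals do not cancel at leading order. Moreover the positions $\tilde{y}$ at the forward and backward visits to the same batch differ by $O(hN_m)$, so even with equal weights you would only get an $O(h)$ residual per pair, which after summing and dividing by the contraction rate would yield an $O(h)$ bias, not $O(h^2)$. The paper's actual device (Proposition~\ref{prop:SMS-UBU}) is: (i) compare SMS-UBU to \emph{full-gradient} UBU, not directly to the diffusion; (ii) separate off the asymmetric part of the weights as a term $(\dag)$ carrying the extra factor $1-\mathcal{E}(h)^{N_m}=O(h\gamma N_m)$; (iii) on the symmetrised remainder, add and subtract $2\nabla f_i(X_{N_mh})$ to form the second differences
\[
C_i^j=\nabla f_i(X_{(N_m-j-1/2)h})-2\nabla f_i(X_{N_mh})+\nabla f_i(X_{(N_m+j+1/2)h}),
\]
which are $O((jh)^2)$ by It\^o--Taylor expansion around the midpoint. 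This is where the second order comes from; it is not a cancellation of stochastic-gradient noise but a discrete-second-derivative estimate on each $\nabla f_i$ along the true flow. The telescoping/interpolation step (your ``summing per-sweep local errors against contraction'') is then carried out as in Lemma~\ref{lem:SMSUBUsweepbounds}, using chains that follow the diffusion for the first few sweeps and SMS-UBU thereafter, so that the local bound of Proposition~\ref{prop:SMS-UBU} can be applied starting from stationarity each time.
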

Our next result focuses on the large stepsize regime, which has also recently been considered in \cite{shaw2025random}.
\begin{theorem}\label{theorem:SMS-UBU-large-step}
  Considering the SMS-UBU scheme with friction parameter $\gamma >0$, stepsize $h>0$, initial measure $\overline{\pi}_{0}$ and assuming that $h<\frac{1}{2\gamma}$, $\gamma \geq \sqrt{8M}$ and the stochastic gradient satisfies Assumptions \ref{assum:stochastic_gradient} with constants $C_{SG}$. Let the potential $\uu$ be $M$-$\nabla$Lipschitz, $m$-strongly convex and of the form $f = \sum^{N_{m}}_{i=1}f_{i}$, where each $f_{i}$ is $m_{i}$-strongly convex, where $m = \sum^{N_{m}}_{i=1}m_{i}$. Then at the $k$-th iteration for its measure $\pi^{k}$ we have the non-asymptotic bound 
    \begin{align*}
    &\mathcal{W}_{2,a,b}(\pi^{k},\overline{\pi}) \leq  \exp{\left(-\frac{mh}{8\gamma}\left\lfloor \frac{k}{N_{m}}\right\rfloor\right)}\mathcal{W}_{2,a,b}(\overline{\pi}_{0},\overline{\pi}) \\
    &+ C(\gamma/\sqrt{N_{m}},m/N_{m},M/N_{m})\left[\frac{C_{SG}\sqrt{h}}{N^{1/4}_{m}} + h\right]\sqrt{d}.
    \end{align*}
\end{theorem}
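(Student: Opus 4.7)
The plan is to reduce Theorem \ref{theorem:SMS-UBU-large-step} to Theorem \ref{maintheorem} by adding a controllable stochastic-gradient error term. I would proceed through a synchronous coupling combined with a triangle-inequality decomposition. Concretely, let $\pi^k$ denote the law of the $k$-th iterate $(x_k,v_k)$ of SMS-UBU started from $\pi_0$, and introduce two auxiliary chains: (a) $(\hat x_k,\hat v_k)$, SMS-UBU started from $\bar\pi$ and coupled to $(x_k,v_k)$ via identical Gaussian noise and identical minibatch partitions at every sweep; and (b) $(\tilde x_k,\tilde v_k)$, the full-gradient UBU chain (no stochastic gradient) started from $\bar\pi$ and coupled to $(\hat x_k,\hat v_k)$ via the same Gaussian noise. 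Then
\[
\mathcal{W}_{2,a,b}(\pi^k,\bar\pi)\le \|(x_k-\hat x_k,v_k-\hat v_k)\|_{L^2,a,b}+\|(\hat x_k-\tilde x_k,\hat v_k-\tilde v_k)\|_{L^2,a,b}+\mathcal{W}_{2,a,b}(\mathrm{Law}(\tilde x_k,\tilde v_k),\bar\pi).
\]

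The first term is the initial-condition contraction. Conditioning on the sequence of minibatches, each $\U\B_{\omega}\U$ step is a genuine UBU step for the potential $N_m\sum_{i\in\omega}\uu_i$. Because the strong convexity constants $m_i$ sum to $m$ over a full partition, the averaged per-sweep contraction matches Proposition \ref{prop:Wasserstein} with the original constants, giving the factor $\exp\bigl(-\frac{mh}{8\gamma}\lfloor k/N_m\rfloor\bigr)$ on $\mathcal{W}_{2,a,b}(\pi_0,\bar\pi)$. Equivalently, one can view a full forward/backward sweep as a single macro-step of a rescaled kinetic Langevin system with constants $\tilde\gamma=\gamma/\sqrt{N_m}$, $\tilde m=m/N_m$, $\tilde M=M/N_m$, which is the viewpoint used to derive the $C(\tilde\gamma,\tilde m,\tilde M)$ prefactor in the final error term. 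The third term, comparing full-gradient UBU from $\bar\pi$ to $\bar\pi$, is the deterministic bias already bounded in Theorem \ref{maintheorem}; it contributes the $C(\tilde\gamma,\tilde m,\tilde M)\,h\sqrt d$ piece.

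The genuinely new object is the middle term, which measures the stochastic-gradient perturbation. Here each $\B$-step injects a velocity error $-h\,\mathcal{D}(x,\omega)$ with $\mathcal{D}$ unbiased and satisfying $\E\|\mathcal{D}\|^2\le C_{SG}^2M^2\|x-x^*\|^2$ by Assumption \ref{assum:stochastic_gradient}. I would write a discrete variation-of-constants formula for the difference $(\hat x_k-\tilde x_k,\hat v_k-\tilde v_k)$, whose $k$-th value is the sum over past steps $j<k$ of the propagated noise kernel acting on $-h\,\mathcal{D}(\hat x_j,\omega_j)$. Because the Jacobian of a UBU step is a contraction (in $\|\cdot\|_{a,b}$) with rate $1-c(h)$ up to an $M$-Lipschitz remainder controlled by Assumption \ref{assum:Lip}, I can bound
\[
\E\|(\hat x_k-\tilde x_k,\hat v_k-\tilde v_k)\|_{a,b}^2 \;\lesssim\; \sum_{j<k}(1-c(h))^{k-j}\,h^2\,\E\|\mathcal{D}(\hat x_j,\omega_j)\|^2 + (\text{cross terms}),
\]
and bound $\E\|\hat x_j-x^*\|^2$ uniformly using the stationarity of $\bar\pi$ together with the Wasserstein bound on $\mathcal{W}_{2,a,b}(\pi_h,\bar\pi)$ from Proposition \ref{prop:Wasserstein}. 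Summing the geometric series contributes the factor $1/c(h)$, yielding the baseline $C_{SG}\sqrt h$ piece times the dimensional factor $\sqrt d$ (from $\E\|x-x^*\|^2\lesssim d$ under stationarity).

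The hard part, and the main obstacle, is squeezing out the extra $N_m^{-1/4}$ improvement over the vanilla SG-UBU bound. This is where the symmetry of SMS-UBU enters essentially: because minibatch $\omega_l$ is used once in the forward pass and once in the reverse pass at positions separated by an $O(N_m h)$-time trajectory, the two $\mathcal{D}$-perturbations on the same batch are correlated, and their contribution to $(\hat x_k-\tilde x_k,\hat v_k-\tilde v_k)$ nearly cancels up to an $O(hN_m\cdot M)$ Taylor remainder coming from the position drift. Making this precise requires pairing the cross terms in the expansion above by minibatch index, applying Assumption \ref{ass:hessian_lip} to the differences of gradients at the two visits, and then bounding the residual using Proposition \ref{prop:Wasserstein} to control the intervening trajectory length. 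The net effect is that the variance of the per-sweep accumulated gradient error is reduced by a factor $\sqrt{N_m}$ relative to the independent-batch case, and the root mean square by $N_m^{1/4}$; combining this with the $\sqrt{h}$ accumulation then gives the stated $C_{SG}\sqrt h/N_m^{1/4}$ factor and completes the estimate.
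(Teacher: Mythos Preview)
Your overall architecture (synchronous coupling, triangle-inequality decomposition into a contraction term, a stochastic-gradient perturbation term, and a full-gradient UBU bias term) matches the paper's proof. The gap is in the final paragraph, where you misidentify the mechanism behind the $N_m^{-1/4}$ factor and propose an argument that cannot work under the stated hypotheses.

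The $N_m^{-1/4}$ is \emph{not} an improvement earned by the forward/backward symmetry of SMS; it is purely a parametrization artifact of writing the prefactor as $C(\gamma/\sqrt{N_m},m/N_m,M/N_m)$. Indeed, the paper's Theorem~\ref{theorem:SG-UBU} for \emph{vanilla} i.i.d.\ stochastic-gradient UBU carries exactly the same $C_{SG}\sqrt{h}/N_m^{1/4}$ term in the same rescaled-constant form. If you trace the proof constants (e.g.\ $\gamma M^{3/4}/m^{3/2}$) and substitute $\tilde\gamma=\gamma/\sqrt{N_m}$, $\tilde m=m/N_m$, $\tilde M=M/N_m$, the factor $N_m^{-1/4}$ falls out algebraically. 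So there is nothing to ``squeeze out'' beyond the baseline variation-of-constants bound you already sketched.

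What is actually new for SMS-UBU here is a \emph{loss} of structure, not a gain: within a single sweep the minibatches are drawn \emph{without replacement}, so the per-step gradient noises $\mathcal{D}(\hat x_j,\omega_j)$ are correlated and you cannot simply sum variances. The paper handles this with Lemma~\ref{lem:SwithoutR}, a decoupling inequality showing that the squared norm of the accumulated noise over a sweep is bounded by $\tfrac{7}{2}$ times the sum of the individual squared norms; independence is then only used \emph{between} blocks of size $2N_m$. After that, the proof proceeds exactly as for SG-UBU: a discrete Gr\"onwall/interpolation argument with blocks of size $\lfloor 1/(4h\sqrt{M})\rfloor$, moment bounds $\|y_i-x^*\|_{L^2}\lesssim\sqrt{d/m}$, and contraction of the UBU step with the per-batch convexity constants $m_i$.

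Your proposed route via pairing the two visits to each batch and Taylor-expanding their difference invokes Assumption~\ref{ass:hessian_lip}, which Theorem~\ref{theorem:SMS-UBU-large-step} does not assume, so that argument is not available. More importantly, even if it were, the cancellation you describe would be aimed at the second-order $\mathcal{O}(h^2)$ bias result (that is the content of Theorem~\ref{maintheorem} and Proposition~\ref{prop:SMS-UBU}), not at the $\sqrt{h}$ stochastic-gradient term treated here.
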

\begin{remark}
    When using random permutations we have an unbiased estimator of the force at each step and we can achieve similar bounds as \cite{gouraud2022hmc} and \cite{DalalyanSG} which allows us to improve upon our bounds in the large $N_{m}$, and large stepsize setting. This is the result of Theorem \ref{theorem:SMS-UBU-large-step}. We can demonstrate a corresponding result in the simpler setting with vanilla stochastic gradients for UBU (see Theorem \ref{theorem:SG-UBU}).
\end{remark}


\begin{remark}
\label{rem:nonconvex}
    These results can be extended to the non-convex setting using the results of \cite{schuh2022global} or \cite{schuh2024convergence} under appropriate assumptions (convexity outside a ball), using contraction results of the continuous or discrete dynamics. The dimension and stepsize dependence would be as in the convex case, with additional dependence on the radius of the ball of non-convexity.
\end{remark}

\subsection{Numerical evaluation of bias of integrators}
\label{sec:numevalbias}
To illustrate the $O(h^2)$ bias for SMS-UBU, we are going to first consider a simple one dimensional Gaussian target example of the form $U(x)=U_1(x)+U_2(x)$, where $U_i(x)=\frac{(x-x_i)^2}{\sigma_i^2}$ are Gaussian potentials. We have chosen the numerical values $x_1=-1$, $x_2=1$, $\sigma_1=0.5$, $\sigma_2=2$. In this example, there are only two batches when using batch size 1. Since the target is itself a Gaussian, by taking samples from the Markov chain, and the Gaussian target, we can evaluate the Wasserstein distances between them to good accuracy (it is well known that the Wasserstein distance of two one dimensional empirical distributions with the same $N$ samples equals $\frac{1}{N}\sum_{i=1}^N |x_{(i)}-y_{(i)}|$). Using step sizes $h=2^{-k}$ for $k=2,3,4,5$, and taking a large number of samples ($N=10^7\cdot 2^k$), we evaluated the Wasserstein bias between the Gaussian target and the stationary distribution of 9 samplers. SMS-UBU, SG-UBU, and SG-UBU with batches sampled without replacement, and the same 3 variants of BAOAB (\cite{leimkuhler2023contractionb}) and EM (SG-EM is called SG-HMC in the literature). The results are shown on Figure \ref{figure:wassbias} (the errors in our bias estimates seem to be negligibly small as the results did not change noticeably when increasing the number of samples $N$).
\begin{figure}[h!]
\centering
\includegraphics[width=0.48\linewidth]{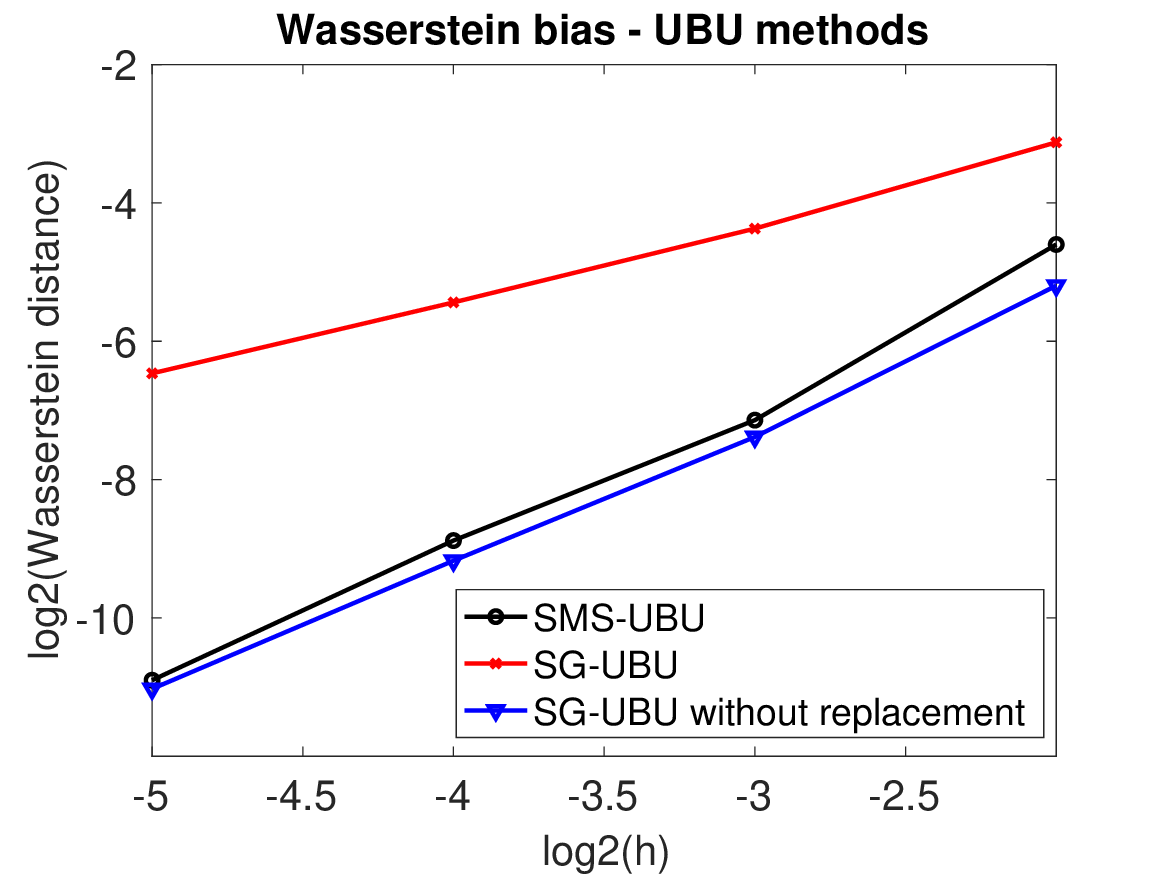}
\includegraphics[width=0.48\linewidth]{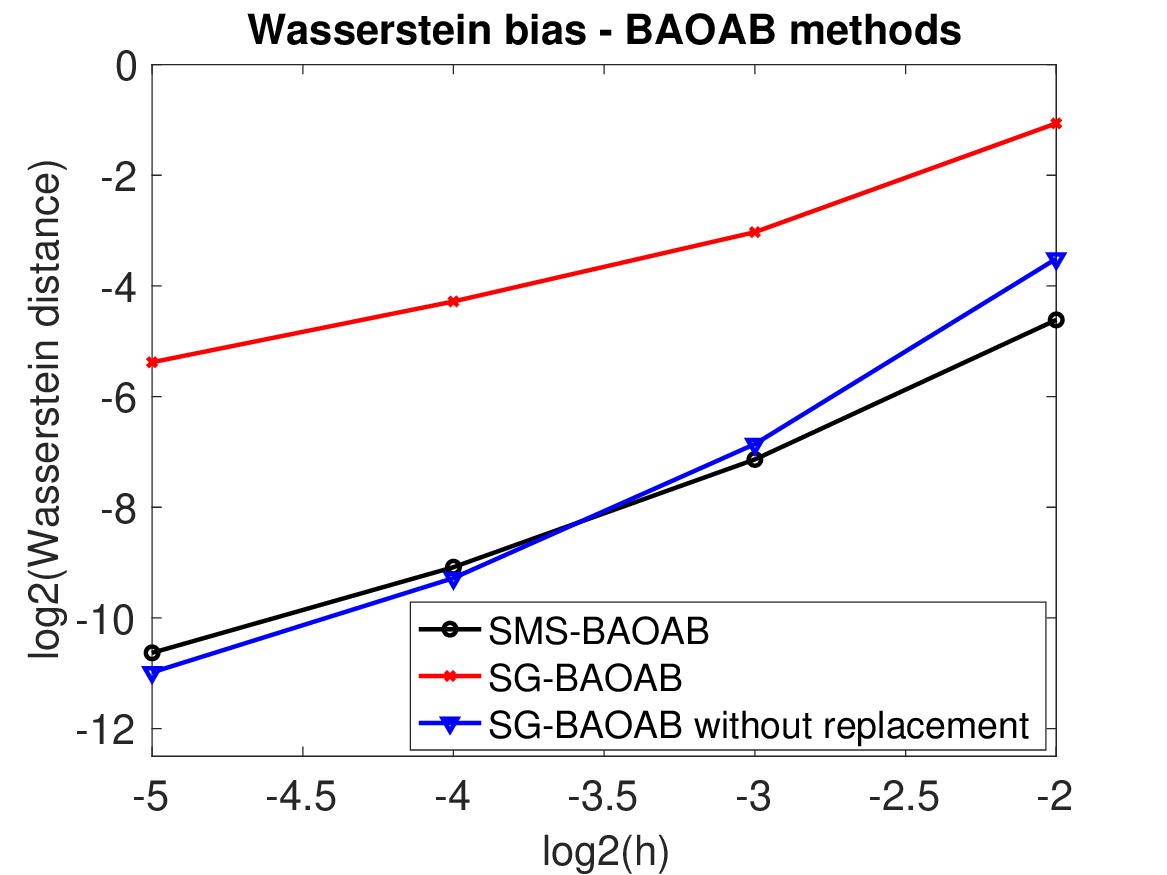}\\
\includegraphics[width=0.48\linewidth]{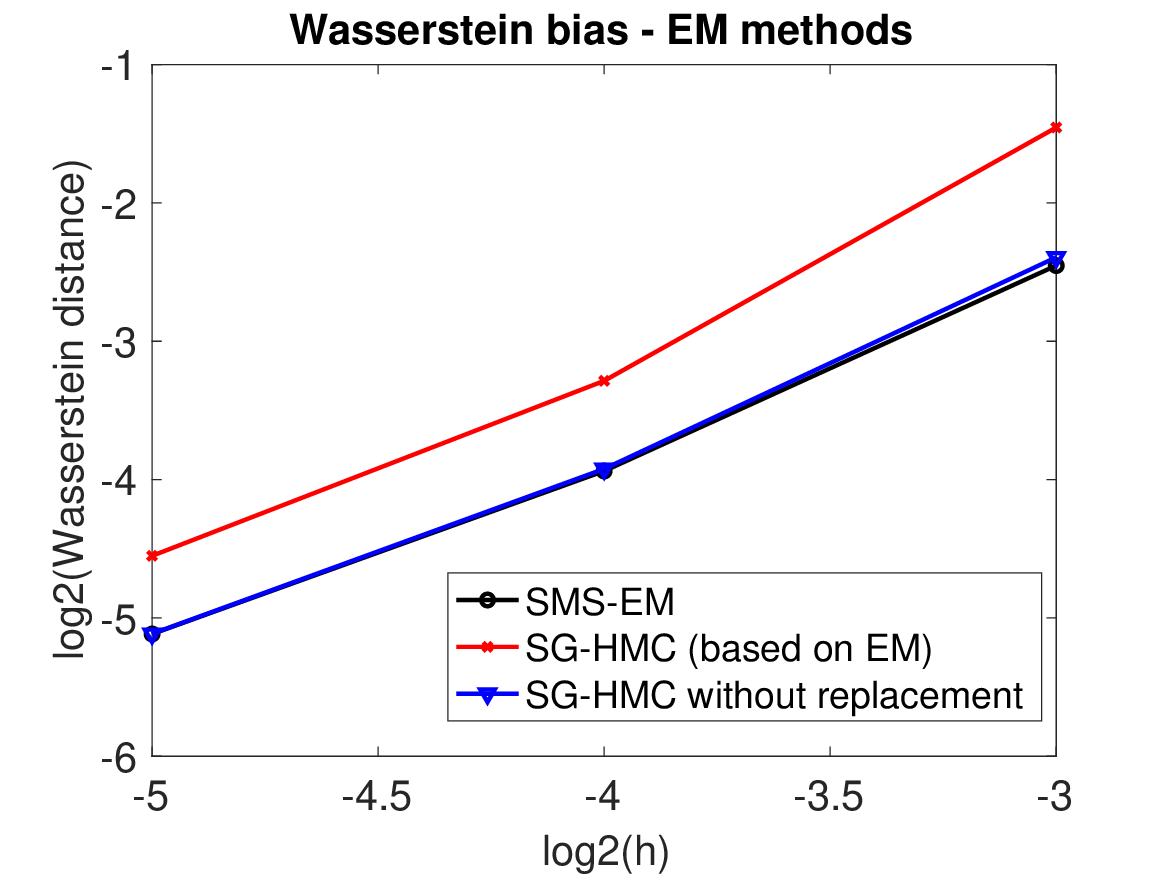}
\label{figure:wassbias}
\caption{Wasserstein bias of stochastic integrators of kinetic Langevin dynamics for a 1D Gaussian target}
\end{figure}

These results show that SMS-UBU and SG-UBU without replacement perform similarly, and they also slightly outperform all other methods. SMS-UBU, SG-UBU without replacement, SMS-BAOAB and SG-BAOAB without replacement appear to have a bias of $O(h^2)$ (weak order 2), while the other methods seem to have a bias of $O(h)$ (weak order 1). This is consistent with our theoretical results. SG-HMC and the other EM-based methods seem to have much larger bias than the methods based on higher order integrators (and EM based methods diverged at the largest step size $h=2^{-2}$).

We have conducted another experiment in Bayesian multinomial logistic regression (\cite{bishop2006pattern}) on the Fashion-MNIST dataset (\cite{xiao2017fashion}). We have used a isotropic Gaussian prior (quadratic regularizer) of the form $p(x)dx\propto \exp(-\|x\|^2/(2\sigma^2))dx$ with $\sigma=50^{-1/2}$.
The potential is strongly convex and log-concave. Using the L-BFGS algorithm, we found the global minimizer $x^*$, and initiated sampling algorithms from this point. 
We have considered five sampling algorithms: SG-HMC based on Euler-Mayurama discretization of kinetic Langevin (\cite{SGHMC}), as well as SG-BAOAB, SMS-BAOAB, SG-UBU, and SMS-UBU. Batch sizes were chosen as 200 (out of training size 60000), and we have implemented variance reduction \eqref{eq:SG1} with respect to $x^*$.
As test functions, we have used the posterior probabilities on the correct class on the test dataset (10000 images, so 10000 probabilities). To evaluate the bias accurately, we have constructed synchronous couplings of chains at stepsizes $h$ and $h/2$ (see Section \ref{sec:biasexplanation} in the Appendix). \textcolor{black}{One could use alternative splitting schemes, such as ABOBA as mentioned in \cite{chenMCMC}, but we would expect it to have worse performance compared to the SG-BAOAB integrator, since BAOAB is known to have improved bias compared to all other A,B and O splittings.}

In addition to the biases, we have also evaluated the accuracy of the estimators based on posterior mean predictive probabilities of each class, and also evaluated calibration performance in terms of average negative log-likelihood on test set (NLL), adaptive calibration error (ACE) \cite{nixon2019measuring}, and Ranked Probability Score (RPS) \cite{constantinou2012solving}. 

As Figure \ref{figure:bias} shows, SMS-UBU outperforms alternative methods in terms of calibration performance and accuracy at the largest stepsizes, and it has a significantly smaller bias compared to the BAOAB and EM integrators. The plots agree with our theoretical results, showing the bias to be $O(\sqrt{d}h^2)$. The dimensional dependency seems even better for these test functions, in line with the results in \cite{chen2024convergence}.
\begin{figure}[h!]
\centering
\includegraphics[width=0.48\linewidth]{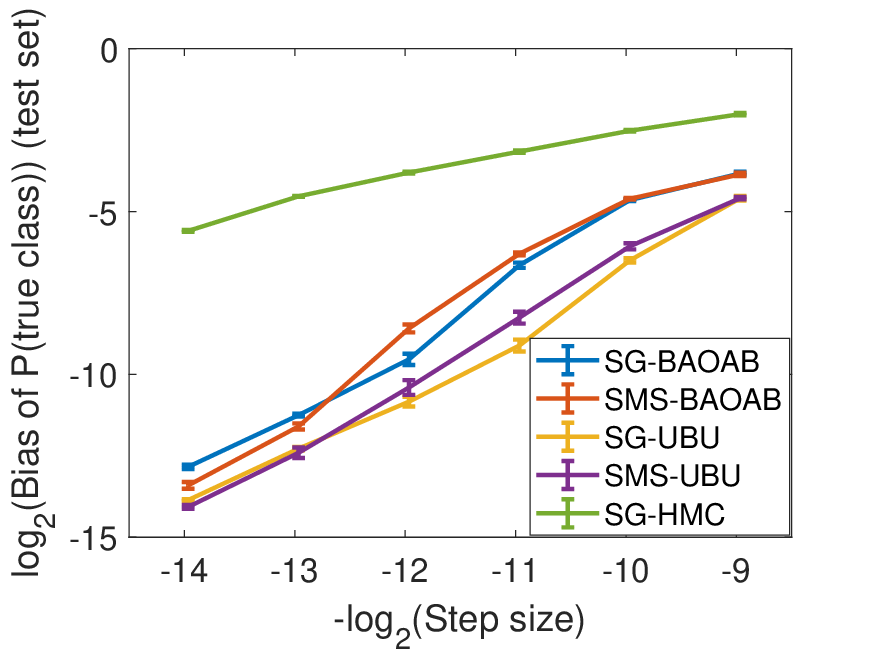}\\
\includegraphics[width=0.48\linewidth]{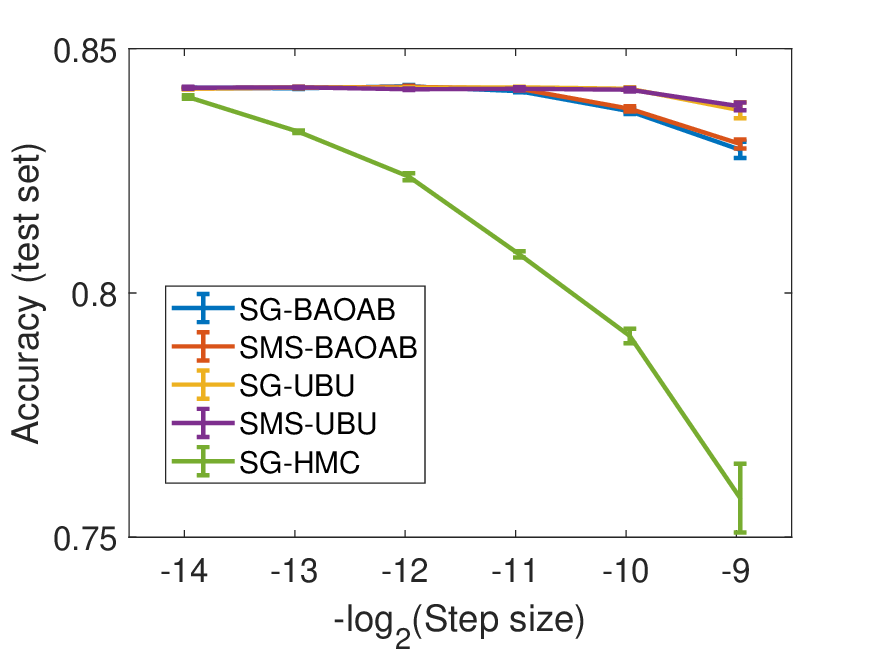}
\includegraphics[width=0.48\linewidth]{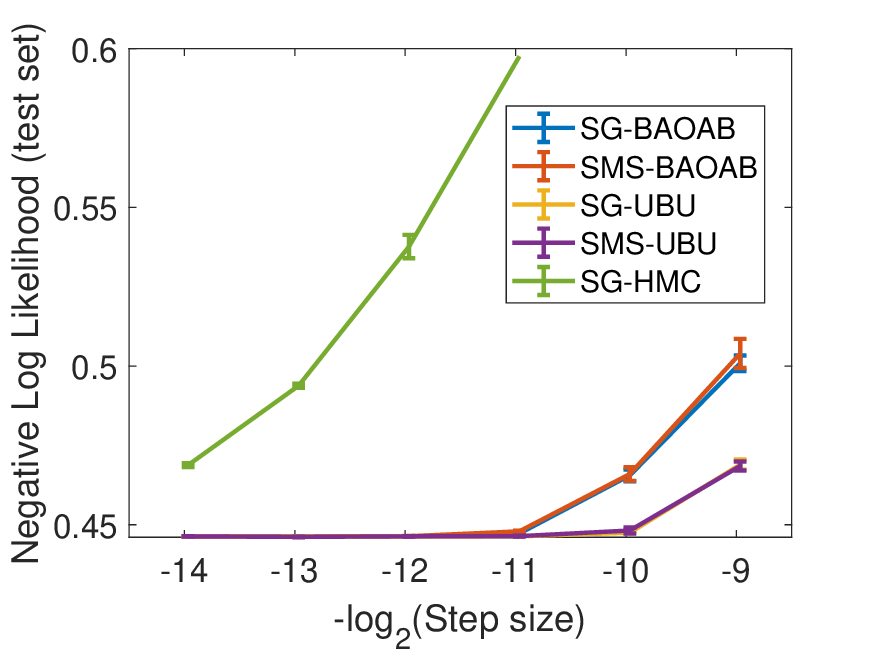}\\
\includegraphics[width=0.48\linewidth]{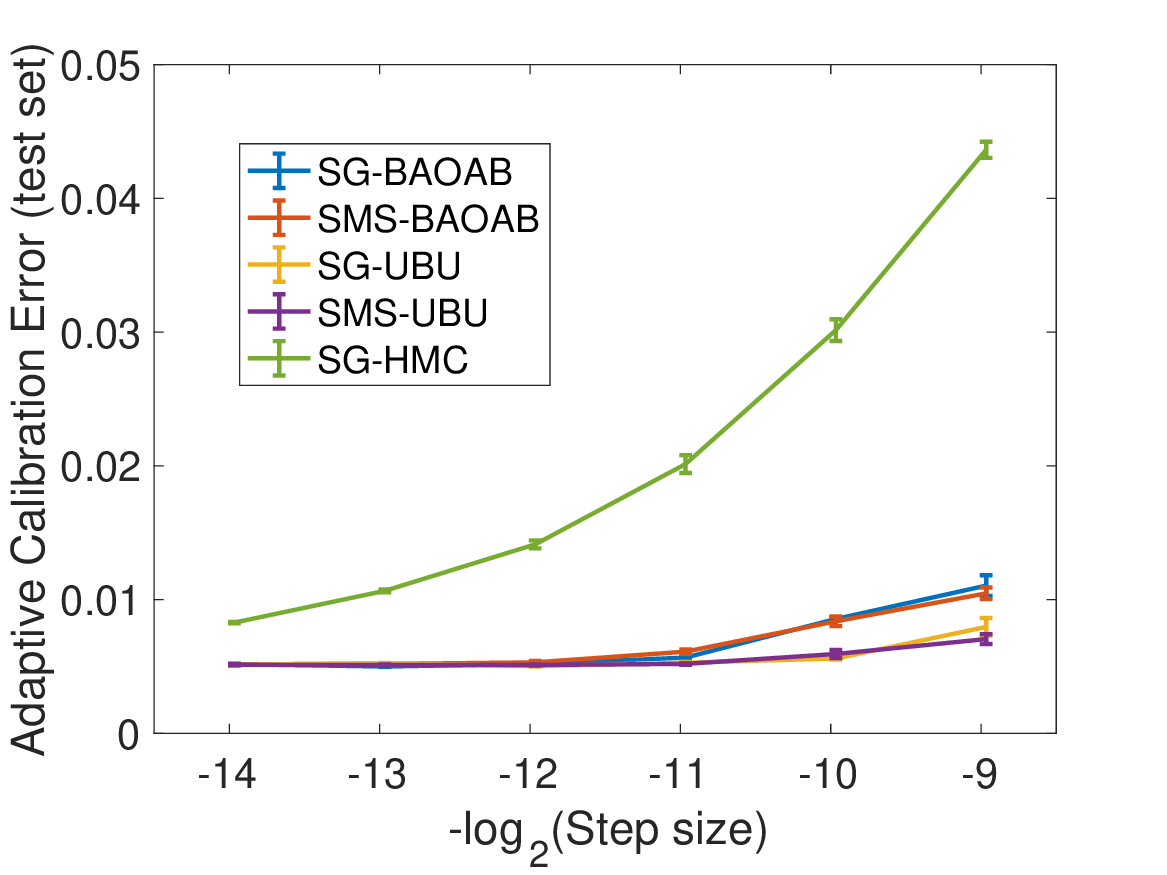}
\includegraphics[width=0.48\linewidth]{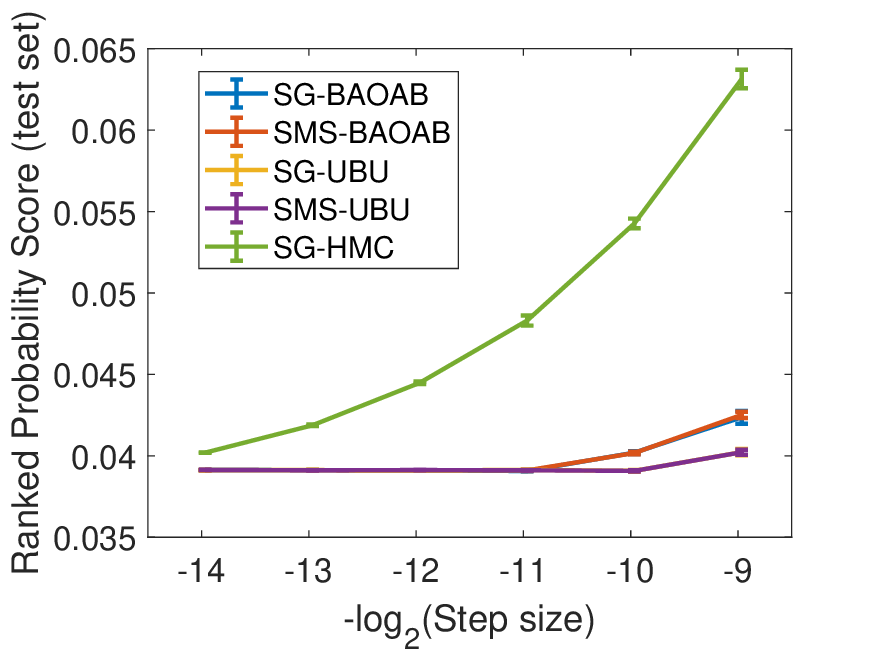}
\label{figure:bias}
\caption{Top: average bias of probability of true class on test dataset for five integrators as a function of stepsize. The other plots show the effect of stepsize on accuracy and calibration performance (NLL, ACE, and RPS).}
\end{figure}

\section{Bayesian uncertainty quantification in neural networks via sampling}
\label{sec:uq}
In this section, we explain the use of our sampling methods for exploring the posterior distributions of the parameters of neural networks. The theory is based on a unimodality assumption (log-concave target distribution)  whose potential has a bounded second derivative. It is well known that the loss functions of neural networks are highly multimodal and can have  minimizers in areas of high curvature, i.e. the norm of the Hessian may be large. Sampling from such multimodal distributions is challenging. Mixing can be slow even if full gradients are used (\cite{BNNpost}). 

There has been a significant amount of effort in the literature to find minimizers that lie in flat regions with low curvature, see \cite{baldassi2020shaping} and \cite{foret2021sharpnessaware}. In the experiments of this paper, we have used a slowly decreasing stepsize combined with Stochastic Weight Averaging (SWA) (\cite{SWA}) to find flat minimizers. We can assess the flatness of an area near a point via the norm of the Hessian of the log-posterior at the point. Figure \ref{fig:Hessiannorm} in Section \ref{sec:Hessiannorm} of the Appendix shows the typical norm of the Hessian of log-posterior for a CNN-based neural network trained for classification on the Fashion-MNIST dataset. The Hessian is much better behaved in the relatively smooth area in the neighbourhood of the SWA network optima.

Let $x^*$ the SWA weights found by the algorithm. Instead of exploring the multimodal target $\pi(d x)\propto \exp(-\uu(x))dx$, we propose the localized distribution $\pi^{*}(x)$ 
and potential energy $\uu^*(x)$ defined by
\begin{align*}
    &\pi^*(dx)\propto \exp(-\uu^*(x))dx \text{ for }\|x-x^*\|_{\infty}<\rho_{\max}, \\ &\uu^*(x)=\uu(x)+\frac{1}{2\rho^2} \|x-x^*\|^2,
\end{align*}
where $\rho, \rho_{\max}>0$ are so-called localization parameters.
$\uu^*(x)$ will be strongly convex due to inclusion of the quadratic regularizer term $\frac{1}{2\rho^2} \|x-x^*\|^2$ (for sufficiently small $\rho$).  We further restrict the parameters to the hypercube $\Omega=\{x: \|x-x^*\|_{\infty}<\rho_{\max}\}$ which ensures that the norm of the Hessian $\|\nabla^2(f^*(x))\|$ is relatively small within the domain $\Omega$ (see Figure \ref{fig:Hessiannorm}). Although  Algorithm \ref{alg:SMS-UBU} was stated for an unconstrained domain, it is straightforward to adapt it to the hypercube  $\Omega$ by implementing elastic bounces independently in each component whenever the $x$ component exits the domain (recent theoretical results for numerical integrators on constrained spaces suggest that such bounces do not change the order of accuracy, see \cite{leimkuhler2024numerical}). We chose $\rho_{\infty}=6\rho$ in our experiments, which ensured that bounces rarely happened.
Figure \ref{fig:pathsGR} shows the average training loss function for four independent SMS-UBU paths over 40 epochs started from $x^*$ plus a multivariate Gaussian with standard deviation $\rho=50^{-1/2}$. We have repeated this experiment 16 times, and plotted the $\hat{R}$ values (Gelman-Rubin diagnostics, computed after 10 epochs of burn-in). These plots indicate that SMS-UBU is able to approximate the localized distribution $\pi^*$ efficiently, in contrast to earlier experiments exploring the whole target $\pi$ (\cite{BNNpost}). The additional computational cost for sampling (40 epochs) is only twice the cost of optimization (20 epochs).

The local approximation $\pi^*$ can still be far away from the original multimodal distribution $\pi$. To obtain a better approximation, we can use ensembles of independently obtained SWA points $x^*_1,\ldots, x^*_N$, and run $N$ independent SMS-UBU chains initiated from these points. The full details of our approach are stated in Algorithm \ref{alg:ensSMSUBU}.

\begin{algorithm}[t]
    \footnotesize
    \begin{algorithmic}
		\Require Stepsize $h > 0$, friction parameter $\gamma > 0$. Number of sampling steps $K$, number of training epochs $T_{\mathrm{train}}$, number of SWA epochs $T_{\mathrm{SWA}}$. Localization parameters $\rho$, $\rho_{\max}$.
            \For {$n = 1,2,...,N$}
		\begin{enumerate}
                \item Randomly initialize the network weights.
                \item Train network for $T_{\mathrm{train}}$ epochs at decreasing stepsize.
                \item Continue training network with a fixed stepsize for $T_{\mathrm{SWA}}$ epochs, and accumulate the average weights over this period in variable $x^*_{(n)}$.
                \item Obtain $K$ samples from the distribution $\pi^{*}_{(n)}(x)\propto \exp\left(-\uu(x)-\frac{\|x-x^*_{(n)}\|^2}{2\rho^2}\right)$ using SMS-UBU initiated at $x_0=x^*_{(n)}$, $v_0\sim \mathcal{N}(0_d,I_d)$, with elastic bounces when exiting hypercube $\Omega_{(n)}=\{x: \|x-x^*_{(n)}\|_{\infty}<\rho_{\max}\}$.
            \end{enumerate}
            \EndFor
            \item Output: Samples $(x_{k}^{(n)})_{k\in 1:K, n\in 1:N}$.
	            
    \end{algorithmic}
	\caption{Ensemble SMS-UBU with SWA centred local approximation}
	\label{alg:ensSMSUBU}
\end{algorithm}


\begin{figure}
\includegraphics[width=0.48\linewidth]{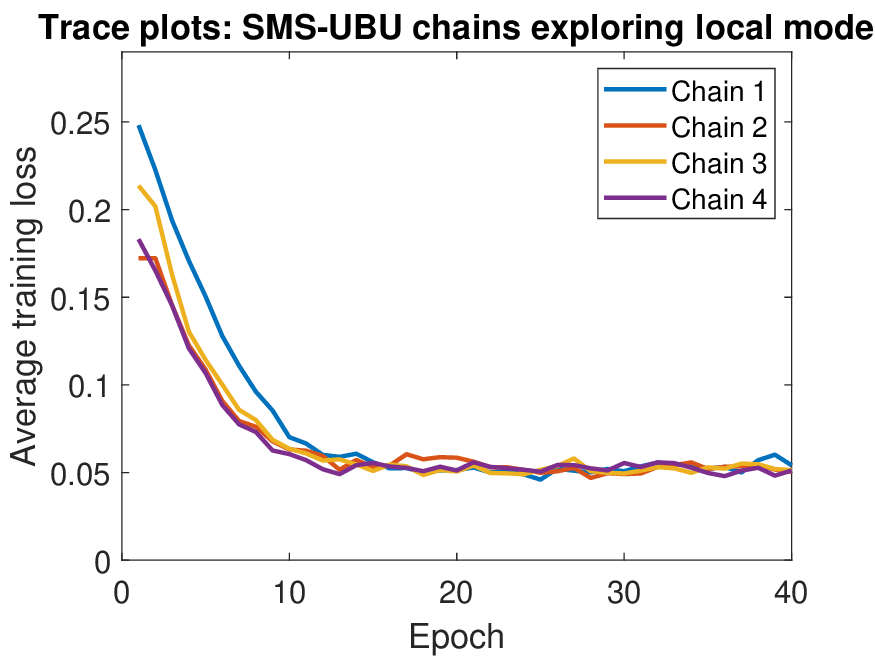}
\includegraphics[width=0.48\linewidth]{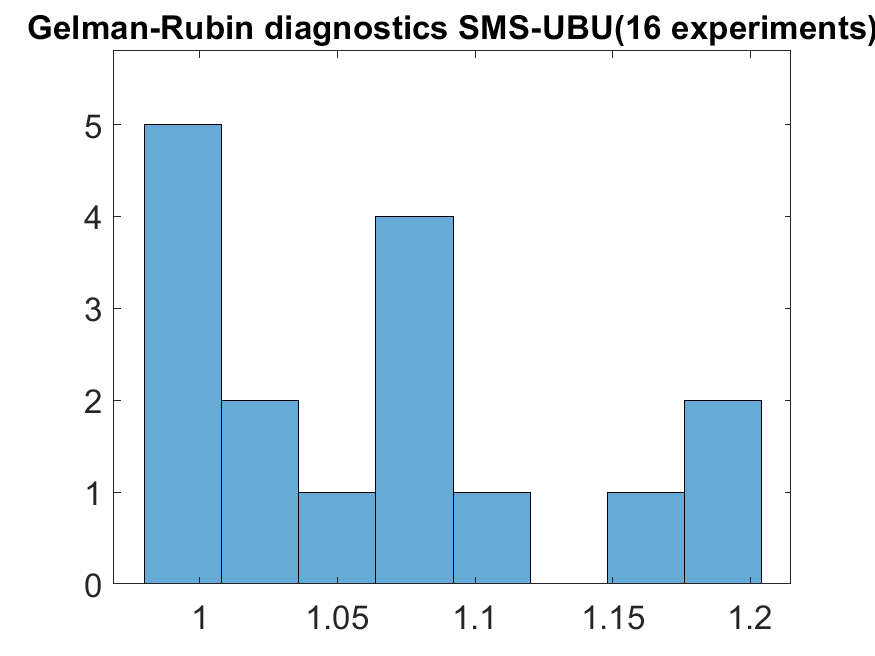}
\caption{Left: trace plots for 4 SMS-UBU chains initialized at Gaussian perturbations of SWA weights. Right: Gelman-Rubin diagnostic $\hat{R}$ of average training loss computed for 16 SWA weights $x^*$ obtained independently (four parallel chains each). The four chains converge to the same level, and $\hat{R}$ values are close to unity, indicating excellent mixing.}
\label{fig:pathsGR}
\end{figure}
\section{Experiments}
\label{sec:num}
In this section, we evaluate the accuracy and calibration performance of  ensembles of deep Bayesian neural networks on three datasets: Fashion-MNIST (\cite{xiao2017fashion}), Celeb-A (\cite{yang2015facial}), and chest X-ray (\cite{kermany2018identifying}). In the Celeb-A, we considered classification between blonde and brown hair colours. The networks had six convolutional layers combined with batch normalization and max pooling layers, followed by some MLP layers (these were chosen as low-rank to reduce the memory use). The Pytorch code of the them is included in Section \ref{sec:furtherdetailsnumerics} of the Appendix. We used a single RTX 4090 GPU for training.

The basic setup in all experiments was the same: 15 epochs of initial training with Adam at a step polynomially decaying stepsize schedule with power equal to 1, followed by five epochs of \textcolor{black}{SWA}, and 40 epochs of SMS-UBU (10 epoch was discarded as burn-in, and 30 epochs were used as samples). See Section \ref{sec:hyperparameterchoices} for further details.

We performed 64 independent runs, and compared the performance of ensemble methods (ensembles after 15 epochs of training, ensembles of SWA networks, and ensembles of Bayesian samples according to Algorithm \ref{alg:ensSMSUBU}). The results are shown in Figures \ref{fig:Fashion}, \ref{fig:CelebA}, and \ref{fig:chestXray}. Standard deviations were computed based on the independent runs (for example, by pooling the runs into ensembles of size four, there were 16 independent such ensembles, etc.) Bayesian networks offer small gains in accuracy over ensembles of the same size, and significant gains in calibration performance (NLL, ACE, and RPS scores). The Bayesian approach based on SMS-UBU seems to be able to provide a better modelling of uncertainty in the weights, which seems to persists even with the local $\pi^*$ used in Algorithm \ref{alg:ensSMSUBU}.
On Figure \ref{fig:Fashion}, we have also included experiments with four additional integrators: SG-UBU, SG-UBU with sampling without replacement, SG-HMC (\cite{SGHMC}) and cyclical SG-HMC (\cite{Zhang}). As proposed in the literature, SG-HMC and cyclical SG-HMC do not use local approximation $\pi^*$, but aim to explore the whole target (using same initialization and number of epochs for all methods). These do not perform as well the UBU-based methods relying on local approximations.

\begin{figure}[h!]
\begin{center}
\includegraphics[width=0.57\linewidth]{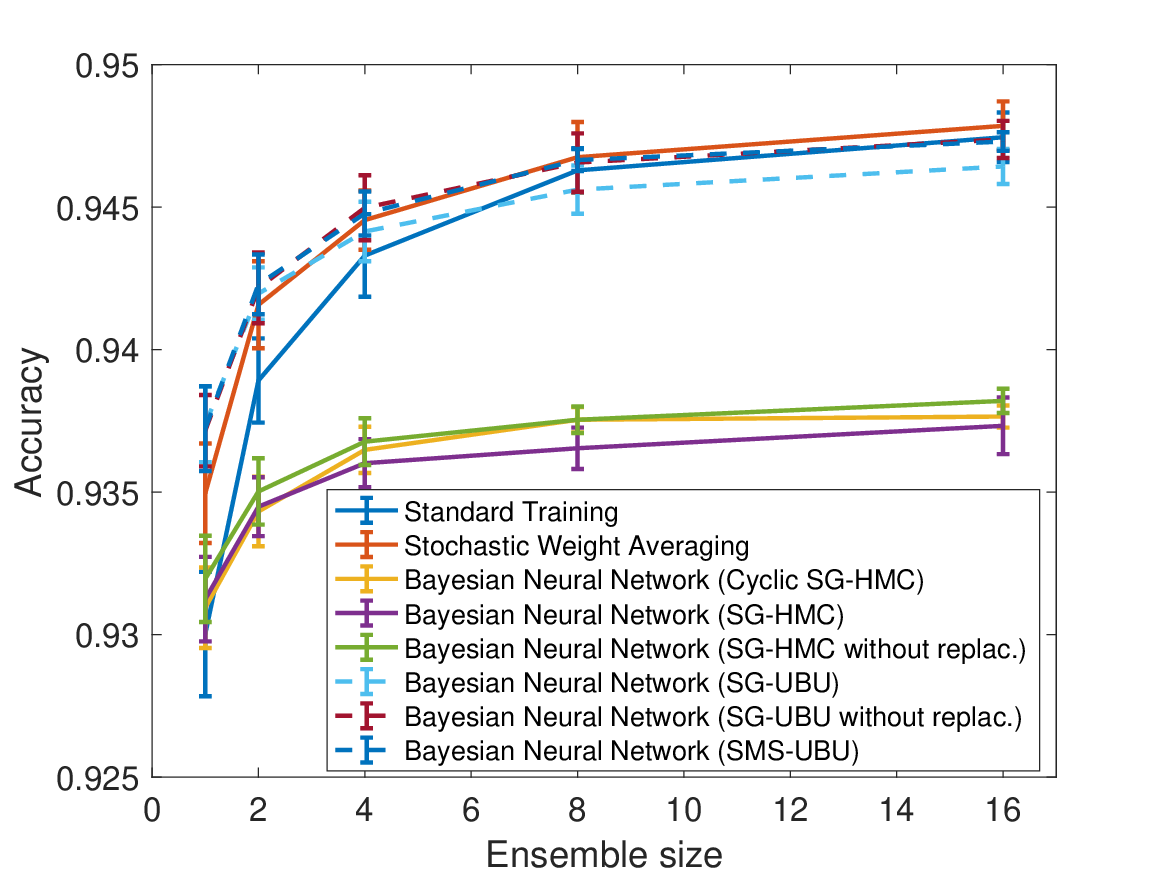}\\
\includegraphics[width=0.57\linewidth]{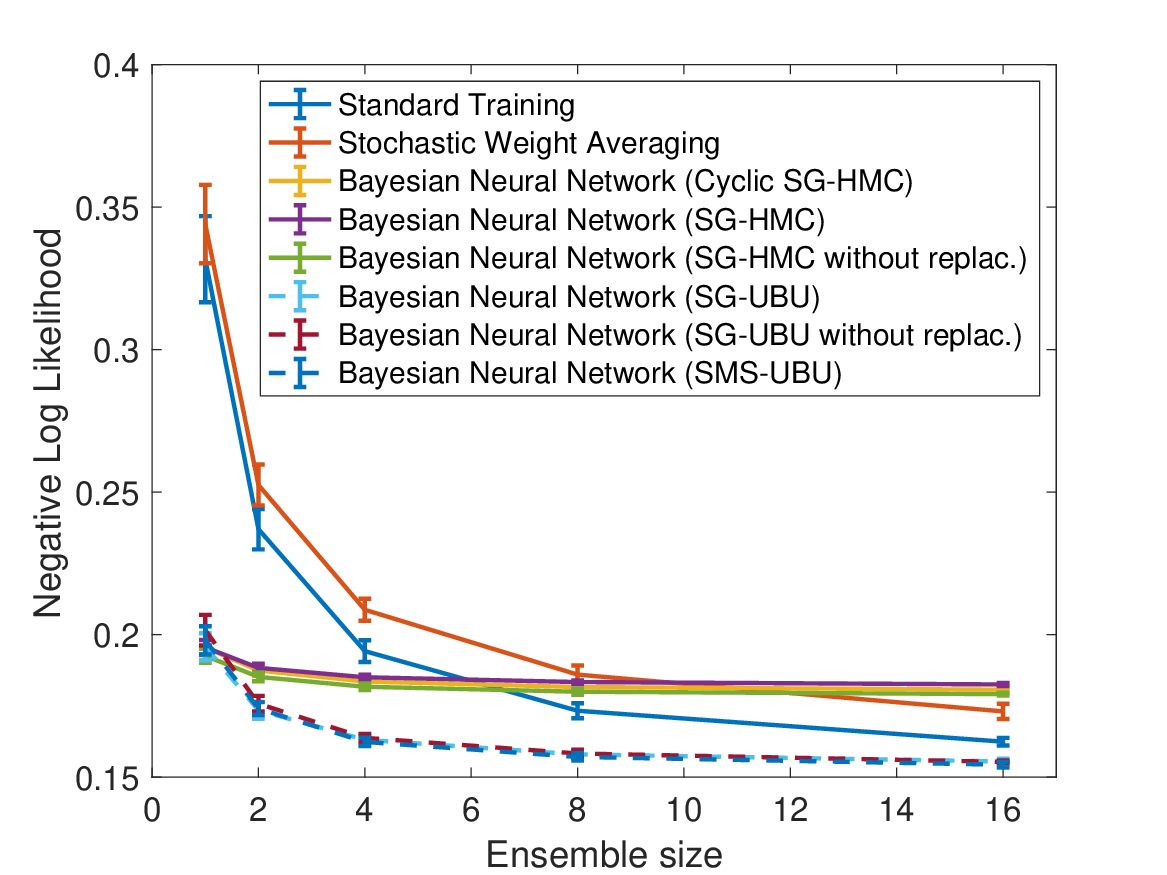}\\
\includegraphics[width=0.57\linewidth]{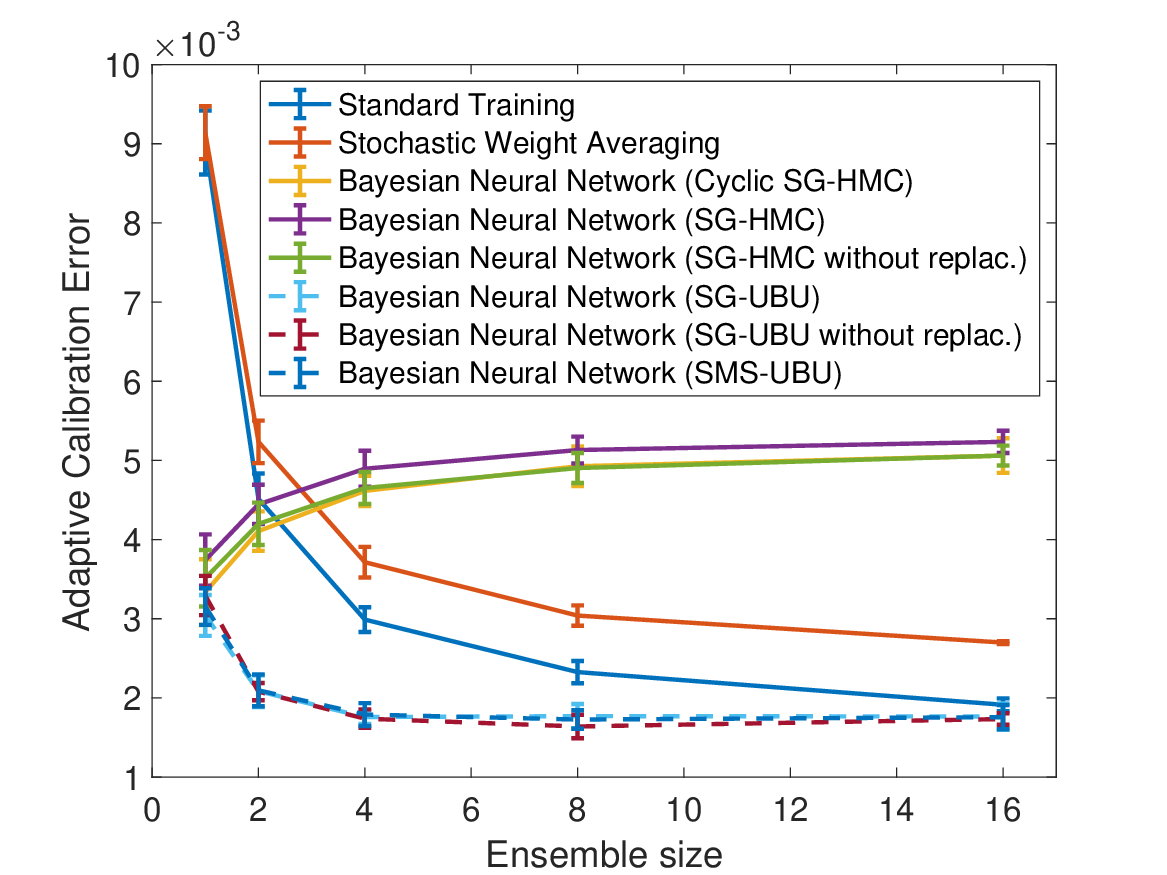}\\
\includegraphics[width=0.57\linewidth]{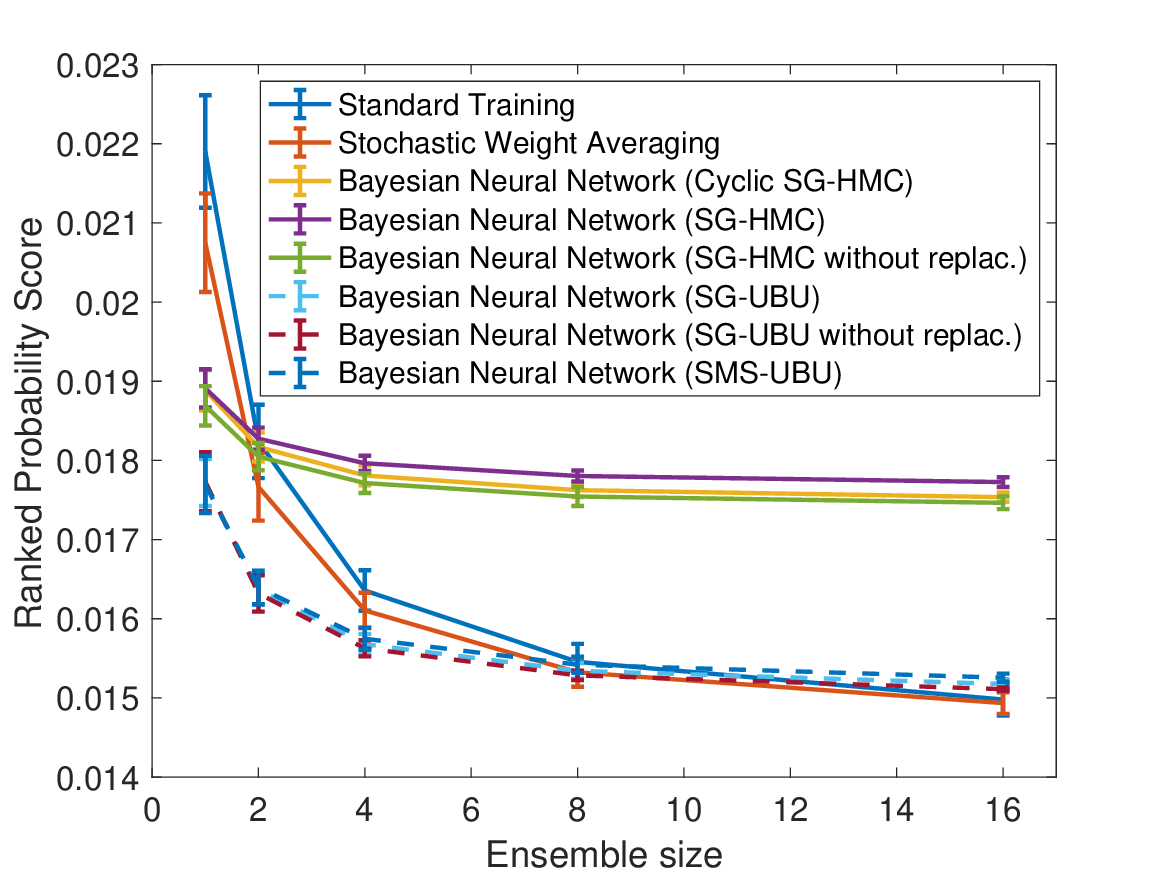}
\end{center}
\caption{Accuracy and calibration results for a CNN-based network on Fashion-MNIST.}
\label{fig:Fashion}
\end{figure}


\begin{figure}[h!]
\includegraphics[width=0.48\linewidth]{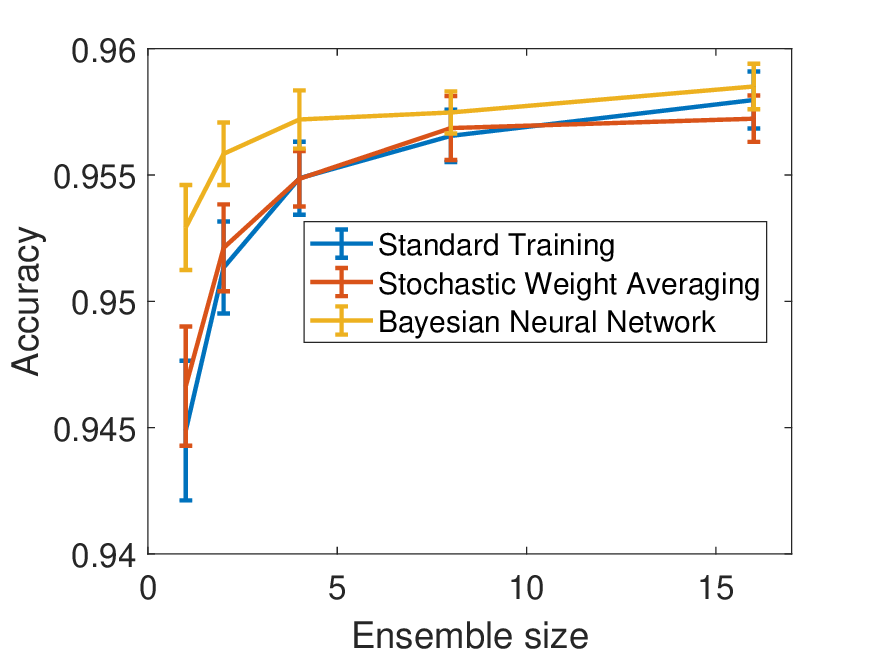}
\includegraphics[width=0.48\linewidth]{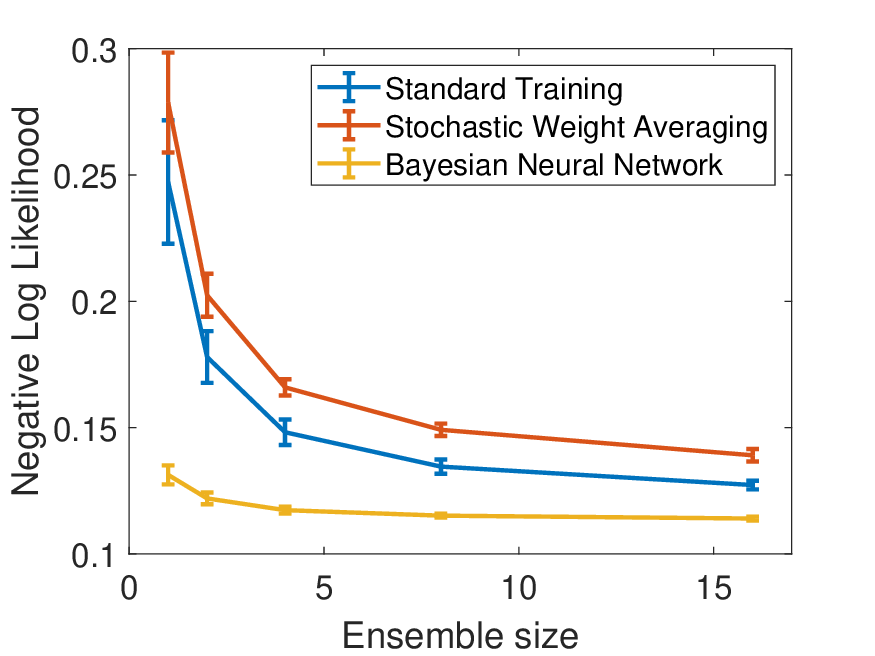}\\
\includegraphics[width=0.48\linewidth]{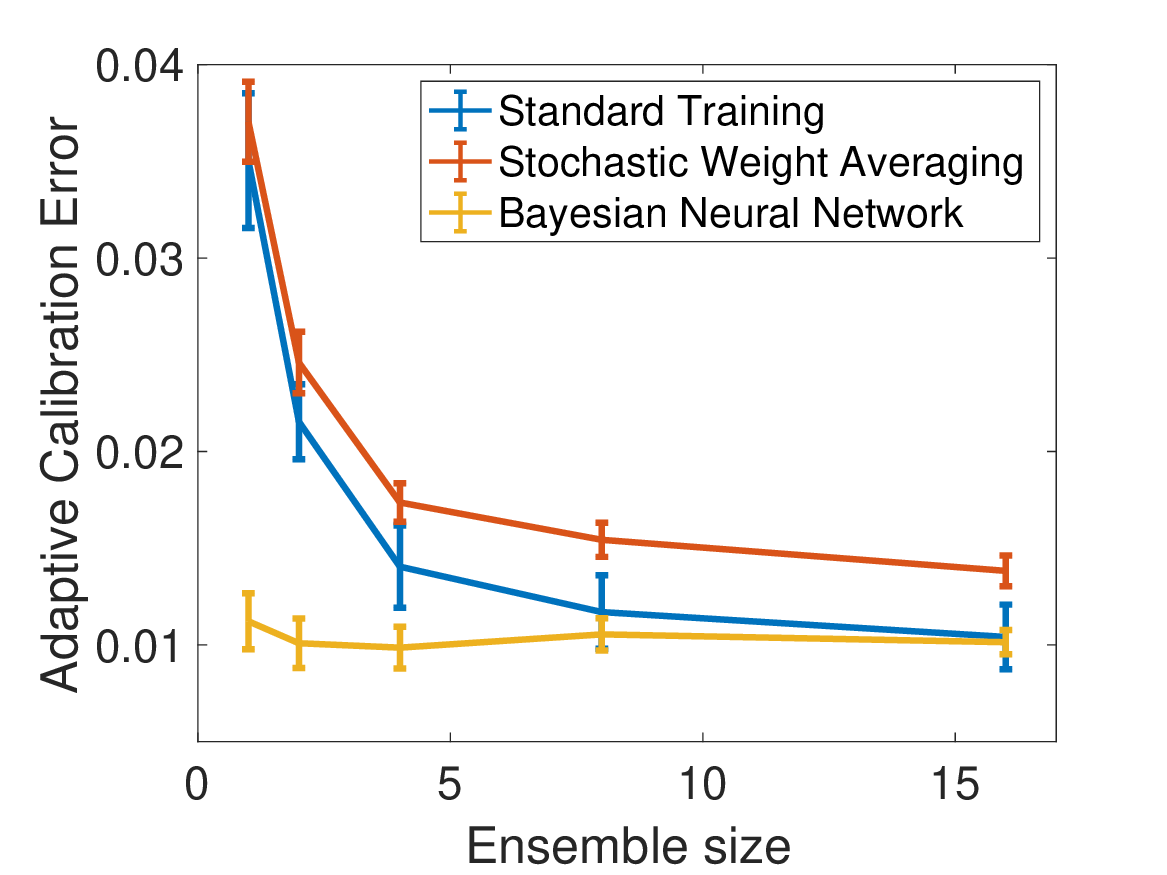}
\includegraphics[width=0.48\linewidth]{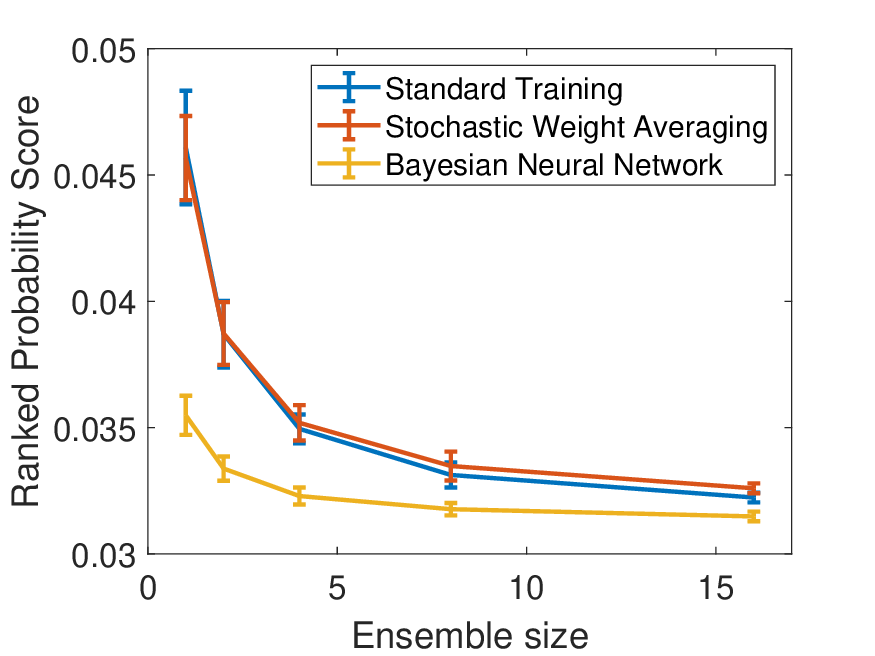}
\caption{Accuracy and calibration results for a CNN-based network for classifying brown/blonde hair colour on the Celeb-A dataset.}
\label{fig:CelebA}
\end{figure}


\begin{figure}[h!]
\includegraphics[width=0.48\linewidth]{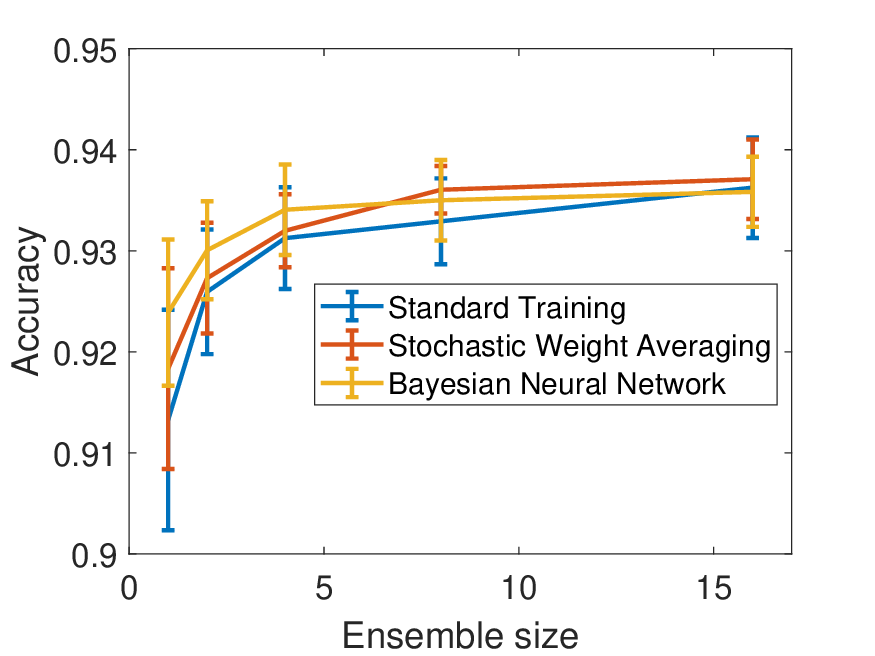}
\includegraphics[width=0.48\linewidth]{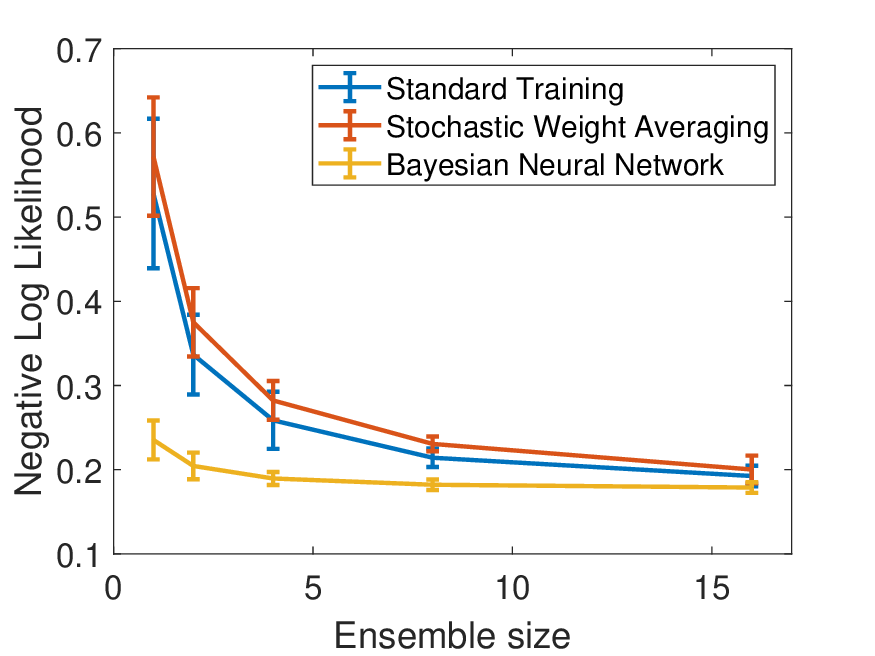}\\
\includegraphics[width=0.48\linewidth]{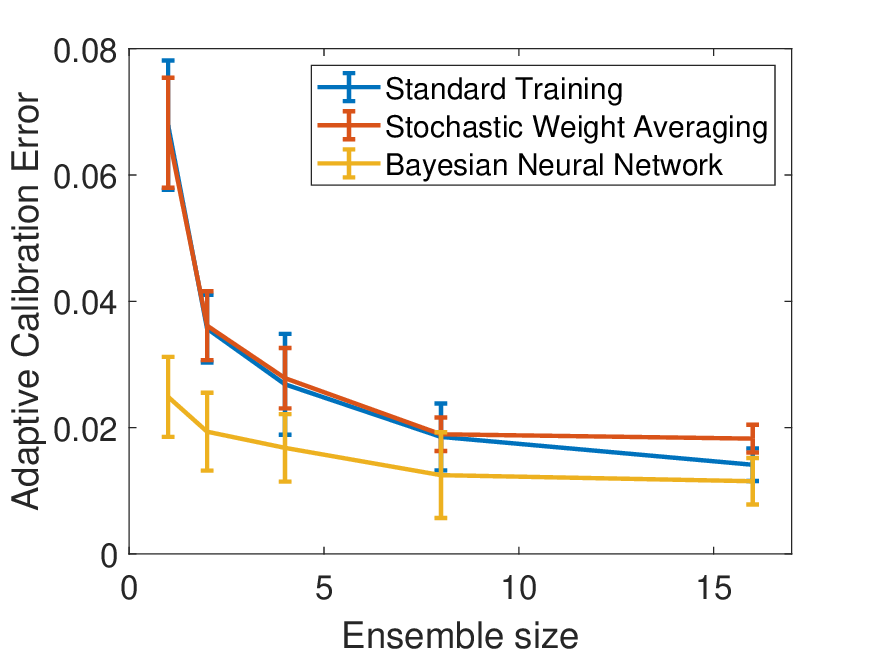}
\includegraphics[width=0.48\linewidth]{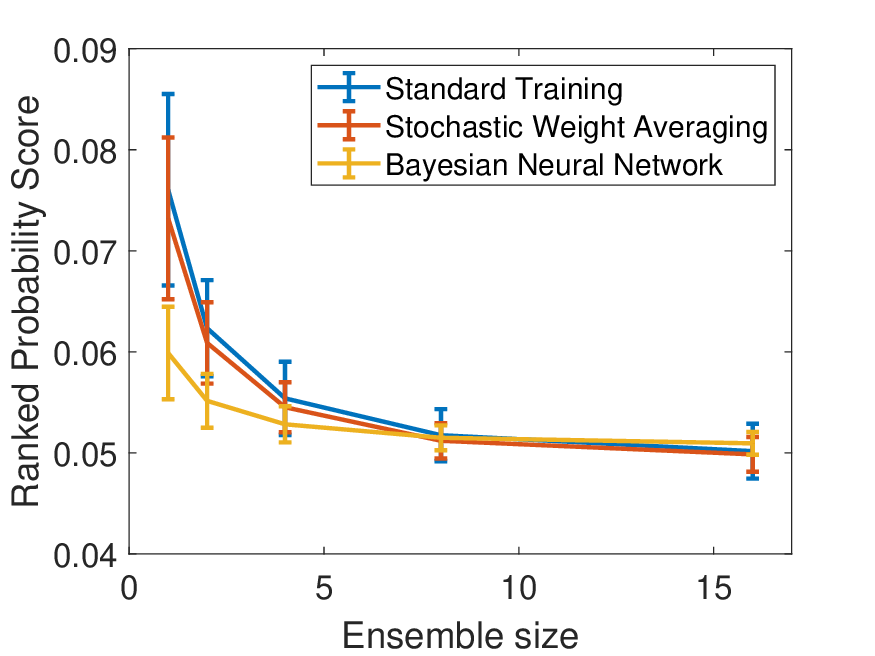}
\caption{Accuracy and calibration results for a CNN-based network for detecting pneumonia on a chest X-ray dataset.}
\label{fig:chestXray}
\end{figure}


\section{Conclusion}
\label{sec:conc}

We have proposed a new sampling algorithm for Bayesian neural network posteriors, based on a kinetic Langevin integrator combined with symmetric minibatching.  We demonstrated a number of important results related to convergence of the numerical scheme, in terms of Wasserstein contraction, and also in terms of the derived estimates for weak error, establishing the accuracy to be $\mathcal{O}(h^2)$. Several numerical examples demonstrate the performance of SMS-UBU against other stochastic gradient schemes. A comparison was provided to both stochastic weight averaging and standard training (optimization). We obtained substantial improvements in calibration error.
In terms of future work, one possibility is to explore
unbiased sampling methods for BNN posteriors (see \cite{ububu} and \cite{Giles2020}), which could provide an alternative to Metropolized methods such as \cite{HMCBNN}. \textcolor{black}{A second direction could be to combine the mode connectivity framework \cite{surface,Frankle,permutation} with SMS-UBU, potentially improving efficiency}.

\subsubsection*{Acknowledgements}
D.P. and P.A.W. contributed equally to this work.






\bibliography{sample}

\section*{Checklist}



\begin{enumerate}

\item For all models and algorithms presented, check if you include:
 \begin{enumerate}
   \item A clear description of the mathematical setting, assumptions, algorithm, and/or model. Yes
   \item An analysis of the properties and complexity (time, space, sample size) of any algorithm. Yes
   \item (Optional) Anonymized source code, with specification of all dependencies, including external libraries. No
 \end{enumerate}

 \item For any theoretical claim, check if you include:
 \begin{enumerate}
   \item Statements of the full set of assumptions of all theoretical results. Yes
   \item Complete proofs of all theoretical results. Yes
   \item Clear explanations of any assumptions. Yes
 \end{enumerate}

 \item For all figures and tables that present empirical results, check if you include:
 \begin{enumerate}
   \item The code, data, and instructions needed to reproduce the main experimental results (either in the supplemental material or as a URL). Yes
   \item All the training details (e.g., data splits, hyperparameters, how they were chosen). Yes
         \item A clear definition of the specific measure or statistics and error bars (e.g., with respect to the random seed after running experiments multiple times). Yes
         \item A description of the computing infrastructure used. Yes
 \end{enumerate}

 \item If you are using existing assets (e.g., code, data, models) or curating/releasing new assets, check if you include:
 \begin{enumerate}
   \item Citations of the creator If your work uses existing assets. Yes
   \item The license information of the assets, if applicable. Not Applicable
   \item New assets either in the supplemental material or as a URL, if applicable. Not Applicable
   \item Information about consent from data providers/curators. Not Applicable
   \item Discussion of sensible content if applicable, e.g., personally identifiable information or offensive content. Not Applicable
 \end{enumerate}

 \item If you used crowdsourcing or conducted research with human subjects, check if you include:
 \begin{enumerate}
   \item The full text of instructions given to participants and screenshots. Not Applicable
   \item Descriptions of potential participant risks, with links to Institutional Review Board (IRB) approvals if applicable. Not Applicable
   \item The estimated hourly wage paid to participants and the total amount spent on participant compensation. Not Applicable
 \end{enumerate}

 \end{enumerate}

\appendix

%

%

\onecolumn
\aistatstitle{Sampling from Bayesian Neural Network Posteriors with Symmetric Minibatch Splitting Langevin Dynamics: \\
Supplementary Materials}
\section{Algorithms}
\begin{algorithm}[h!]
    \footnotesize
    
	\begin{itemize}
		\item Initialize $\left(x_{0},v_{0}\right) \in \mathbb{R}^{2d}$, stepsize $h > 0$ and friction parameter $\gamma > 0$.
		\item for $k = 1,2,...,K$ do
		\begin{itemize}
                \item[] Sample $\omega_{k} \sim \rho$
                \item[] Sample $\xi_{k} \sim \mathcal{N}(0_{d},I_{d})$
                \item[] $x_{k} \to x_{k-1} + h v_{k-1}$
                \item[] $v_{k} \to v_{k-1} - h \mathcal{G}(x_{k-1},\omega_{k}) - h \gamma v_{k-1} + \sqrt{2\gamma h}\xi_{k}$

		\end{itemize}
            \item Output: Samples $(x_{k})^{K}_{k=0}$.
	\end{itemize}
            
	\caption{Stochastic Gradient Euler-Maruyama (SG-HMC)}
	\label{alg:SG-EM}
\end{algorithm}

\begin{algorithm}[h]
    \footnotesize
    
	\begin{itemize}
		\item Initialize $\left(x_{0},v_{0}\right) \in \mathbb{R}^{2d}$, stepsize $h > 0$ and friction parameter $\gamma > 0$.
            \item Sample $\omega_{1} \sim \rho$
            \item $G_{0} \to \mathcal{G}(x_{0},\omega_{1})$
		\item for $k = 1,2,...,K$ do
		\begin{itemize}
			\item[(B)] $v \to v_{k-1} - \frac{h}{2}G_{k-1}$
                \item[(A)] $x \to x_{k-1} + \frac{h}{2}v$
                \item[] Sample $\xi_{k} \sim \mathcal{N}(0_{d},I_{d})$
                \item[(O)] $v \to \eta v + \sqrt{1-\eta^{2}}\xi_{k}$
                \item[(A)] $x_{k} \to x + \frac{h}{2}v$
                \item[] Sample $\omega_{k+1} \sim \rho$
                \item[] $G_{k} \to \mathcal{G}(x_{k},\omega_{k+1})$
                \item[(B)] $v_{k} \to v - \frac{h}{2}G_{k}$
		\end{itemize}
            \item Output: Samples $(x_{k})^{K}_{k=0}$.
	\end{itemize}
	\caption{Stochastic Gradient BAOAB (SG-BAOAB)}
	\label{alg:SG-BAOAB}
\end{algorithm}

\begin{algorithm}[h!]
    \footnotesize
    \begin{algorithmic}
		\State Initialize $\left(x_{0},v_{0}\right) \in \mathbb{R}^{2d}$, stepsize $h > 0$, friction parameter $\gamma > 0$ and number of minibatches $N_{m}$.
            \For {$i = 1,2,...,\lceil K/2N_{m}\rceil$}
            \State Sample $\omega_{1},...,\omega_{N_{m}} \in [N_{D}]^{N_{b}}$ uniformly without replacement. 
            \State Define $\omega_{N_m+1}:=\omega_1$.
            \If {i=1}\State $G_{0} \to \mathcal{G}(x_{0},\omega_{1})$.
            \Else \State $G_0\to G_{N_m}$.
            \EndIf
            \State \textbf{\underline{Forward Sweep}}\\
            \For {$k = 1,2,...,\min\{N_{m},K-(2i-2)N_{m}\}$}                 
                \begin{itemize}
                \item[(B)] $v \to v_{(2i-2)N_{m} + k-1} - \frac{h}{2}G_{k-1}$
                \item[(A)] $x \to x_{(2i-2)N_{m} +k-1} + \frac{h}{2}v$
                \item[] Sample $\xi_{k} \sim \mathcal{N}(0_{d},I_{d})$
                \item[(O)] $v \to \eta v + \sqrt{1-\eta^{2}}\xi_{k}$
                \item[(A)] $x_{(2i-2)N_{m} +k} \to x + \frac{h}{2}v$
                \item[] $G_{k} \to \mathcal{G}(x_{k},\omega_{k+1})$
                \item[(B)] $v_{(2i-2)N_{m} +k} \to v - \frac{h}{2}G_{k}$
                \end{itemize}
            \EndFor\\
            \State \textbf{\underline{Backward Sweep}}\\
            \For {$k = 1,2,...,\min\{N_{m},K-(2i-1)N_{m}\}$}			
                \begin{itemize}
                \item[(B)] $v \to v_{(2i-1)N_{m}+k-1} - \frac{h}{2}G_{N_m+1-k}$
                \item[(A)] $x \to x_{(2i-1)N_{m}+k-1} + \frac{h}{2}v$
                \item[] Sample $\xi_{k} \sim \mathcal{N}(0_{d},I_{d})$
                \item[(O)] $v \to \eta v + \sqrt{1-\eta^{2}}\xi_{k}$
                \item[(A)] $x_{(2i-1)N_{m}+k} \to x + \frac{h}{2}v$
                \item[] $G_{k} \to \mathcal{G}(x_{k},\omega_{N_m+1-k})$
                \item[(B)] $v_{(2i-1)N_{m}+k} \to v - \frac{h}{2}G_{k}$                
                \end{itemize}
            \EndFor
        \EndFor
        \State Output: Samples $(x_{k})^{K}_{k=0}$.
    \end{algorithmic}
	\caption{Symmetric Minibatch Splitting BAOAB (SMS-BAOAB)}
	\label{alg:SMS-BAOAB}
\end{algorithm}

\section{Convergence of the UBU scheme}


\begin{proposition}[Proposition C.6 of \cite{ububu}]
\label{prop:Wasserstein}
Suppose that $\uu$ satisfies Assumptions \ref{assum:convex} and \ref{assum:Lip}.
Let 
$a=\frac{1}{M},\, b=\frac{1}{\gamma},\, c(h)=\frac{mh}{8\gamma}$ and $P_{h}$ denote the transition kernel for a step of UBU with stepsize $h>0$. For all $\gamma \geq \sqrt{8M}$, $h < \frac{1}{2\gamma}$, $1 \leq p \leq \infty$, $\mu,\nu \in \mathcal{P}_{p}(\R^{2d})$ and for all  $n \in \mathbb{N}$,
\[
\mathcal{W}_{p,a,b}\left(\nu P^n_{h} ,\mu P^n_{h} \right) \leq \left(1 - c(h)\right)^{n} \mathcal{W}_{p,a,b}\left(\nu,\mu\right).
\]
Further to this, $P_{h}$ has a unique invariant measure $\pi_{h}$ satisfying $\pi_{h} \in \mathcal{P}_{p}(\R^{2d})$ for all $1 \leq p \leq \infty$.
\end{proposition}

\section{Wasserstein bias bounds}
We use the following formulation of kinetic Langevin dynamics in the analysis of the UBU scheme in the full gradient setting (as in \cite{sanz2021wasserstein}) and alternative schemes with the inexact gradient splitting methods.

It is derived via It\^o's formula on the product $e^{\gamma t}V_{t}$, we then have for initial condition $(X_0,V_0) \in \R^{2d}$:
\begin{align}
V_{t} &= \mathcal{E}(t)V_{0} - \int^{t}_{0}\mathcal{E}(t-s)\nabla \uu(X_{s})ds + \sqrt{2\gamma}\int^{t}_{0}\mathcal{E}(t-s)dW_s, \label{eq:cont_v}\\
X_{t} &= X_0 + \mathcal{F}(t)V_{0} - \int^{t}_{0}\mathcal{F}(t-s)\nabla \uu(X_{s})ds + \sqrt{2\gamma}\int^{t}_{0}\mathcal{F}(t-s)dW_{s} \label{eq:cont_x},
\end{align}
where
\begin{equation}\label{eq:EFdef}
\mathcal{E}(t) = e^{-\gamma t}\qquad \mathcal{F}(t) = \frac{1-e^{-\gamma t}}{\gamma}.
\end{equation}
Then the UBU scheme (as in \cite{sanz2021wasserstein}) can be expressed as 
\begin{align}
v_{k+1} &= \mathcal{E}(h)v_k - h\mathcal{E}(h/2)\nabla \uu(y_{k}) + \sqrt{2\gamma}\int^{(k+1)h}_{kh}\mathcal{E}((k+1)h-s)dW_s,\label{eq:disc_v}\\
y_{k} &= x_{k} + \mathcal{F}(h/2)v_{k} + \sqrt{2\gamma}\int^{(k+1/2)h}_{kh}\mathcal{F}((k+1/2)h-s)dW_s, \label{eq:disc_y}\\ 
x_{k+1} &= x_{k} + \mathcal{F}(h)v_{k} - h\mathcal{F}(h/2)\nabla \uu(y_{k}) + \sqrt{2\gamma}\int^{(k+1)h}_{kh} \mathcal{F}((k+1)h-s)dW_s,\label{eq:disc_x}
\end{align}
for comparison with the true dynamics via \eqref{eq:cont_v} and \eqref{eq:cont_x}. 

We now define the stochastic gradient scheme (see Definition \ref{def:stochastic_gradient}) by
\begin{align}
\Tilde{v}_{k+1} &= \mathcal{E}(h)\Tilde{v}_k - h\mathcal{E}(h/2)\mathcal{G}(\Tilde{y}_{k},\omega_{k+1}) + \sqrt{2\gamma}\int^{(k+1)h}_{kh}\mathcal{E}((k+1)h-s)dW_s,\label{eq:disc_v_stoch}\\
\Tilde{y}_{k} &= \Tilde{x}_{k} + \mathcal{F}(h/2)\Tilde{v}_{k} + \sqrt{2\gamma}\int^{(k+1/2)h}_{kh}\mathcal{F}((k+1/2)h-s)dW_s, \label{eq:disc_y_stoch}\\ 
\Tilde{x}_{k+1} &= \Tilde{x}_{k} + \mathcal{F}(h)\Tilde{v}_{k} - h\mathcal{F}(h/2)\mathcal{G}(\Tilde{y}_{k},\omega_{k+1}) + \sqrt{2\gamma}\int^{(k+1)h}_{kh} \mathcal{F}((k+1)h-s)dW_s.\label{eq:disc_x_stoch}
\end{align}

First, considering the full gradient scheme, we have the following $L^2$ error bound to the true diffusion.

\begin{proposition}\label{prop:full_gradient}
 Assume that $h <1/2\gamma$ and $\gamma \geq \sqrt{M}$ and consider the kinetic Langevin dynamics and the UBU scheme (with full gradients and iterates $(x_{n},v_{n})_{n \in \mathbb{N}}$) with synchronously coupled Brownian motion initialized from the target measure $(x_{0},v_{0}) = (X_{0},V_{0}) \sim \pi$. Suppose that Assumption \ref{assum:Lip} and Assumption \ref{ass:hessian_lip} are satisfied, then for $k \in \mathbb{N}$ we have that for $(\Delta^{k}_{x},\Delta^{k}_{v}) := (x_{k} - X_{kh},v_{k} - V_{kh})$
    \begin{align*}
        &\left\|\left(\Delta^{k}_{x},\Delta^{k}_{v}\right)\right\|_{L^{2},a,b} \leq \frac{3\gamma^{2}}{M}e^{3\max\left\{\gamma,\frac{2M}{\gamma}\right\}hk}\left(\mathbf{C} + \frac{5h^{3}}{48}(k+1)\left(5M + \gamma M^{1/2}\right)d^{1/2}\right),
    \end{align*}
    where
    \begin{align*}
        \mathbf{C} &= \gamma^{-1}(k+1)\frac{h^{3}\sqrt{d}}{24}\left(3M_{1}\sqrt{d} + M^{3/2} + \gamma M\right) + \sqrt{2\gamma^{-1}}\sqrt{\frac{(k+1)h^{5}M^{2}d}{192}},
    \end{align*}
    and if Assumption \ref{ass:strong_hessian_lip} is satisfied this is refined to
    \begin{equation*}
        \mathbf{C} = \gamma^{-1}(k+1)\frac{h^{3}\sqrt{d}}{24}\left(3M^{s}_{1} + M^{3/2} + \gamma M\right) + \sqrt{2\gamma^{-1}}\sqrt{\frac{(k+1)h^{5}M^{2}d}{192}}.
    \end{equation*}
\end{proposition}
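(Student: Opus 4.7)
The plan is to follow the template of \cite{sanz2021wasserstein} and derive a per-step local error bound for the UBU scheme, then propagate this through the weighted norm $\|\cdot\|_{a,b}$ using a discrete Gronwall-type argument that exploits stationarity for uniform moment control of $V_s$ and $\nabla \uu(X_s)$.

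First I would write the synchronously-coupled one-step recursion for the error. Subtracting \eqref{eq:disc_v}--\eqref{eq:disc_x} from \eqref{eq:cont_v}--\eqref{eq:cont_x} on $[kh,(k+1)h]$, the It\^o integrals cancel, yielding
\begin{align*}
\Delta^{k+1}_v &= \mathcal{E}(h)\Delta^k_v + \int_{kh}^{(k+1)h}\mathcal{E}((k+1)h - s)\nabla \uu(X_s)\,ds - h\mathcal{E}(h/2)\nabla \uu(y_k),\\
\Delta^{k+1}_x &= \Delta^k_x + \mathcal{F}(h)\Delta^k_v + \int_{kh}^{(k+1)h}\mathcal{F}((k+1)h - s)\nabla \uu(X_s)\,ds - h\mathcal{F}(h/2)\nabla \uu(y_k).
\end{align*}
Then I would split the local truncation error into three pieces: (i) a \emph{midpoint quadrature error} in which $\nabla \uu(X_s)$ is replaced by $\nabla \uu(X_{kh+h/2})$ and the integral is compared to the midpoint rule $h\mathcal{E}(h/2)\nabla \uu(X_{kh+h/2})$, which by symmetry around $s = kh+h/2$ annihilates the $O(h^2)$ term and leaves an $O(h^3)$ residual depending on $\|\nabla^2 \uu(X_{kh+h/2})\,V_{kh+h/2}\|$; (ii) a \emph{gradient mismatch} $h\mathcal{E}(h/2)[\nabla \uu(X_{kh+h/2}) - \nabla \uu(y_k)]$ controlled via Assumption \ref{ass:hessian_lip} through $\|X_{kh+h/2} - y_k\|$; and (iii) a stochastic remainder from the higher-order Taylor terms of $s\mapsto \nabla \uu(X_s)$ expanded about $s=kh+h/2$.

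The crucial observation is that the UBU point $y_k$ was designed precisely so that $y_k$ differs from $X_{kh+h/2}$ only by the current error $(\Delta^k_x,\Delta^k_v)$ (propagated by the exact half-step $\mathcal{U}$) plus a deterministic drift contribution of order $h^2$ from $\int_{kh}^{kh+h/2}\mathcal{F}\nabla \uu(X_s)\,ds$ --- the Brownian contributions in $y_k$ and $X_{kh+h/2}$ agree exactly. Thus the mismatch (ii) contributes $O(h^3)\sqrt{d}$ in $L^2$ once Assumption \ref{ass:Lip} and stationarity are invoked. Under only Assumption \ref{ass:hessian_lip} the bound on (i) picks up a factor $\sqrt{d}\cdot\sqrt{d}=d$ via $\|\nabla^3 \uu [\xi,\xi]\|$ with $\xi$ Gaussian, whereas under Assumption \ref{ass:strong_hessian_lip} the tensor-norm estimate of Definition \ref{def:strongHessianLipschitznorm} controls $\E\|\nabla^3 \uu(X_{kh+h/2})[V_{kh+h/2},V_{kh+h/2}]\|$ by $M_1^s \sqrt{d}$, which is the source of the improved $\sqrt{d}$ dependence replacing $d$ in $\mathbf{C}$.

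Finally I would assemble these estimates in the weighted norm. A one-step contraction/stability estimate of the form $\|(\Delta^{k+1}_x,\Delta^{k+1}_v)\|_{L^2,a,b} \le (1+Ch)\|(\Delta^k_x,\Delta^k_v)\|_{L^2,a,b} + L_k$, with $L_k$ bounded in $L^2$ by $O(h^3\sqrt{d})$ deterministic and $O(h^{5/2}\sqrt{d})$ stochastic contributions (the latter via It\^o isometry on the Brownian remainder in (iii)), is iterated from $(\Delta^0_x,\Delta^0_v)=0$. Discrete Gronwall yields the exponential prefactor $e^{3\max\{\gamma,2M/\gamma\}hk}$; the accumulation of the deterministic local errors produces the $(k+1)h^3\sqrt{d}$ terms, while the accumulation of the stochastic local errors, summed under It\^o isometry, produces the $\sqrt{(k+1)h^5}$ term in $\mathbf{C}$. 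The stationarity of $(X_0,V_0)\sim \pi$ is essential: it gives time-uniform moment bounds on $V_s$ and $\nabla \uu(X_s)$, preventing the $(k+1)$ factor from blowing up with additional dimension factors.

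The main obstacle will be the careful expansion needed to extract the symmetry cancellation in (i) to full $O(h^3)$ rather than $O(h^2)$, and to verify that the Brownian-noise pieces in $y_k$ and $X_{kh+h/2}$ really agree so that only deterministic drift leakage of order $h^2$ remains in (ii). Obtaining the sharp $\sqrt{d}$ under Assumption \ref{ass:strong_hessian_lip} (rather than the naive $d$) also requires careful application of the $\|\cdot\|_{\{1,2\}\{3\}}$ norm bound to the relevant cubic form in Gaussian increments, since a crude operator-norm estimate on $\nabla^3 \uu$ would only give the weaker $d$ bound.
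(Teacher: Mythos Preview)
Your proposal is correct and follows essentially the same approach as the paper. The paper's own proof is in fact just a one-line citation (``Follows from exactly the same proof as \cite[Lemma 13]{schuh2024convergence} with the choice of $\alpha = 5/2$ in $L^{2}$ rather than $L^{1}$ and the equivalence of norms''), and the Sanz-Serna/Schuh machinery you outline --- one-step recursion via synchronous coupling, It\^o--Taylor expansion of $\nabla\uu(X_s)$ around the midpoint to exploit the UBU symmetry, stationarity for uniform moment bounds, discrete Gronwall, and the tensor-norm argument for the $\sqrt d$ refinement under Assumption~\ref{ass:strong_hessian_lip} --- is precisely what that citation unpacks to.
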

\begin{proof}
    The result follows using the same argument as was used to establish \cite[Lemma 13]{schuh2024convergence} with the choice of $\alpha = 5/2$ in $L^{2}$ rather than $L^{1}$ and using the equivalence of norms.
\end{proof}

\begin{lemma}\label{lemma:moment_bounds}
   Considering a potential of the form $\uu(x)=\sum^{N_{m}}_{i=1}\uu_{i}(x)$, where we assume that $\uu$ and each $\uu_{i}$ for $i = 1,...,N_{m}$ share the same minimizer $x^{*} \in \mathbb{R}^{2d}$ and each $\nabla^{2}\uu_{i} \prec M/N_{m}$ has a $M/N_{m}$-Lipschitz gradient, where $\uu$ is $M$-$\nabla$Lipschitz and $m$-strongly convex we have the following moment bound
   \[
   \int_{\mathbb{R}^{d}}\|\nabla \uu_{i}(x)\|e^{-\uu(x)}dx \leq \frac{M}{N_{m}}\sqrt{\frac{d}{m}}.
   \]
\end{lemma}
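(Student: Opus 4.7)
The plan is to reduce the bound to a standard second-moment estimate under the log-concave target $\pi$. Throughout I interpret the integral as $\mathbb{E}_{\pi}[\|\nabla \uu_i(X)\|]$ (i.e., assuming $e^{-\uu}$ is normalized, which is the convention used in the paper).

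First I would exploit the shared minimizer. Since $\nabla \uu_i(x^*) = 0$ and $\uu_i$ has $M/N_m$-Lipschitz gradient by hypothesis, for every $x \in \mathbb{R}^d$
\[
\|\nabla \uu_i(x)\| = \|\nabla \uu_i(x) - \nabla \uu_i(x^*)\| \leq \frac{M}{N_m}\|x - x^*\|.
\]
Integrating against $\pi$ and applying Jensen's inequality to the concave function $\sqrt{\cdot}$,
\[
\mathbb{E}_\pi \|\nabla \uu_i(X)\| \leq \frac{M}{N_m}\,\mathbb{E}_\pi\|X-x^*\| \leq \frac{M}{N_m}\sqrt{\mathbb{E}_\pi \|X-x^*\|^2}.
\]

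Next I would establish the standard bound $\mathbb{E}_\pi\|X-x^*\|^2 \leq d/m$ for $m$-strongly log-concave targets with mode $x^*$. The cleanest route is integration by parts: since $\nabla\cdot(x-x^*) = d$ and $\pi \propto e^{-\uu}$, one has $\nabla\cdot\bigl(e^{-\uu(x)}(x-x^*)\bigr) = \bigl(d - \langle \nabla \uu(x), x-x^*\rangle\bigr)e^{-\uu(x)}$. Integrating this identity over $\mathbb{R}^d$ (assuming sufficient decay, which follows from strong convexity) yields
\[
\mathbb{E}_\pi\langle \nabla \uu(X), X-x^*\rangle = d.
\]
By $m$-strong convexity of $\uu$ and $\nabla \uu(x^*)=0$, $\langle \nabla \uu(X), X-x^*\rangle \geq m\|X-x^*\|^2$, so $\mathbb{E}_\pi\|X-x^*\|^2 \leq d/m$.

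Combining these two steps gives exactly
\[
\mathbb{E}_\pi \|\nabla \uu_i(X)\| \leq \frac{M}{N_m}\sqrt{\frac{d}{m}},
\]
which is the claimed bound. The only mild technical obstacle is justifying the integration by parts (boundary terms vanish because strong convexity forces $e^{-\uu}(x-x^*)$ to decay faster than any polynomial at infinity), and verifying that $\uu$ and $\uu_i$ indeed share the minimizer $x^*$ — under the assumption of the lemma this is given, but more generally it would follow from $\nabla \uu(x^*) = \sum_i \nabla \uu_i(x^*) = 0$ combined with a sign/convexity argument.
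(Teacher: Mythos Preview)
Your proof is correct and follows essentially the same approach as the paper: reduce to a moment bound via the Lipschitz property of $\nabla \uu_i$ at the shared minimizer, then invoke $\mathbb{E}_\pi\|X-x^*\|^2 \le d/m$ for the $m$-strongly log-concave target. The only difference is that the paper obtains the moment bound by citing an external lemma, whereas you supply the integration-by-parts argument directly.
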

\begin{proof}
    This result follows from \cite[Lemma 30]{PM21} and the fact that $\nabla \uu_{i}$ is Lipschitz with minimizer $x^{*} \in \mathbb{R}^{d}$.
\end{proof}
\begin{remark}
    We do not need to make the assumption that all the $\uu_{i}$'s have the same minimizer, but we do this in order to simplify the estimates.
\end{remark}

\begin{lemma}\label{lem:SwithoutR}
    Let $f:\mathbb{R}^{d} \to \mathbb{R}$ of the form $f = \sum^{N_{D}}_{i=1}f_{i}$ have an unbiased estimator $\mathcal{G}(\cdot, \omega) = Nf_{\omega}$, where the random variable $\omega \in [N_{D}]^{N_{b}}$, then we consider a sequence of random variables $(\omega_{i})^{N_{m}}_{i=1}$ which are sampled uniformly without replacement, where $N_{m} = N_{D}/N_{b} \in \mathbb{N}$. $(x_{i})^{N_{m}}_{i=0}$ are a sequence of random variables which are independent of $\omega$. In this setting, we have the estimate
    \begin{equation}
    \mathbb{E}\left[\left\|\sum^{N_{m}}_{i=1}\left(\mathcal{G}(x_{i},\omega_{i}) - f(x_{i})\right)\right\|^{2}\right] \leq \frac{5}{2}\sum^{N_{m}}_{i=1}\mathbb{E}\left[\left\|\left(\mathcal{G}(x_{i},\omega_{i}) - f(x_{i})\right)\right\|^{2}\right].
    \end{equation}
\end{lemma}

\begin{proof}
    Firstly, we expand the square and we consider the cross terms which are of the form $\mathbb{E}\left[\left(f(x_{i}) - \mathcal{G}(x_{i},\omega_{i})\right)\left(f(x_{j}) - \mathcal{G}(x_{j},\omega_{j})\right)\right]$ for $i\neq j$. Let $\omega'_{j}=\omega_j$ with probability $1-\frac{1}{N_m}$, and $\omega_j'=\omega_i$ with probability $\frac{1}{N_m}$. Then it is easy to see that $\omega_j'$ is uniformly distributed on the set $\{1,\ldots, N_{m}\}$, independently of $\omega_{i}$. As a consequence
    \begin{align*}
    &\mathbb{E}\left[\left(f(x_{i}) - \mathcal{G}(x_{i},\omega_{i})\right)\left(f(x_{j}) - \mathcal{G}(x_{j},\omega_{j})\right)\right] = \mathbb{E}\left[\left(f(x_{i}) - \mathcal{G}(x_{i},\omega_{i})\right)\left(f(x_{j}) - \mathcal{G}(x_{j},\omega'_{j})\right)\right] \\
    &+ \mathbb{E}\left[\left(f(x_{i}) - \mathcal{G}(x_{i},\omega_{i})\right)\left(\left(f(x_{j}) - \mathcal{G}(x_{j},\omega_{j})\right)- \left(f(x_{j}) - \mathcal{G}(x_{j},\omega'_{j})\right)\right)\right],
    \end{align*}
    then the first term has zero expectation. The second term can be written as
    \begin{align*}
        &\mathbb{E}\left[\left(f(x_{i}) - \mathcal{G}(x_{i},\omega_{i})\right)\left(\left(f(x_{j}) - \mathcal{G}(x_{j},\omega_{j})\right)- \left(f(x_{j}) - \mathcal{G}(x_{j},\omega'_{j})\right)\right)\mathbbm{1}[\omega_{j}\neq \omega_{j'}]\right]\leq\\
        &\frac{1}{2}\mathbb{E}\left[\left(f(x_{i}) - \mathcal{G}(x_{i},\omega_{i})\right)^{2}\mathbbm{1}[\omega_{j}\neq \omega_{j'}]\right] \\
        &+ \frac{1}{2}\mathbb{E}\left[\left(\left(f(x_{j}) - \mathcal{G}(x_{j},\omega_{j})\right)\mathbbm{1}[\omega_{j}\neq \omega_{j'}]- \left(f(x_{j}) - \mathcal{G}(x_{j},\omega'_{j})\right)\mathbbm{1}[\omega_{j}\neq \omega_{j'}]\right)^{2}\right]\\
        &\leq \frac{1}{2N_{m}}\mathbb{E}\left[\left(f(x_{i}) - \mathcal{G}(x_{i},\omega_{i})\right)^{2}\right] + \frac{1}{N_{m}}\mathbb{E}\left[\left(f(x_{j}) - \mathcal{G}(x_{j},\omega_{j})\right)^{2}+ \left(f(x_{j}) - \mathcal{G}(x_{j},\omega'_{j})\right)^{2}\right],
    \end{align*}
    and summing up the terms we have the required result.
\end{proof}

\begin{proof}[Proof of Theorem \ref{theorem:SMS-UBU-large-step}]

We estimate the difference between the full-gradient UBU scheme and the stochastic gradient scheme UBU scheme as follows. We use the notation $\Delta^{k}_{x}$ and $\Delta^{k}_{v}$ to be the difference in position and velocity at iteration $k \in \mathbb{N}$ respectively. We also use synchronously coupled Brownian motion.

Using \eqref{eq:disc_v} and \eqref{eq:disc_v_stoch} we have
\begin{align*}
    \Delta^{k}_{v} &= \mathcal{E}(h)^{k}\Delta^{0}_{v} - h\mathcal{E}(h/2)\sum^{k}_{i=1}\mathcal{E}(h)^{k-(i-1)}\left(\nabla \uu(y_{i-1}) - N_{m}\nabla \uu_{{i+1}}(\Tilde{y}_{i-1})\right)\\
    &= \mathcal{E}(h)^{k}\Delta^{0}_{v} - h\mathcal{E}(h/2)\sum^{\lfloor \frac{k}{2N_{m}}\rfloor -1}_{i=0}\sum^{2N_{m}-1}_{j=0}\mathcal{E}(h)^{k-1-2iN_{m} - j}\left(\nabla \uu(y_{2N_{m}i + j}) - N_{m}\nabla \uu_{{2iN_{m} + j +1}}(\Tilde{y}_{2iN_{m} + j})\right)\\
    &- h\mathcal{E}(h/2)\sum^{k}_{i=2N_{m}\lfloor k/2N_{m} \rfloor}\mathcal{E}(h)^{k-i}\left(\nabla \uu(y_{i-1}) - N_{m}\nabla \uu_{{i+1}}(\Tilde{y}_{i-1})\right),
\end{align*}
and therefore using the fact that the $\omega_{i}$ random variables are independent between each block of size $2N_{m}$ and applying Lemma \ref{lem:SwithoutR} to the forward and backward sweeps individually we have the following $L^{2}$ estimate
\begin{align*}
    &\|\Delta^{k}_{v}\|_{L^{2}} \leq \|\Delta^{0}_{v}\|_{L^{2}} + \sum^{k-1}_{i=0} hM\|y_{i} - \Tilde{y}_{i}\|_{L^{2}} + \sqrt{5}h\mathcal{E}(h/2)\sqrt{\sum^{k-1}_{i=0}\left\|\nabla \uu(y_{i}) - N_{m}\nabla \uu_{{i+1}}(y_{i})\right\|^{2}_{L^{2}}}.
\end{align*}
Considering $x$ we similarly have 
\begin{align*}
    &\|\Delta^{k}_{x}\|_{L^{2}} \leq \|\Delta^{0}_{x}\|_{L^{2}} + \sum^{k-1}_{i=0}\left[h^{2}M\|y_{i} - \Tilde{y}_{i}\|_{L^{2}} + h\|\Delta^{i}_{v}\|_{L^{2}}\right] +\sqrt{5}h^2 \sqrt{\sum^{k-1}_{i=0}\left\|\nabla \uu(y_{i}) - N_{m}\nabla \uu_{{i+1}}(y_{i})\right\|^{2}_{L^{2}}},
\end{align*}
and therefore if they are initialized at the same point we can combine the estimates and get 
\begin{align*}
    &\|(\Delta^{k}_{x},\Delta^{k}_{v})\|_{L^{2},a,0} \leq 4h\sqrt{M}\sum^{k-1}_{i=0}\|(\Delta^{i}_{x},\Delta^{i}_{v})\|_{L^{2},a,0}+\frac{2\sqrt{5}h}{\sqrt{M}}\sqrt{\sum^{k -1}_{i=0}\left\|\nabla \uu(y_{i}) - N_{m}\nabla \uu_{{i+1}}(y_{i})\right\|^{2}_{L^{2}}}\\
    &\leq 4h\sqrt{M}\sum^{k-1}_{i=0}\|(\Delta^{i}_{x},\Delta^{i}_{v})\|_{L^{2},a,0}+2\sqrt{5}h\sqrt{M}C_{SG} \sqrt{\sum^{k -1}_{i=0}\left\|y_{i} - X_{(i+1/2)h}\right\|^{2}_{L^{2}} + k\frac{d}{m}},\\
    \intertext{where we have used that $\|x-x^*\|_{L^{2}} \leq \sqrt{d/m}$ for $x\sim \pi$. By using that $\|y_{i} - X_{(i+1/2)}\|_{L^{2}} \leq C\frac{M}{m}h\sqrt{d}$ (see \cite[Proposition H.3]{ububu}) we have }
    &\leq 4h\sqrt{M}\sum^{k-1}_{i=0}\|(\Delta^{i}_{x},\Delta^{i}_{v})\|_{L^{2},a,0}+Ch\sqrt{M}C_{SG} \sqrt{k\frac{M^{2}}{m^{2}}h^{2}d + k\frac{d}{m}}\\
    &\leq Ch\sqrt{M}C_{SG}e^{4hk\sqrt{M}}\sqrt{kd\left(\frac{M^{2}}{m^{2}}h^{2} + \frac{1}{m}\right)}.
\end{align*}
Now using an interpolation argument, which is presented in detail later in the proof of Theorem \ref{theorem:SG-UBU} with blocks of size $\lfloor 1/4h\sqrt{M}\rfloor$ we have the required estimate by combining the result with \cite[Proposition H.3]{ububu} or Proposition \ref{prop:full_gradient} and using contraction of the UBU scheme at each step with different convexity constants $(m_{i})^{N_{m}}_{i=1}$.
\end{proof}

\begin{lemma}\label{lem:non_asym_local} Under the same conditions as Theorem \ref{theorem:SMS-UBU-large-step} we have that for $i \leq 2N_{m}-1$
\[
\|\Tilde{\Delta}^{i}_{x}\|_{L^{2}} + h\|\Tilde{\Delta}^{i}_{v}\|_{L^{2}}  \leq \sqrt{2}\|\Tilde{\Delta}^{0}\|_{L^{2},a,b} + \mathcal{O}\Bigg(h^{2}N_{m}M(C_{SG}+1)e^{8hN_{m}\gamma}\sqrt{N_{m}d\left(\frac{\gamma^{4}h^{2}}{m^{2}}+\frac{1}{m}\right)}\Bigg).
\]
\end{lemma}
\begin{proof}
    Assuming that $\Tilde{\Delta}^{0} = 0$, we have that for all $i = 0,...,2N_{m}-1$ (using \cite[Proposition H.3]{ububu} or Theorem \ref{theorem:SMS-UBU-large-step}) that
\begin{align*}
\|\Tilde{\Delta}^{i}_{v}\|_{L^{2}}&\leq\mathcal{O}\left( hM(C_{SG}+1) e^{8hN_{m}\gamma}\sqrt{N_{m}d\left(\frac{\gamma^{4}h^{2}}{m^{2}}+\frac{1}{m}\right)}\right)
\end{align*}
and using  \eqref{eq:cont_x}, \eqref{eq:disc_x_stoch} and discrete Gr\"{o}nwall inequality we have
\begin{align*}
&\|\Tilde{\Delta}^{i}_{x}\|_{L^{2}} \leq \mathcal{O}\Bigg(h^{2}N_{m}\sqrt{M}C_{SG}e^{8hN_{m}\gamma}\sqrt{N_{m}d\left(\frac{\gamma^{4}h^{2}}{m^{2}}+\frac{1}{m}\right)} \\
&+ \sum^{i-1}_{j=0}\int^{h}_{0}\|\mathcal{F}(h/2)N_{m}\nabla f_{j}(\Tilde{y}_{j}) - \mathcal{F}(h-s)\nabla f(X_{s + jh})\|_{L^{2}}ds\Bigg)\\
&\leq \mathcal{O}\Bigg(h^{2}N_{m}M(C_{SG}+1)e^{8hN_{m}\gamma}\sqrt{N_{m}d\left(\frac{\gamma^{4}h^{2}}{m^{2}}+\frac{1}{m}\right)} + \sum^{i-1}_{j = 0}h^{2}M(\|\Tilde{\Delta}^{j}_{x}\|_{L^{2}} + h\|\Tilde{\Delta}^{j}_{v}\|_{L^{2}})\Bigg)\\
&\leq \mathcal{O}\Bigg(h^{2}N_{m}M(C_{SG}+1)e^{8hN_{m}\gamma}\sqrt{N_{m}d\left(\frac{\gamma^{4}h^{2}}{m^{2}}+\frac{1}{m}\right)}\Bigg).
\end{align*}
Then for $\Tilde{\Delta}^{0} \neq 0$ we can use Proposition \ref{prop:Wasserstein} and the triangle inequality to achieve the required result.

\end{proof}

\begin{proposition}\label{prop:SMS-UBU}
Suppose we have an SMS-UBU discretization with a potential that is $M$-$\nabla$ Lipschitz and $M_{1}$-Hessian Lipschitz and of the form $\uu = \sum^{N_{m}}_{i=1} \uu_{i}$, where $\nabla^{2}\uu_{i} \prec MI_{D}/N_{m}$ for all $i = 1,...,N_{m}$, $h<\min\left\{\frac{1}{2\gamma},\frac{1}{2\sqrt{M}}\right\}$, for $l \in \mathbb{N}$
$(\Tilde{\Delta}^{2lN_{m}}_{x},\Tilde{\Delta}^{2lN_{m}}_{v}) := (\Tilde{x}_{2lN_{m}} - X_{2lN_{m}h},\Tilde{v}_{2lN_{m}} - V_{2lN_{m}h})$, where the discretization and continuous dynamics have synchronously coupled Brownian motion and they are all initialized from the target measure, $(X_{0},V_{0})\sim \pi$. Then if $\uu$ is $M_{1}$-Hessian Lipschitz we have that
    \begin{align*}
    \|(\Tilde{x}_{2lN_{m}} - X_{2lN_{m}h},\Tilde{v}_{2lN_{m}} - V_{2lN_{m}h})\|_{L^{2},a,b} &\leq  e^{16h\gamma l N_{m}}C(\gamma,m,M,M_{1},N_{m},C_{SG})\left(lh^{3}d + h^{5/2}\sqrt{ld}\right),
    \end{align*}
    and if $\uu$ is $M^{s}_{1}$-strongly Hessian Lipschitz
    \begin{align*}
    \|(\Tilde{x}_{2lN_{m}} - X_{2lN_{m}h},\Tilde{v}_{2lN_{m}} - V_{2lN_{m}h})\|_{L^{2},a,b} &\leq  e^{16h\gamma l N_{m}}C(\gamma,m,M,M^{s}_{1},N_{m})\left(lh^{3} + h^{5/2}\sqrt{l}\right)\sqrt{d}.
    \end{align*}
    If we impose the stronger stepsize restriction $h<1/(2\gamma N_{m})$, then our bounds simplify to the form
    \begin{align*}
        \|(\Tilde{x}_{2lN_{m}} - X_{2lN_{m}h},\Tilde{v}_{2lN_{m}} - V_{2lN_{m}h})\|_{L^{2},a,b} &\leq e^{16h\gamma l N_{m}}C(\Tilde{\gamma},\Tilde{m},\Tilde{M},\Tilde{M}^{s}_{1},C_{SG})\left(lh^{3}N^{4}_{m} + h^{5/2}\sqrt{l}N^{13/4}_{m}\right)\sqrt{d},
    \end{align*}
    where $\Tilde{\gamma} = \gamma/\sqrt{N_{m}}, \Tilde{m} = m/N_{m}, \Tilde{M} = M/N_{m}$ and $\Tilde{M}^{s}_{1} = M^{s}_{1}/N_{m}$.
\end{proposition}
\begin{remark}
    We could instead assume that $\nabla^{2}\uu_{i} \prec M_{i}I_{D}$, for all $i = 1,...,N_{m}$, each $\uu_{i}$ having a different gradient Lipschitz constant, but to simplify notation in the argument we assume the stronger assumption $\nabla^{2}\uu_{i} \prec MI_{D}/N_{m}$.
\end{remark}
\begin{proof}
We consider the position and velocity components separately.\\
\textbf{Velocity component}\\
    Firstly, we write 
    \[
    \Tilde{v}_{2N_{m}} - V_{2N_{m}h} =  \underbrace{\Tilde{v}_{2N_{m}} -  v_{2N_{m}}}_{\textnormal{(I)}} +  \underbrace{v_{2N_{m}} - V_{2N_{m}h}}_{\textnormal{(II)}},
    \]
    and similarly, in $x$, we can bound the (II) using the full gradient scheme bounds in Proposition \ref{prop:full_gradient}. Then we consider (I), the distance to full gradient discretization. We define $(\delta^{k}_{x},\delta^{k}_{v}) := (\Tilde{x}_{k} - x_{k},\Tilde{v}_{k} - v_{k})$, for $k = 0,....,2N_{m}$, then
    \begin{align*}
        \delta^{2N_{m}}_{v} &= \mathcal{E}(h)\delta^{2N_{m}-1}_{v} - h\mathcal{E}(h/2)(N_{m}\nabla \uu_{1}(\Tilde{y}_{2N_{m}-1}) - \nabla \uu(y_{2N_{m}-1}))\\
        &= \mathcal{E}^{2N_{m}}(h)\delta^{0}_{v} -h\mathcal{E}(h/2)\Bigg[\sum^{N_{m}-1}_{i=0}\mathcal{E}(h)^{N_{m}-i-1}\left(N_{m}\nabla \uu_{N_{m} - i}(\Tilde{y}_{N_{m} + i}) - \nabla \uu(y_{N_{m}+i})\right) \\
        &+ \sum^{N_{m}-1}_{i=0}\mathcal{E}(h)^{i+ N_{m}}\left(N_{m}\nabla \uu_{N_{m} - i}(\Tilde{y}_{N_{m} - i - 1}) - \nabla \uu(y_{N_{m}-i-1})\right)\Bigg]\\
        &= \mathcal{E}^{2N_{m}}(h)\delta^{0}_{v} + (\star),
    \end{align*}
    where without loss of generality we have assumed that they are ordered $1,...,N_{m}$ and allow for random permutations of this ordering.

    We now consider
    \begin{align*}
        &(\star)= -h\sum^{N_{m}-1}_{i=0}\Bigg[C^{i}_{N_{m}-i} + N_{m}\mathcal{E}(h)^{N_{m}-i-1/2}\left(\nabla \uu_{N_{m}-i}(\Tilde{y}_{N_{m}+i}) - \nabla \uu_{N_{m}-i}(X_{(N_{m}+i+1/2)h})\right) \\
        &+  N_{m}\mathcal{E}(h)^{N_{m}+i+1/2}\left(\nabla \uu_{N_{m}-i}(\Tilde{y}_{N_{m}-i-1}) - \nabla \uu_{N_{m}-i}(X_{(N_{m}-i-1/2)h})\right) - C^{i} \\
        &- \mathcal{E}(h)^{N_{m}-i-1/2}\left(\nabla \uu(y_{N_{m}+i}) - \nabla \uu(X_{(N_{m}+i+1/2)h})\right) \\
        &- \mathcal{E}(h)^{N_{m}+i+1/2}\left(\nabla \uu(y_{N_{m}-i-1}) - \nabla \uu(X_{(N_{m}-i-1/2)h})\right)\Bigg],
    \end{align*}
where 
\[
C^{j}_{i} := \mathcal{E}(h)^{N_{m}+j+1/2}\nabla \uu_{i}(X_{(N_{m}-j-1/2)h}) - 2\mathcal{E}(h)^{N_{m}}\nabla \uu_{i}(X_{N_{m}h}) + \mathcal{E}(h)^{N_{m}-j-1/2}\nabla \uu_{i}(X_{(N_{m} + j + 1/2)h}),
\]
 and
\[
C^{j} := \mathcal{E}(h)^{N_{m}+j+1/2}\nabla \uu(X_{(N_{m}-j-1/2)h}) - 2\mathcal{E}(h)^{N_{m}}\nabla \uu(X_{N_{m}h}) + \mathcal{E}(h)^{N_{m}-j-1/2}\nabla \uu(X_{(N_{m} + j + 1/2)h}),
\]
for $i,j \in \{1,...,N_{m}\}$. 

Considering the terms excluding the $C^{j}_{i}$ and $C^{i}$, they can be bounded in $L^{2}$ by 
\begin{align*}
  &hM\sum^{N_{m}-1}_{i=0}\Bigg[\|\Tilde{y}_{N_{m} + i} - X_{(N_{m} + i + 1/2)h}\|_{L^{2}} + \|\Tilde{y}_{N_{m} - i - 1} - X_{(N_{m} - i - 1/2)h}\|_{L^{2}} \\
   &+ \|y_{N_{m} + i} - X_{(N_{m} + i + 1/2)h}\|_{L^{2}} + \|y_{N_{m} - i - 1} - X_{(N_{m} - i - 1/2)h}\|_{L^{2}} \Bigg]\\
   &\leq 2hM\sum^{N_{m}-1}_{i=0}\Bigg[\left(\|\Tilde{\Delta}^{N_{m} + i}_{x}\|_{L^{2}} + h\|\Tilde{\Delta}^{N_{m} + i}_{v}\|_{L^{2}} + h^{2}\sqrt{Md}\right)  \\
   &+\left(\|\Delta^{N_{m} -i-1}_{x}\|_{L^{2}} + h\|\Delta^{N_{m} - i-1}_{v}\|_{L^{2}} + h^{2}\sqrt{Md}\right)\Bigg].
\end{align*}
By Lemma \ref{lem:non_asym_local} (noting that in the full gradient case we can consider Lemma \ref{lem:non_asym_local} $C_{SG} = 0$ and $N_{m} = 1$) we have that the terms, excluding  $C^{j}_{i}$ and $C^{i}$,  can be bounded in $L^{2}$ by 
\begin{align*}
  &hM\sum^{N_{m}-1}_{i=0}\Bigg[\|\Tilde{y}_{N_{m} + i} - X_{(N_{m} + i + 1/2)h}\|_{L^{2}} + \|\Tilde{y}_{N_{m} - i - 1} - X_{(N_{m} - i - 1/2)h}\|_{L^{2}} \\
   &+ \|y_{N_{m} + i} - X_{(N_{m} + i + 1/2)h}\|_{L^{2}} + \|y_{N_{m} - i - 1} - X_{(N_{m} - i - 1/2)h}\|_{L^{2}} \Bigg]\\
   &\leq \mathcal{O}\Bigg( \sqrt{2}hMN_{m}\Bigg[\|\Delta^{0}\|_{L^{2},a,b} + \|\Tilde{\Delta}^{0}\|_{L^{2},a,b}\\
   &+ e^{8hN_{m}\gamma}\left(h^{2}N_{m}M(C_{SG}+1)\sqrt{N_{m}d\left(\frac{\gamma^{4}h^{2}}{m^{2}}+\frac{1}{m}\right)}+ \frac{\gamma^{2}}{M}(\mathbf{C} + h^{3}N_{m}\gamma\sqrt{M})\right)\Bigg).
\end{align*}

We can then use the It\^{o}-Taylor expansion applied to $\mathcal{G}(t) = \mathcal{E}(2N_{m}h - t)\nabla f_{i}(X_{t})$ to bound the $C^{j}_{i}$ as follows
\begin{align*}
    \mathcal{G}(N_{m}h) &= \mathcal{G}((N_{m}-j-1/2)h) + (j+1/2)h \mathcal{G}'((N_{m}-j-1/2)h)\\
    &+ \int^{(j+1/2)h}_{0}\int^{s}_{0}d(\mathcal{G}'(s' + (N_{m}-j-1/2)h))ds,
\end{align*}
and applying It\^{o}'s formula as in \cite{sanz2021wasserstein} we have 
\[
\int^{(j+1/2)h}_{0}\int^{s}_{0}d(\mathcal{G}'(s' + (N_{m}-j-1/2)h))ds = I^{i,j}_{1}(h) + I^{i,j}_{2}(h) + I^{i,j}_{3}(h) + I^{i,j}_{4}(h) + I^{i,j}_{5}(h)
\]
where 
\begin{align*}
&I^{i,j}_{1}(h) := \gamma^{2}\int^{(j+1/2)h}_{0}\int^{s}_{0}\mathcal{E}(2N_{m}-(s' + (N_{m}-j-1/2)h))\nabla \uu_{i}(X_{s' + (N_{m}-j-1/2)h})ds'ds\\
&I^{i,j}_{2}(h) := \gamma\int^{(j+1/2)h}_{0}\int^{s}_{0}\mathcal{E}(2N_{m}-(s' + (N_{m}-j-1/2)h))\nabla^{2} \uu_{i}(X_{s' + (N_{m}-j-1/2)h})V_{s' + (N_{m}-j-1/2)h}ds'ds\\
&I^{i,j}_{3}(h) := \\
&\int^{(j+1/2)h}_{0}\int^{s}_{0}\mathcal{E}(2N_{m}-(s' + (N_{m}-j-1/2)h))\nabla^{3} \uu_{i}(X_{s' + (N_{m}-j-1/2)h})[V_{s' + (N_{m}-j-1/2)h},V_{s'+(N_{m}-j-1/2)h}]ds'ds\\
&I^{i,j}_{4}(h) := -\int^{(j+1/2)h}_{0}\int^{s}_{0}\mathcal{E}(2N_{m}-(s' + (N_{m}-j-1/2)h))\nabla^{2} \uu_{i}(X_{s' + (N_{m}-j-1/2)h})\nabla \uu(X_{s' + (N_{m}-j-1/2)h})ds'ds\\
&I^{i,j}_{5}(h) :=\sqrt{2\gamma}\int^{(j+1/2)h}_{0}\int^{s}_{0}\mathcal{E}(2N_{m}-(s' + (N_{m}-j-1/2)h))\nabla^{2} \uu_{i}(X_{s' + (N_{m}-j-1/2)h})dB_{s'} ds.
\end{align*}
Then  
\[
\mathcal{G}(N_{m}-j-1/2)h) - 2\mathcal{G}(N_{m}h) + \mathcal{G}((N_{m} + j + 1/2)h) = \sum^{5}_{k=1}\left[I^{i,j}_{k}(2h) - 2I^{i,j}_{k}(h)\right],
\]
and using Lemma \ref{lemma:moment_bounds} we have 
\begin{align*}
&\|I^{i,j}_{1}(h)\|_{L^{2}} \leq \mathcal{O}\left(h^{2}N_{m}\gamma^{2}M\sqrt{\frac{d}{m}}\right),\\ &\|I^{i,j}_{2}(h)\|_{L^{2}} \leq  \mathcal{O}\left(h^{2}N_{m}\gamma M  \sqrt{d}\right),\\
&\|I^{i,j}_{3}(h)\|_{L^{2}} \leq \mathcal{O}\left(h^{2}N_{m}M_{1}d\right),\\
&\|I^{i,j}_{4}(h)\|_{L^{2}} \leq \mathcal{O}\left(h^{2}N_{m}M\sqrt{Md}\right),\\
&\|I^{i,j}_{5}(h)\|_{L^{2}} \leq \mathcal{O}\left(h^{3/2}M\sqrt{\gamma N_{m} d}\right),
\end{align*}
where we remove precise constants for readability.  We apply the same bounds for $C^{i}$ with $I^{i}_{j}$ for $j = 1,2,3,4,5$. We then need to be careful with the $I^{i,j}_{5}(h)$ terms which are lower order in $h$ individually, but we can use the fact that they have zero expectation conditional on past events to improve the global error bound as in \cite{sanz2021wasserstein}.

Combining all the previous estimates we have that 
\begin{align*}
    \delta^{2N_{m}}_{v} &= \mathcal{E}^{2N_{m}}(h)\delta^{0}_{v} + \\&\left[(\star) +h\sum^{N_{m}-1}_{i=0}\left(N_{m}I^{N_{m}-i,i}_{5}(2h) - 2N_{m}I^{N_{m}-i,i}_{5}(h)- I^{i}_{5}(2h) + 2I^{i}_{5}(h)\right)\right]\\
    &-h\sum^{N_{m}-1}_{i=0}\left(N_{m}I^{N_{m}-i,i}_{5}(2h) - 2N_{m}I^{N_{m}-i,i}_{5}(h)- I^{i}_{5}(2h) + 2I^{i}_{5}(h)\right).
\end{align*}

Then after $2lN_{m}$ steps, where $l \in \mathbb{N}$, and is initialized at the target measure we have that 
\begin{align*}
&\|\delta^{2lN_{m}}_{v}\|_{L^{2}} \leq \mathcal{O}\Bigg(hMN_{m}\sum^{l-1}_{k=0}\Bigg[\|\Delta^{2kN_{m}}\|_{L^{2},a,b} +\|\Tilde{\Delta}^{2kN_{m}}\|_{L^{2},a,b} \\
&+ h^{2}N_{m}M(C_{SG}+1)e^{8hN_{m}\gamma}\sqrt{N_{m}d\left(\frac{\gamma^{4}h^{2}}{m^{2}}+\frac{1}{m}\right)}\Bigg] \\
&+ l h^{3}N^{3}_{m}\sqrt{d}\left(\gamma^{2} \frac{M}{\sqrt{m}} + M_{1}\sqrt{d}\right) + h^{5/2}N^{5/2}_{m}M\sqrt{l\gamma d}\Bigg).
\end{align*}
where we have used that $\mathbb{E}[I^{ij_{1}}_{5}I^{ij_{2}}_{5}] = 0$ for $j_{1} \neq j_{2}$.\\
\textbf{Position component}\\
Now considering position we write 
    \[
    \Tilde{x}_{2N_{m}} - X_{2N_{m}h} =  \underbrace{\Tilde{x}_{2N_{m}} -  x_{2N_{m}}}_{\textnormal{(I)}} +  \underbrace{x_{2N_{m}} - X_{2N_{m}h}}_{\textnormal{(II)}},
    \]
    and we can bound the (II) using the full gradient scheme bounds in Proposition \ref{prop:full_gradient}. To make computations easier we consider $\delta_{x} + \gamma^{-1}\delta_{v}$ with the previous $\delta_{v}$ bounds.  Then we consider (I), the distance to full gradient discretization and we have 
    \begin{align*}
        \delta^{2N_{m}}_{x} + \gamma^{-1}\delta^{2N_{m}}_{v} &= \delta^{2N_{m}-1}_{x} + \gamma^{-1}\delta^{2N_{m}-1}_{v} - h(N_{m}\nabla \uu_{1}(\Tilde{y}_{2N_{m}-1}) - \nabla \uu(y_{2N_{m}-1}))\\
        &= \delta^{0}_{x} + \gamma^{-1}\delta^{0}_{v}  -h\gamma^{-1}\Bigg[\sum^{N_{m}-1}_{i=0}\left(N_{m}\nabla \uu_{N_{m} - i}(\Tilde{y}_{N_{m} + i}) - \nabla \uu(y_{N_{m}+i})\right) \\
        &+ \sum^{N_{m}-1}_{i=0}\left(N_{m}\nabla \uu_{N_{m} - i}(\Tilde{y}_{N_{m} - i - 1}) - \nabla \uu(y_{N_{m}-i-1})\right)\Bigg]
    \end{align*}
    we can bound this similarly to $\delta_{v}$ with the key step being using an It\^o-Taylor expansion, but for $\mathcal{G}(t) = \nabla f_{i}(X_{t})$ and we have for $l \in \mathbb{N}$, $\|\delta^{2lN_{m}}_{x} + \gamma^{-1}\delta^{2lN_{m}}_{v}\|_{L^{2}}$ can be upper bounded by
    \begin{align*}
        &\mathcal{O}\Bigg(\frac{hMN_{m}}{\gamma}\sum^{l-1}_{k=0}\Bigg[\|\Tilde{\Delta}^{k}\|_{L^{2},a,b}\\
        &+ e^{8hN_{m}\gamma}\left(h^{2}N_{m}M(C_{SG}+1)\sqrt{N_{m}d\left(\frac{\gamma^{4}h^{2}}{m^{2}}+\frac{1}{m}\right)}+ \frac{\gamma^{2}}{M}(\mathbf{C} + h^{3}N_{m}\gamma\sqrt{M})\right)\Bigg] \\
&+ \gamma^{-1}l h^{3}N^{3}_{m}\sqrt{d}\left(\gamma^{2} \frac{M}{\sqrt{m}} + M_{1}\sqrt{d}\right) + \gamma^{-1}h^{5/2}N^{5/2}_{m}M\sqrt{l\gamma d}\Bigg).
    \end{align*}
    Therefore
    \begin{align*}
        &\|\delta^{2lN_{m}}_{x}\|_{L^{2}} \leq \|\delta^{2lN_{m}}_{x} + \gamma^{-1}\delta^{2lN_{m}}_{v}\|_{L^{2}} + \gamma^{-1}\|\delta^{2lN_{m}}_{v}\|_{L^{2}}\\
        &\leq \mathcal{O}\Bigg(\frac{hMN_{m}}{\gamma}\sum^{l-1}_{k=0}\Bigg[\left(\|\Tilde{\Delta}^{2kN_{m}}\|_{L^{2},a,b} + \|\Delta^{2kN_{m}}\|_{L^{2},a,b}\right) \\
        &+ e^{8hN_{m}\gamma}\left(h^{2}N_{m}M(C_{SG}+1)\sqrt{N_{m}d\left(\frac{\gamma^{4}h^{2}}{m^{2}}+\frac{1}{m}\right)}+ \frac{\gamma^{2}}{M}(\mathbf{C} + h^{3}N_{m}\gamma\sqrt{M})\right)\Bigg] \\
&+ \gamma^{-1}l h^{3}N^{3}_{m}\sqrt{d}\left(\gamma^{2} \frac{M}{\sqrt{m}} + M_{1}\sqrt{d}\right) + \gamma^{-1}h^{5/2}N^{5/2}_{m}M\sqrt{l\gamma d}\Bigg).
    \end{align*}
    \textbf{Combining components}\\
    We now assume that $h<1/2\gamma N_{m}$
    Therefore combining terms we have that for $l  \in \mathbb{N}$ and using the equivalence of norms relation \eqref{eq:normequiv} and Proposition \ref{prop:full_gradient} we have 
    \begin{align*}
        &\|(\Tilde{\Delta}^{2lN_{m}}_{x},\Tilde{\Delta}^{2lN_{m}}_{v})\|_{L^{2},a,b} + \|(\Delta^{2lN_{m}}_{x},\Delta^{2lN_{m}}_{v})\|_{L^{2},a,b} \leq \|(\delta^{2lN_{m}}_{x},\delta^{2lN_{m}}_{v})\|_{L^{2},a,b} + 2\|(\Delta^{2lN_{m}}_{x},\Delta^{2lN_{m}}_{v})\|_{L^{2},a,b}\\
        &\leq 2\|\delta^{2lN_{m}}_{x}|_{L^{2}} + \frac{2}{\sqrt{M}}\|\delta^{2lN_{m}}_{v}\|_{L^{2}} + \mathcal{O}\left(\frac{\gamma^{2}}{M}e^{6\max\left\{\gamma,\frac{2M}{\gamma}\right\}hlN_{m}}\left(\mathbf{C} + h^{3}lN_{m}\gamma M^{1/2}d^{1/2}\right)\right)\\
        &\leq \|\delta^{2lN_{m}}_{x} + \gamma^{-1}\delta^{2lN_{m}}_{v}\|_{L^{2}} + \gamma^{-1}\|\delta^{2lN_{m}}_{v}\|_{L^{2}}\\
        &\leq \mathcal{O}\Bigg(h\sqrt{M}N_{m}\sum^{l-1}_{k=0}\left(\|\Tilde{\Delta}^{2kN_{m}}\|_{L^{2},a,b} + \|\Delta^{2kN_{m}}\|_{L^{2},a,b}\right) \\
        &+lh^{3}N^{2}_{m}M^{3/2}(C_{SG}+1) e^{8hN_{m}\gamma}\sqrt{N_{m}d\left(\frac{\gamma^{4}h^{2}}{m^{2}}+\frac{1}{m}\right)} \\
&+ \frac{lh^{3}N^{3}_{m}\sqrt{d}}{\sqrt{M}}\left(\gamma^{2} \frac{M}{\sqrt{m}} + M_{1}\sqrt{d}\right) + h^{5/2}N^{5/2}_{m}\sqrt{M}\sqrt{l\gamma d}  \\
&+ \frac{\gamma^{2}}{M}e^{8hlN_{m}\sqrt{M}}\left(\mathbf{C} + h^{3}lN_{m}\gamma M^{1/2}d^{1/2}\right)\Bigg).
\end{align*}
Similarly, if we relax the assumption to $h<1/2\gamma$ we have the required result.
\end{proof}
\begin{lemma}\label{lem:SMSUBUsweepbounds}
    Assuming that $h<\frac{1}{2\gamma}$ and $\uu$ is $M$-$\nabla$Lipschitz and is of the form $\uu = \sum^{N_{m}}_{i=1}\uu_{i}$, $\gamma \geq \sqrt{8M}$ and each $\uu_{i}$ is $m_{i}$-strongly convex for $i = 1,...,N_{m}$. Then considering the SMS-UBU scheme with iterates $(x_{i},v_{i})_{i \in \mathbb{N}}$ defined by Algorithm \ref{alg:SMS-UBU}, approximating the continuous dynamics $(X_{t},V_{t}) \in \mathbb{R}^{2d}$ both initialized from the target measure and having synchronously coupled Brownian motion. Then we have if $\uu$ is $M_{1}$-Hessian Lipschitz that
    \begin{align*}
    \|(x_{2lN_{m}} - X_{2lN_{m}h},v_{2lN_{m}} - V_{2lN_{m}h})\|_{L^{2},a,b} &\leq  C(\gamma,m,M,M_{1},N_{m},C_{SG})h^{2}d,
    \end{align*}
    and if $\uu$ is $M^{s}_{1}$-strongly Hessian Lipschitz
    \begin{align*}
    \|(x_{2lN_{m}} - X_{2lN_{m}h},v_{2lN_{m}} - V_{2lN_{m}h})\|_{L^{2},a,b} &\leq  C(\gamma,m,M,M^{s}_{1},N_{m},C_{SG})h^{2}\sqrt{d}.
    \end{align*}
    If we impose the stronger stepsize restriction $h<1/(12\gamma N_{m})$, then our bounds simplify to the form
    \begin{align*}
        \|(x_{2lN_{m}} - X_{2lN_{m}h},v_{2lN_{m}} - V_{2lN_{m}h})\|_{L^{2},a,b} &\leq C(\Tilde{\gamma},\Tilde{m},\Tilde{M},\Tilde{M}^{s}_{1},C_{SG})h^{2}N^{5/2}_{m}\sqrt{d},
    \end{align*}
    where $\Tilde{\gamma} = \gamma/\sqrt{N_{m}}, \Tilde{m} = m/N_{m}, \Tilde{M} = M/N_{m}$ and $\Tilde{M}^{s}_{1} = M^{s}_{1}/N_{m}$.
\end{lemma}
\begin{proof}
    Inspired by the interpolation argument used in \cite{LePaWh24} and also used in \cite{schuh2024convergence} we define     $(x_{2lN_{m}},v_{2lN_{m}})$ as $2lN_{m}$ steps of the SMS-UBU scheme and $(X_{2lhN_{m}},V_{2lhN_{m}})$ is defined by \eqref{eq:LD} at time $2lhN_{m} \geq 0$, where these are both initialized at $ (X_{0},V_{0}) = (x_{0},v_{0}) \sim \pi$ and have synchronously coupled Brownian motion. We further define a sequence of interpolating variants $(\mathbf{X}^{(k)}_{2lN_{m}},\mathbf{V}^{(k)}_{2lN_{m}})$ for every $k = 0,...,l$ all initialized $(\mathbf{X}^{(k)}_{0},\mathbf{V}^{(k)}_{0}) = (\mathbf{X}_{0},\mathbf{V}_{0})$, where we define $(\mathbf{X}^{(k)}_{2iN_{m}},\mathbf{V}^{(k)}_{2iN_{m}})^{k}_{i=1} := (X_{2ihN_{m}},V_{2ihN_{m}})^{k}_{i=1}$ and  $(\mathbf{X}^{(k)}_{2iN_{m}},\mathbf{V}^{(k)}_{2iN_{m}})^{l}_{i=k+1}$ by SMS-UBU steps (each step being a full forward backward sweep) and for $k = l$ we have, simply, the continuous diffusion \eqref{eq:LD}. Using Proposition \ref{prop:SMS-UBU} we split up the steps into blocks of size $\Tilde{l}$ as 
    \begin{align*}
        &\|(x_{2lN_{m}} - X_{2lN_{m}h},v_{2lN_{m}} - V_{2lN_{m}h})\|_{L^{2},a,b}\leq 
        \|(\mathbf{X}^{(\lvert l/\Tilde{l}\rvert \Tilde{l})}_{2lN_{m}} - \mathbf{X}^{(l)}_{2lN_{m}h},\mathbf{V}^{(\lvert l/\Tilde{l}\rvert \Tilde{l})}_{2lN_{m}} - \mathbf{V}^{(l)}_{2lN_{m}h})\|_{L^{2},a,b} \\
        &+ \sum^{\lfloor l/\Tilde{l}\rfloor - 1}_{j=0}\|(\mathbf{X}^{(j\Tilde{l})}_{2lN_{m}} - \mathbf{X}^{((j+1)\Tilde{l})}_{2lN_{m}h},\mathbf{V}^{(j\Tilde{l})}_{2lN_{m}} - \mathbf{V}^{((j+1)\Tilde{l})}_{2lN_{m}h})\|_{L^{2},a,b}.
    \end{align*}
    Next, using the fact that the continuous dynamics preserves the invariant measure,  we have contraction of each SMS-UBU forward-backward sweep by Proposition \ref{prop:Wasserstein} and contraction of each UBU step within each forward-backwards sweep (with different convexity constants). Then we have, by considering each term in the previous summation, 
    \begin{align*}
    &\|(\mathbf{X}^{(j\Tilde{l})}_{2lN_{m}} - \mathbf{X}^{((j+1)\Tilde{l})}_{2lN_{m}h},\mathbf{V}^{(j\Tilde{l})}_{2lN_{m}} - \mathbf{V}^{((j+1)\Tilde{l})}_{2lN_{m}h})\|_{L^{2},a,b} \leq\\
    &e^{16h\gamma \Tilde{l} N_{m}}\prod^{N_{m}}_{i=1}\left(1-\frac{hm_{i}N_{m}}{8\gamma}\right)^{2(l- (j+1)\Tilde{l})}C(\Tilde{\gamma},\Tilde{m},\Tilde{M},\Tilde{M}^{s}_{1},C_{SG})\left(lh^{3}N^{4}_{m} + h^{5/2}\sqrt{l}N^{13/4}_{m}\right)\sqrt{d}.
    \end{align*}
     Summing up the terms we get
    \begin{align*}
     &\|(x_{2lN_{m}} - X_{2lN_{m}h},v_{2lN_{m}} - V_{2lN_{m}h})\|_{L^{2},a,b} \leq \frac{e^{16h\gamma \Tilde{l} N_{m}}C(\Tilde{\gamma},\Tilde{m},\Tilde{M},\Tilde{M}^{s}_{1},C_{SG})\left(lh^{3}N^{4}_{m} + h^{5/2}\sqrt{l}N^{13/4}_{m}\right)\sqrt{d}}{1 - \prod^{N_{m}}_{i=1}\left(1-\frac{hm_{i}N_{m}}{8\gamma}\right)^{2\Tilde{l}}}\\
     &\leq \frac{e^{16h\gamma \Tilde{l} N_{m}}C(\Tilde{\gamma},\Tilde{m},\Tilde{M},\Tilde{M}^{s}_{1},C_{SG})\left(lh^{3}N^{4}_{m} + h^{5/2}\sqrt{l}N^{13/4}_{m}\right)\sqrt{d}}{1 - e^{-\sum^{N_{m}}_{i=1}\frac{hm_{i}N_{m}\Tilde{l}}{4\gamma}}}\\
     &\leq e^{16h\gamma \Tilde{l} N_{m}}C(\Tilde{\gamma},\Tilde{m},\Tilde{M},\Tilde{M}^{s}_{1},C_{SG})\left(lh^{3}N^{4}_{m} + h^{5/2}\sqrt{l}N^{13/4}_{m}\right)\sqrt{d}\left(1 + \frac{4\gamma}{\sum^{N_{m}}_{i=1}hm_{i}N_{m}\Tilde{l}}\right),
    \end{align*}
    %
    If we consider smaller stepsizes, i.e. $h<1/(2\gamma N_{m})$ and choose $$\Tilde{l} = \left\lceil 1/16hN_{m}\gamma\right\rceil$$ and using the fact that $\Tilde{l} \leq 1/8hN_{m}\gamma$ for the considered stepsizes. The right-hand side simplifies to 
    \begin{align*}
        \|(x_{2lN_{m}} - X_{2lN_{m}h},v_{2lN_{m}} - V_{2lN_{m}h})\|_{L^{2},a,b} \leq C(\gamma/\sqrt{N_{m}},m/N_{m},M/N_{m},M^{s}_{1}/N_{m})h^{2}N^{5/2}_{m}\sqrt{d},
    \end{align*}
    or $d$ rather than $\sqrt{d}$ if the target is Hessian Lipschitz, but not strongly Hessian Lipschitz. These estimates are uniform in $l$. If we consider the larger stepsize regime $h\gamma N_{m} \leq 1/2$ we can consider the same argument with $\Tilde{l} = \lceil 1/ h\gamma \rceil$, but with an exponential dependence on $h\gamma N_{m}$.
\end{proof}

\begin{theorem}\label{theorem:second_order_bias}
      Assuming that $h<\frac{1}{2\gamma}$ and $\uu$ is $M$-$\nabla$Lipschitz and is of the form $\uu = \sum^{N_{m}}_{i=1}\uu_{i}$, $\gamma \geq \sqrt{8M}$ and each $\uu_{i}$ is $m_{i}$-strongly convex for $i = 1,...,N_{m}$ with minimizer $x^{*}\in \mathbb{R}^{d}$. We consider a SMS-UBU scheme with iterates $(x_{k},v_{k})_{k \in \mathbb{N}}$ defined by Algorithm \ref{alg:SMS-UBU} with stepsize $h$, approximating the continuous dynamics $(X_{t},V_{t}) \in \mathbb{R}^{2d}$ both with friction parameter $\gamma$, initialized at the target measure and having synchronously coupled Brownian motion. We further assume that the stochastic gradients satisfy assumption \ref{assum:stochastic_gradient} Then we have if $\uu$ is $M_{1}$-Hessian Lipschitz we have that
    \begin{align*}
    \|x_{k} - X_{kh}\|_{L^{2}} &\leq  C(\gamma,m,M,M_{1},N_{m},C_{SG})h^{2}d,
    \end{align*}
    and if $\uu$ is $M^{s}_{1}$-strongly Hessian Lipschitz we have that
    \begin{align*}
    \|x_{k} - X_{kh}\|_{L^{2}} &\leq  C(\gamma,m,M,M^{s}_{1},N_{m},C_{SG})h^{2}\sqrt{d}.
    \end{align*}
    If we impose the stronger stepsize restriction $h<1/(12\gamma N_{m})$, then our bounds simplify to the form
    \begin{align*}
        \|x_{k} - X_{k}\|_{L^{2}} &\leq C(\Tilde{\gamma},\Tilde{m},\Tilde{M},\Tilde{M}^{s}_{1},C_{SG})h^{2}N^{5/2}_{m}\sqrt{d},
    \end{align*}
    where $\Tilde{\gamma} = \gamma/\sqrt{N_{m}}, \Tilde{m} = m/N_{m}, \Tilde{M} = M/N_{m}$ and $\Tilde{M}^{s}_{1} = M^{s}_{1}/N_{m}$, and similarly when we don't assume the potential is strongly Hessian Lipschitz.
\end{theorem}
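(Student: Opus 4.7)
The plan is to lift the full-sweep bound from Lemma~\ref{lem:SMSUBUsweepbounds} (which only applies when $k$ is an integer multiple of $2N_m$) to arbitrary $k\in\mathbb{N}$. By the norm equivalence \eqref{eq:normequiv} with $a=1/M$ and $b=1/\gamma$ (satisfying $b^2\leq a/4$ under $\gamma\geq\sqrt{8M}$), one has $\|x_k - X_{kh}\|_{L^2} \leq \sqrt{2}\,\|(x_k - X_{kh},v_k-V_{kh})\|_{L^2,a,b}$, so it suffices to bound the weighted-norm quantity.

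Write $k=2lN_m + r$ with $l=\lfloor k/(2N_m)\rfloor$ and $0\leq r<2N_m$, and introduce an auxiliary SMS-UBU chain $(\bar{x}_j,\bar{v}_j)_{j\ge 0}$ initialized at $(\bar{x}_0,\bar{v}_0)=(X_{2lN_m h},V_{2lN_m h})$, driven by the same Brownian increments and using the same minibatch partitions as the main chain from step $2lN_m$ onwards. The triangle inequality gives
\[
\|(x_k - X_{kh},v_k-V_{kh})\|_{L^2,a,b}\;\leq\; I_1 + I_2,
\]
where $I_1:=\|(x_k-\bar{x}_r,v_k-\bar{v}_r)\|_{L^2,a,b}$ and $I_2:=\|(\bar{x}_r-X_{kh},\bar{v}_r-V_{kh})\|_{L^2,a,b}$. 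For $I_1$, the two SMS-UBU chains $(x_{2lN_m+j},v_{2lN_m+j})_{j\ge 0}$ and $(\bar{x}_j,\bar{v}_j)_{j\ge 0}$ undergo identical updates with synchronously coupled noise and the same minibatch sequence, so the pathwise non-expansivity of a UBU step in $\|\cdot\|_{a,b}$ underlying Proposition~\ref{prop:Wasserstein} (each SMS-UBU step is a UBU step with respect to the strongly convex minibatch potential $N_m\uu_i$) yields $I_1 \leq \|(x_{2lN_m}-X_{2lN_m h},v_{2lN_m}-V_{2lN_m h})\|_{L^2,a,b}$, which is controlled by Lemma~\ref{lem:SMSUBUsweepbounds} via $O(h^2 d)$, $O(h^2\sqrt{d})$, or $O(h^2 N_m^{5/2}\sqrt{d})$ in the three respective regimes of the theorem.

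For $I_2$, both $(\bar{x}_r,\bar{v}_r)$ and $(X_{kh},V_{kh})$ start at the same state $(X_{2lN_m h},V_{2lN_m h})\sim\overline{\pi}$ (by stationarity of $\overline{\pi}$ under the continuous dynamics) and share the same Brownian motion on $[2lN_m h, kh]$, so this is exactly the SMS-UBU-versus-diffusion error analysed in Proposition~\ref{prop:SMS-UBU}, but truncated at $r\leq 2N_m$ steps rather than run over an integer number of full sweeps. The main obstacle is that when $r$ is not an integer multiple of $2N_m$, the symmetric forward/backward pairing of $\mathcal{B}_{\omega_j}$ steps that delivered the key $O(h^2)$ cancellation in Proposition~\ref{prop:SMS-UBU} is incomplete on at most one side, leaving up to $N_m$ unpaired single-minibatch increments. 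Substituting the It\^{o}--Taylor expansions used to control the $C^{j}_{i}$ and $I^{i,j}_{k}$ terms in the proof of Proposition~\ref{prop:SMS-UBU} into these residual contributions shows that, over a time window of length at most $2N_m h$, their cumulative $L^2$ norm is absorbed into the same regime-dependent bounds (using independence of the unpaired stochastic terms across different batches to keep the stochastic cross terms of order $h^{5/2}\sqrt{N_m}$, which is subdominant), and the discrete Gr\"onwall argument of Proposition~\ref{prop:SMS-UBU} then closes the estimate. Combining the bounds on $I_1$ and $I_2$ through the norm equivalence delivers the three claimed estimates.
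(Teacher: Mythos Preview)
Your decomposition $I_1+I_2$ via an auxiliary chain is a reasonable strategy, and the treatment of $I_1$ through the pathwise contraction underlying Proposition~\ref{prop:Wasserstein} is correct. The gap lies entirely in $I_2$.

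First, by passing to the weighted norm $\|\cdot\|_{a,b}$ before splitting, your $I_2$ carries the \emph{velocity} component $\bar v_r - V_{kh}$. Over a partial sweep of $r<2N_m$ steps, the unpaired minibatch kicks contribute to the velocity an amount $h\sum_{j=1}^{r}\mathcal{E}(h)^{r-j}\bigl(N_m\nabla\uu_{\omega_j}(X_{(j-1/2)h})-\nabla\uu(X_{(j-1/2)h})\bigr)$, whose $L^2$ norm is of order $h\sqrt{r}\cdot C_{SG}M\sqrt{d/m}$ (using Assumption~\ref{assum:stochastic_gradient} and Lemma~\ref{lem:SwithoutR}), hence only $O(h)$. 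The It\^o--Taylor machinery of Proposition~\ref{prop:SMS-UBU} does not rescue this: the $O(h^2)$ bound there comes from the \emph{second difference} $C^j_i=\nabla\uu_i(X_{(N_m-j-1/2)h})-2\nabla\uu_i(X_{N_mh})+\nabla\uu_i(X_{(N_m+j+1/2)h})$, which exists only when both the forward and backward contributions with the same batch are present. For an unpaired step, the leading term is the raw stochastic-gradient discrepancy $N_m\nabla\uu_{\omega_j}(X)-\nabla\uu(X)$, which is $O(1)$ in $L^2$, not $O(h)$; the $I^{i,j}_4$ terms (and the $h^{5/2}$ order you invoke) appear only \emph{after} forming $C^j_i$ and are therefore irrelevant here. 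Consequently $I_2$ in the weighted norm is only $O(h)$, and the argument as written yields $\|x_k-X_{kh}\|_{L^2}=O(h)$, not $O(h^2)$.

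The paper sidesteps this by never passing to the weighted norm for the residual. It establishes a crude one-step recursion directly on $\|\tilde\Delta^k_x\|_{L^2}$ and $\|\tilde\Delta^k_v\|_{L^2}$ using only the moment bound of Lemma~\ref{lemma:moment_bounds}, obtaining after $k$ steps from a generic initial discrepancy
\[
\|\tilde\Delta^k_x\|_{L^2}\;\lesssim\; e^{Ckh\sqrt{M}}\Bigl[\|(\tilde\Delta^0_x,\tilde\Delta^0_v)\|_{L^2,a,b}\,(1+hk\sqrt{M})\;+\;(hk)^2 M\sqrt{d/m}\Bigr].
\]
The key observation is that although the velocity error grows like $O(hk)$, it feeds into the position only through an additional factor $h$, so the position error accumulates as $(hk)^2$. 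Applying this with $k=r\le 2N_m$ and with the ``initial'' discrepancy at step $2lN_m$ bounded by Lemma~\ref{lem:SMSUBUsweepbounds} gives both terms of order $h^2$. To repair your argument, you should bound the position component of $I_2$ directly via this crude recursion rather than through the $(a,b)$-norm and the It\^o--Taylor pairing.
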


\begin{proof}
Using Lemma \ref{lemma:moment_bounds} and defining $(\Tilde{\Delta}_{x},\Tilde{\Delta}_{v})$ to be the differences in position and velocity between SMS-UBU and the continuous diffusion, one can show that for $k \in \mathbb{N}$
    \begin{align*}
        \|\Tilde{\Delta}^{k}_{v}\|_{L^{2}} &\leq \|\Tilde{\Delta}^{k-1}_{v}\|_{L^{2}} + hM\|y_{k-1} - X_{(k-1/2)h}\|_{L^{2}} + h\sqrt{Md} + hM\sqrt{\frac{d}{m}}\\
        &\leq (1+h^{2}M)\|\Tilde{\Delta}^{k-1}_{v}\|_{L^{2}} + hM\|\Tilde{\Delta}^{k-1}_{x}\|_{L^{2}} + 3hM\sqrt{\frac{d}{m}},
    \end{align*}
    and similarly
    \begin{align*}
        \|\Tilde{\Delta}^{k}_{x}\|_{L^{2}} &\leq \|\Tilde{\Delta}^{k-1}_{x}\|_{L^{2}} + h\|\Tilde{\Delta}^{k-1}_{v}\|_{L^{2}} +  h^{2}M\|y_{k-1} - X_{(k-1/2)h}\|_{L^{2}} + h^{2}\sqrt{Md} + h^{2}M\sqrt{\frac{d}{m}}\\
        &\leq  (1 + h^{2}M)\|\Tilde{\Delta}^{k-1}_{x}\|_{L^{2}} + h(1 + h^{2}M)\|\Tilde{\Delta}^{k-1}_{v}\|_{L^{2}} + 3h^{2}M\sqrt{\frac{d}{m}}.
    \end{align*}
    We then have that 
    \begin{align*}
        &\|\Tilde{\Delta}^{k}_{x}\|_{L^{2}} + \frac{1}{\sqrt{M}}\|\Tilde{\Delta}^{k}_{v}\|_{L^{2}} \leq  (1 +  2h\sqrt{M})\left(\|\Tilde{\Delta}^{k-1}_{x}\|_{L^{2}} + \frac{1}{\sqrt{M}}\|\Tilde{\Delta}^{k-1}_{v}\|_{L^{2}}\right) + 5h\sqrt{\frac{Md}{m}}\\
        &\leq e^{2h\sqrt{M}k}\left(\|\Tilde{\Delta}^{0}_{x}\|_{L^{2}} + \frac{1}{\sqrt{M}}\|\Tilde{\Delta}^{0}_{v}\|_{L^{2}}\right) + 5e^{2h\sqrt{M}k}hk\sqrt{\frac{Md}{m}}
    \end{align*}
    and therefore $\|\Tilde{\Delta}_{v}\|_{L^{2}} \leq \sqrt{M}\left[e^{2h\sqrt{M}k}\left(\|\Tilde{\Delta}^{0}_{x}\|_{L^{2}} + \frac{1}{\sqrt{M}}\|\Tilde{\Delta}^{0}_{v}\|_{L^{2}}\right) + 5e^{2h\sqrt{M}k}hk\sqrt{\frac{Md}{m}}\right]$.

    Therefore 
    \begin{align*}
        &\|\Tilde{\Delta}^{k}_{x}\|_{L^{2}} \leq  (1 + h^{2}M)\|\Tilde{\Delta}^{k-1}_{x}\|_{L^{2}} + h(1 + h^{2}M)\sqrt{M}\left[e^{2h\sqrt{M}k}\left(\|\Tilde{\Delta}^{0}_{x}\|_{L^{2}} + \frac{1}{\sqrt{M}}\|\Tilde{\Delta}^{0}_{v}\|_{L^{2}}\right)\right]\\
        &+8h^{2}ke^{2h\sqrt{M}k}(1 + h^{2}M)M\sqrt{\frac{d}{m}}\\
        &\leq e^{h^{2}Mk}\|\Tilde{\Delta}^{0}_{x}\|_{L^{2}} + 2e^{3kh\sqrt{M}}\left[hk\sqrt{M}\left(\|\Tilde{\Delta}^{0}_{x}\|_{L^{2}} + \frac{1}{\sqrt{M}}\|\Tilde{\Delta}^{0}_{v}\|_{L^{2}}\right) +8(hk)^{2}M\sqrt{\frac{d}{m}}\right]\\
        &\leq \sqrt{2}e^{h^{2}Mk}\|(\Tilde{\Delta}^{0}_{x},\Tilde{\Delta}^{0}_{v})\|_{L^{2},a,b} + 4e^{3kh\sqrt{M}}\left[hk\sqrt{M}\|(\Tilde{\Delta}^{0}_{x},\Tilde{\Delta}^{0}_{v})\|_{L^{2},a,b} +8(hk)^{2}M\sqrt{\frac{d}{m}}\right].
    \end{align*}

    Combining these estimates with Lemma \ref{lem:SMSUBUsweepbounds} (when considering the iterates of SMS-UBU $(x_{i},v_{i})_{i \in \mathbb{N}}$ approximating \eqref{eq:LD} $(X_{t},V_{t})_{t\geq 0}$) we have for any $i \in \mathbb{N}$ that we can apply the above estimate for $k = i-\lceil\frac{i}{2N_{D}}\rceil \leq 2N_{D}$ and then apply Lemma \ref{lem:SMSUBUsweepbounds} for the remainder and we have the required result. 
\end{proof}

\begin{proof}[Proof of Theorem \ref{maintheorem}]
For $k \in \mathbb{N}$ define $(\hat{x}_{k},\hat{v}_{k})$ to be SMS-UBU and $(X_t, V_t)_{t \geq 0}$ to be \eqref{eq:LD}, both initialized at the target measure $(\hat{x}_{0},\hat{v}_{0}) := (X_{0},V_{0}) \sim \overline{\pi}$ then,
\begin{align*}
    &\|x_{k} - X_{kh}\|_{L^{2}} \leq \|x_{k} - \hat{x}_{k}\|_{L^{2}} + \|\hat{x}_{k} - X_{kh}\|_{L^{2}}\\
    &\leq \sqrt{2}\|(x_{k} - \hat{x}_{k},v_{k}-\hat{v}_{k})\|_{L^{2},a,b} + \|\hat{x}_{k} - X_{kh}\|_{L^{2}}\\
    &\leq \sqrt{2}\exp{\left(-\frac{mh}{8\gamma}\left\lfloor \frac{k}{N_{m}}\right\rfloor\right)}\|(x_{0} - X_{0},v_{0}-V_{0})\|_{L^{2},a,b} \\&+ \|\hat{x}_{k} - X_{kh}\|_{L^{2}},
\end{align*}
where the final term can be bounded by Theorem \ref{theorem:second_order_bias} under appropriate assumptions.
\end{proof}

\begin{theorem}
\label{theorem:SG-UBU}
Considering the SG-UBU scheme with friction parameter $\gamma >0$, stepsize $h>0$, Markov kernel $P$ and initial measure $\pi_{0}$ and assuming that $h<\frac{1}{2\gamma}$, $\gamma \geq \sqrt{8M}$ and the stochastic gradient satisfies Assumptions \ref{Assumption:Bounded_Variance} and \ref{assum:stochastic_gradient} with constants $C_{G}$ and $C_{SG}$ respectively. Let the potential $\uu$ be $M$-$\nabla$Lipschitz, $m$-strongly convex and of the form $f = \sum^{N_{m}}_{i=1}f_{i}$. Then at the $k$-th iteration we have the non-asymptotic bound
    \begin{align*}
    \mathcal{W}_{2,a,b}(\pi_{0} P^{k},\overline{\pi}) &\leq  \left(1-\frac{mh}{4\gamma} + \frac{5h^{2}C_{G}}{M}\right)^{k/2}\mathcal{W}_{2,a,b}(\pi_{0},\overline{\pi}) \\
    &+ C(\gamma/\sqrt{N_{m}},m/N_{m},M/N_{m},C_{G}/N^{2}_{m})\left[\frac{C_{SG}\sqrt{h}}{N^{1/4}_{m}} + h\right]\sqrt{d}.
    \end{align*}
\end{theorem}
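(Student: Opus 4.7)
The plan is to establish Theorem \ref{theorem:SG-UBU} by combining a perturbed Wasserstein contraction of the SG-UBU kernel $P$ with a bound on its one-step deviation from the invariant measure $\overline{\pi}$, then iterating. To recover the $\sqrt{h}$ scaling of the stochastic-gradient contribution it is essential to work with the squared weighted Wasserstein distance: the mean-zero stochastic perturbations accumulate in variance (giving an extra factor of $1/h$ after dividing by the spectral gap $1-r \sim mh/(4\gamma)$), and taking square roots at the end produces the $\sqrt{h}$ behaviour that would not appear if one iterated in the unsquared Wasserstein metric.

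First I would establish a one-step contraction $\mathcal{W}_{2,a,b}^2(\mu P, \nu P) \le r\, \mathcal{W}_{2,a,b}^2(\mu, \nu)$ with $r = 1 - mh/(4\gamma) + 5h^2 C_G/M$. Coupling two copies of SG-UBU synchronously in the Brownian increments and the minibatch index $\omega$, the two trajectories differ by $\mathcal{G}(y_1,\omega) - \mathcal{G}(y_2,\omega)$ in place of $\nabla f(y_1) - \nabla f(y_2)$. Writing this difference as an integral of $D_x\mathcal{G}(\cdot,\omega)$ and invoking Assumption \ref{Assumption:Bounded_Variance} to bound the stochastic part of the Jacobian with $L^2$-norm $\sqrt{C_G}$, the contraction calculation underlying Proposition \ref{prop:Wasserstein} survives with only an additional perturbation of order $h^2 C_G/M$ in the squared rate, yielding the $5h^2 C_G/M$ correction.

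Second I would bound the one-step deviation $\mathcal{W}_{2,a,b}(\overline{\pi} P, \overline{\pi})$ at stationarity by splitting it into a deterministic discretization part and a stochastic-gradient part. The deterministic part is a single-step reduction of the full-gradient UBU error and is of order $h^{5/2}\sqrt{d}$ per step via Proposition \ref{prop:full_gradient}; summed across $k$ iterations under the contraction it contributes an $O(h\sqrt{d})$ term. The stochastic part comes from the extra velocity increment $-h\mathcal{E}(h/2)\bigl(\mathcal{G}(\tilde y,\omega) - \nabla f(\tilde y)\bigr)$, whose conditional second moment is bounded by $h^2 C_{SG}^2 M^2 \|\tilde y - x^*\|^2$ via Assumption \ref{assum:stochastic_gradient}; combining with Lemma \ref{lemma:moment_bounds} and moment bounds on $\overline{\pi}$ produces a per-step squared contribution of order $h^2 C_{SG}^2 d / \sqrt{N_m}$. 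Independence of minibatch draws across iterations lets these mean-zero contributions sum as variances rather than standard deviations, so under the squared-Wasserstein iteration the total squared bias is of order $h C_{SG}^2 d / \sqrt{N_m}$, yielding $\sqrt{h}\, C_{SG}/N_m^{1/4} \cdot \sqrt{d}$ after a square root. Iterating $\mathcal{W}_{2,a,b}(\pi_0 P^k, \overline{\pi}) \le r^{k/2}\mathcal{W}_{2,a,b}(\pi_0, \overline{\pi}) + \sum_{i=0}^{k-1} r^{i/2} \beta$ with $\beta$ covering both error sources, and absorbing the constants via the rescaling $\tilde f = f/N_m$ (which maps $\gamma, m, M, C_G$ into the effective constants $\gamma/\sqrt{N_m}, m/N_m, M/N_m, C_G/N_m^2$), gives the stated bound.

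The main obstacle will be tracking the $N_m$ dependence precisely, in particular obtaining the $1/N_m^{1/4}$ improvement in the stochastic-gradient bias. This requires exploiting both the independence of minibatch draws across iterations (so that squared errors sum rather than their square roots) and the precise scaling of Assumption \ref{assum:stochastic_gradient} under the rescaled potential $\tilde f$, while simultaneously controlling $\|\tilde y - x^*\|^2$ uniformly along the SG-UBU iteration via a Gr\"onwall-type estimate in order to justify the stationarity-based moment bounds on the stochastic gradient variance.
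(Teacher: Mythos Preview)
Your approach is valid but genuinely different from the paper's. The paper does \emph{not} iterate a one-step contraction plus a one-step bias; instead it couples SG-UBU to the full-gradient UBU chain (both initialized at $\overline{\pi}$ with synchronous Brownian motion), unrolls the difference $(\Delta^k_x,\Delta^k_v)$ over $k$ steps, and writes the stochastic contribution as the sum $h\sum_{i=1}^k \mathcal{E}(h)^{k-i}\bigl(\nabla f(y_{i-1})-\mathcal{G}(y_{i-1},\omega_i)\bigr)$. Because the $\omega_i$ are independent and the summands are conditionally mean-zero, this sum is bounded in $L^2$ by $h\sqrt{\sum_i \|\cdot\|^2}\sim h\sqrt{k}$; together with a Gr\"onwall factor $e^{4hk\sqrt{M}}$ this gives a block estimate, which is then propagated by interpolation over blocks of size $\lfloor 1/(4h\sqrt{M})\rfloor$ using the SG-UBU contraction (from \cite[Proposition~B.3.10]{PeterThesis}). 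The full-gradient bias is added on at the end via \cite[Proposition~H.3]{ububu}.

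Comparing the two: your Dalalyan--Karagulyan style one-step squared iteration also works in principle, and both ultimately extract the $\sqrt{h}$ by letting the mean-zero noise enter additively in the \emph{squared} distance. The paper's route has the practical advantage that the moment control $\|y_{i-1}-x^*\|_{L^2}$ is needed only for the \emph{full-gradient} midpoints started from $\overline{\pi}$, which is immediate from stationarity plus the first-order discretization bound; this sidesteps the circularity you flag as the main obstacle (bounding SG-UBU moments uniformly along the trajectory from arbitrary $\mu$). One caution on your side: the recursion you wrote at the end, $\mathcal{W}_{2,a,b}(\pi_0 P^k,\overline{\pi})\le r^{k/2}\mathcal{W}_{2,a,b}(\pi_0,\overline{\pi})+\sum_{i=0}^{k-1}r^{i/2}\beta$, is the unsquared triangle-inequality iteration and by itself yields only an $O(1)$ stochastic bias, not $O(\sqrt{h})$. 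To realize your plan you must set up the coupling $(z_0,Z_0)\mapsto (z_1,Z_h)$ explicitly (SG-UBU step versus continuous Langevin with synchronous Brownian), split $z_1-Z_h$ into noise, contraction and local discretization pieces, and use that the noise is orthogonal in $L^2$ to the other two; this yields a recursion of the form $w_{k+1}^2\le (r^{1/2}w_k+\alpha)^2+\sigma^2$, not the naive geometric sum.
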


\begin{proof}
We estimate the difference between the full-gradient UBU scheme and the stochastic gradient scheme UBU scheme initialized at the invariant measure as follows. We use the notation $\Delta^{k}_{x}$ and $\Delta^{k}_{v}$ to be the difference in position and velocity at iteration $k \in \mathbb{N}$ respectively. We also use synchronously coupled Brownian motion.

Using \eqref{eq:disc_v} and \eqref{eq:disc_v_stoch} we have
\begin{align*}
    \Delta^{k}_{v} &=  - h\mathcal{E}(h/2)\sum^{k}_{i=1}\mathcal{E}(h)^{k-(i-1)}\left(\nabla \uu(y_{i-1}) - \mathcal{G}(\Tilde{y}_{i-1},\omega_{i})\right),\\
    \Delta^{k}_{x} &= \mathcal{F}(h)\sum^{k-1}_{i=1}\Delta^{i}_{v}- h\mathcal{F}(h/2)\sum^{k}_{i=1}\left(\nabla \uu(y_{i-1}) - \mathcal{G}(\Tilde{y}_{i-1},\omega_{i})\right).
\end{align*}
Then using independence of the stochastic gradient at each iteration we have, from the fact that $\mathcal{G}$ is $M$-Lipschitz,
\begin{align*}
    \|(\Delta^{k}_{x},\Delta^{k}_{v})\|_{L^{2},a,b} &\leq 4h\sqrt{M}\sum^{k-1}_{i=1}\|(\Delta^{i}_{x},\Delta^{i}_{v})\|_{L^{2},a,b} + \frac{2h}{\sqrt{M}}\sqrt{\sum^{k}_{i=1}\left\|\nabla \uu(y_{i-1}) - \mathcal{G}(y_{i-1},\omega_{i})\right\|^{2}},\\
    \intertext{and using Assumption \ref{assum:stochastic_gradient} we have}
    &\leq 4h\sqrt{M}\sum^{k-1}_{i=1}\|(\Delta^{i}_{x},\Delta^{i}_{v})\|_{L^{2},a,b} + \frac{2h}{\sqrt{M}}\sqrt{\sum^{k}_{i=1}C^{2}_{SG}M^{2}\|y_{i-1}-x^{*}\|^{2}}.\\
    \intertext{Then using a triangle inequality, the fact that $\|x-x^{*}\|_{L^{2}} \leq \sqrt{d/m}$ for $x \sim \pi$, and $\|y_{i} - X_{(i+1/2)}\|_{L^{2}} \leq C\frac{M}{m}h\sqrt{d}$ by \cite[Proposition H.3]{ububu} we have}
    &\leq 4h\sqrt{M}\sum^{k-1}_{i=1}\|(\Delta^{i}_{x},\Delta^{i}_{v})\|_{L^{2},a,b} + \frac{Ch}{\sqrt{M}}\sqrt{kC^{2}_{SG}M^{2}\left(\frac{h^{2}M^{2}d}{m^{2}} + \frac{d}{m}\right)}\\
    &\leq \frac{Ch}{\sqrt{M}}e^{4hk\sqrt{M}}\sqrt{kC^{2}_{SG}M^{2}\left(\frac{h^{2}M^{2}d}{m^{2}} + \frac{d}{m}\right)}.
\end{align*}
Using interpolation with blocks of size $\left\lfloor \frac{1}{4h\sqrt{M}}\right\rfloor$ as we did in Lemma \ref{lem:SMSUBUsweepbounds} with the Wasserstein convergence result of \cite[Proposition B.3.10]{PeterThesis} we have
\begin{align*}
 \|(\Delta^{k}_{x},\Delta^{k}_{v})\|_{L^{2},a,b} &\leq C\frac{\gamma \sqrt{M}}{m - h\gamma C_{G}/M}\frac{\sqrt{h}}{\sqrt{M}}\sqrt{\frac{C^{2}_{SG}M^{2}}{\sqrt{M}}\left(\frac{h^{2}M^{2}d}{m^{2}} + \frac{d}{m}\right)}\\
 &\leq C(\gamma/\sqrt{N_{m}},m/N_{m},M/N_{m},C_{G}/N^{2}_{m})\frac{C_{SG}\sqrt{hd}}{N^{1/4}_{m}},
\end{align*}
but we have only bounded the difference between the two schemes and not the diffusion. To conclude the bias estimate we use a triangle inequality with the global error bound for the full-gradient scheme which is a consequence of \cite[Proposition H.3]{ububu}. We also use the contraction results of \cite[Proposition B.3.10]{PeterThesis} in combination with the preceeding bias bounds to achieve the non-asymptotic guarantees.

\end{proof}

\section{Further details on numerical experiments}\label{sec:furtherdetailsnumerics}
\subsection{ Evaluation of bias}\label{sec:biasexplanation}
We have evaluated the bias of numerical integrators by synchronously coupling the integrator at step sizes $h$ and $h/2$. For example, if the SG-HMC chain in Algorithm \ref{alg:SG-EM} at stepsize $h/2$ uses Gaussians $\xi_{2k-1}$ and $\xi_{2k}$ at steps $2k-1$ and $2k$, then the coupled chain at stepsize $h$ uses $(\xi_{2k-1}+\xi_{2k})/\sqrt{2}$ at step $k$. Such couplings allowed us to estimate biases more accurately as the variance of the differences becomes significantly smaller compared to using independent chains, while the expected value of the differences remains the same. 
The maximum stepsize below which stability issues occurred was $h_0=2\cdot 10^{-3}$.
Let $h_l=h_0\cdot 2^{-l}$ for $l\ge 0$. We have constructed couplings between chains with stepsizes $(h_0, h_1), (h_1, h_2),\ldots (h_5,h_6)$, run coupled chains for 
$400\cdot 2^l$ epochs for levels $(h_l, h_{l+1})$, discarded 20\% of the samples as burn-in, and used the rest for estimating the difference in expectations between stepsizes $h_l$ and $h_{l+1}$. We have estimated the full bias using the telescoping sum $\pi_{h_0}(g)-\pi_{h_6}(g)=(\pi_{h_0}(g)-\pi_{h_1}(g))+\ldots+ (\pi_{h_5}(g)-\pi_{h_6}(g))$, assuming that the bias $\pi_{h_6}$ is negligible. We have estimated the standard deviation of these bias estimates by breaking the total simulation run into 4 chunks of equal size, computing bias estimates based on each one separately, and then computing the standard deviations.
\subsection{Behaviour of the norm of Hessian at different points in the weight space}\label{sec:Hessiannorm}
Figure \ref{fig:Hessiannorm} shows the behaviour of the norm of the Hessian for our CNN-based neural network architecture for Fashion-MNIST at different points in the weight space. The norm can be efficiently computed using the power iteration method (\cite{mises1929praktische}) based on Hessian-vector products.
\begin{figure}[H]
\includegraphics[width=0.48\linewidth]{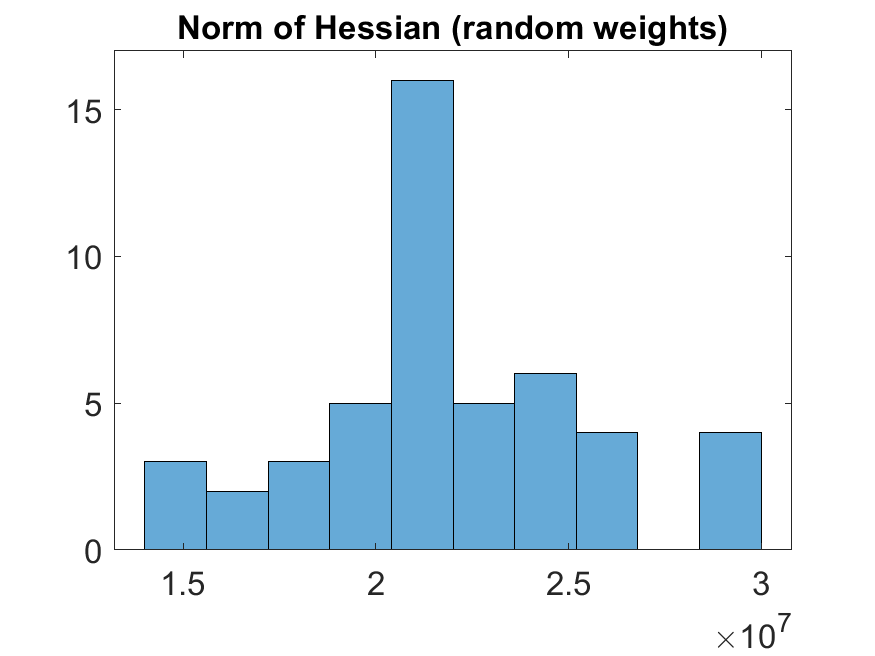}
\includegraphics[width=0.48\linewidth]{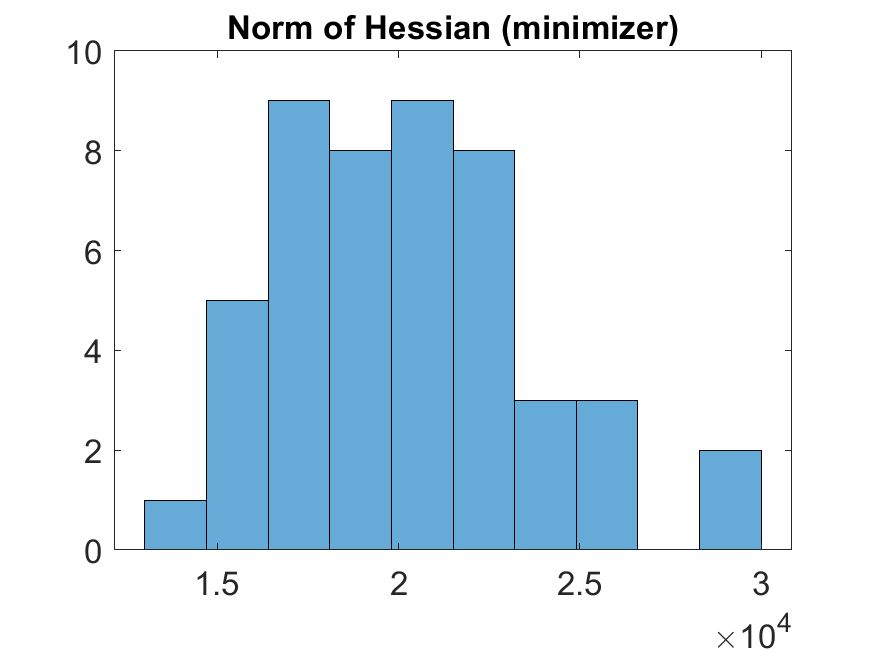}\\
\includegraphics[width=0.48\linewidth]{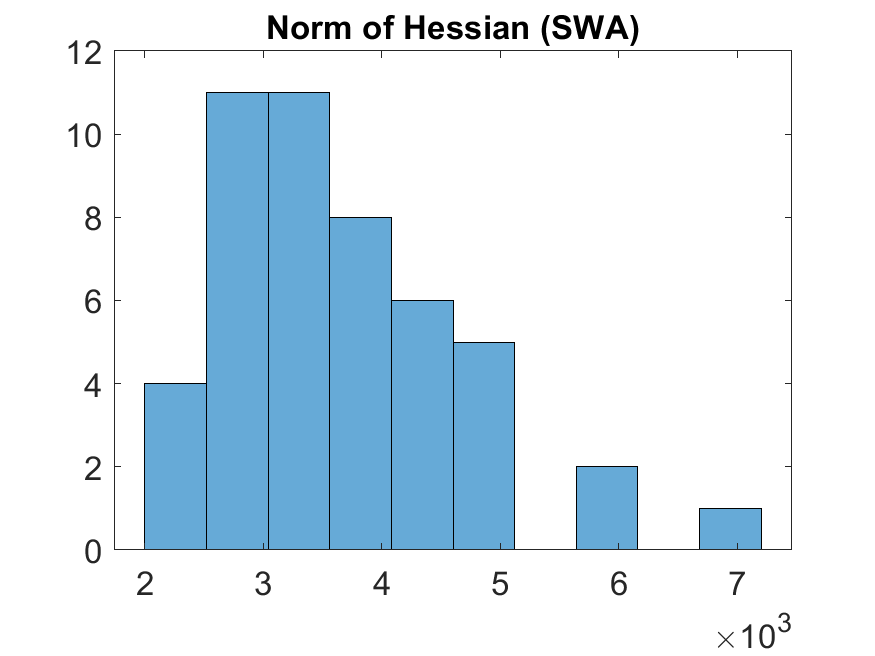}
\includegraphics[width=0.48\linewidth]{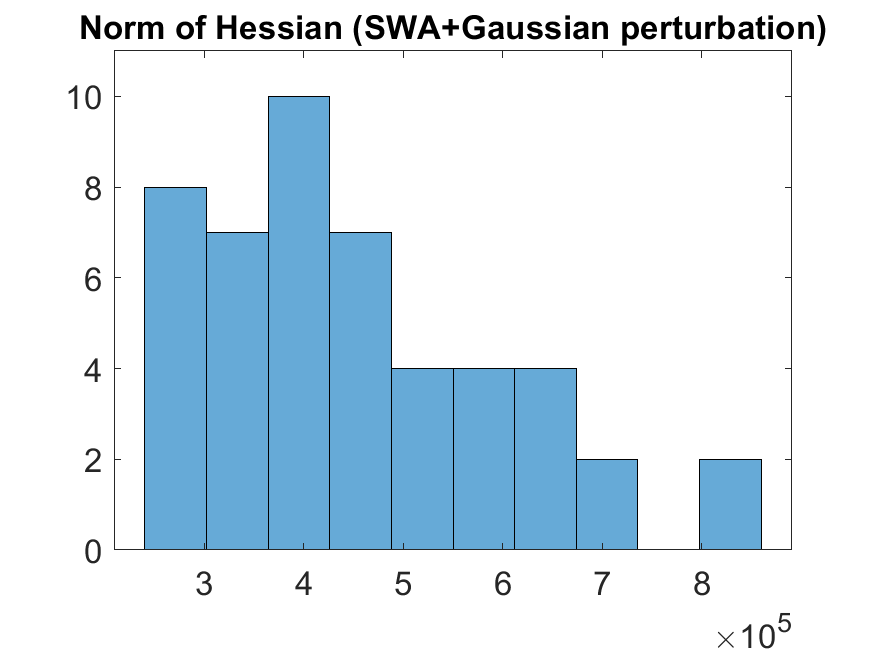}
\caption{Norm of the Hessian for a CNN-based network for Fashion-MNIST at different points in the weight space over 48 independent runs: random initialisation, minimizer after 15 epochs of training (ADAM), Stochastic Weight Averaging over 5 epochs, and Gaussian perturbation of the SWA network}
\label{fig:Hessiannorm}
\end{figure}
As we can see, the Hessian has a norm of up to $3 \cdot 10^7$ for a network with random weights, but it has much smaller norm near local minimizers, and especially at the SWA points. Adding some Gaussian noise to the SWA network does lead to a significant increase in the norm of the Hessian, but it is still small compared to using random weights. This observation, together with our theoretical understanding of convergence properties of  kinetic Langevin dynamics integrators, leads us to focus on exploring the relatively smooth area in the neighbourhood of the SWA network optima.

\section{Additional experiments for a wide neural network for CIFAR-10}
It has been argued in the literature (\cite{abe2022deep}) that the improvements due to the use deep ensembles can be replicated by using wider, larger models with more parameters. We have implemented a wide and large neural network for the CIFAR-10 dataset (\cite{krizhevsky2009learning}) (10 classes, 50000 train images, 10000 test images, resolution $32\times 32$, 3 channels). This network is also CNN-based, but it had a number of channels varying from 512 to 1024, which is 4-16 times wider than the networks we used in the other examples. In addition to increasing the width, we have also increased the number of convolutional layers from 6 to 12. The total number of parameters of this network was over 73 million. Pytorch code for this network is provided in Section \ref{sec:Pytorch.code.for.networks}.

We did not use any data augmentation in this example. Quadratic regularization with precision 1 was used. We have trained this network for 22 epochs, followed by 8 epochs of stochastic weight averaging (SWA) \cite{SWA}, and 60 epochs of SMS-UBU (5 epochs of burn-in, 55 epochs sampling). Our hyperparameter choices are stated in Table \ref{tab:expdetails3}.
\setcounter{table}{2}
\begin{table}[h!]
\caption{Choices of hyperparameters for CIFAR-10 experiment}
\centering
\begin{tabular}{|c|c|c|c|c|c|c|}
\hline
Batch size & Initial L.R. & SWA L.R. & Stepsize $h$ & $\rho$ & $\rho_{\max}$ & Friction $\gamma$\\
\hline
200 & $10^{-2}$ & $10^{-3}$ & $5 \cdot 10^{-4}$ & $50^{-1/2}$ & $6\cdot 50^{-1/2}$ & $\rho^{-1}$\\
\hline
\end{tabular}
\label{tab:expdetails3}
\end{table}

Due to the higher computational cost associated with a larger network, we only performed 4 independent runs with this network. We did not measure ensemble performance, but only the performance of a single model. Our results are stated in Table \ref{tab:cifar10results}. As we can see, our BNN implementation based on SMS-UBU improves slightly on the accuracy over SWA and standard training, and it significantly improves in calibration performance (ACE, NLL, and RPS). The accuracy result of $0.8979$ obtained using only 90 epochs of training is better than the $0.8964$ accuracy obtained using a different network trained with HMC over 60 million epochs in \cite{BNNpost}.
\begin{table}[h!]
\caption{Accuracy and calibration performance of wide neural network model for CIFAR-10, with standard deviations}
\centering
\begin{tabular}{|c|c|c|c|c|}
\hline
Training Method & Accuracy (test set) & ACE (test set) & NLL (test set) & RPS (test set)\\
\hline
ADAM & $0.8734 (\pm 0.0045)$ & $0.0150 (\pm 0.0011)$ & $0.5475(\pm 0.0352)$ & $0.0387 (\pm 0.0016)$\\
\hline
SWA & $0.8943 (\pm 0.0046)$ & $0.0138 (\pm 0.0007)$ & $0.5620 (\pm 0.0507)$ & $0.0340 (\pm 0.0016)$ \\
\hline
BNN (SMS-UBU) & $0.8979 (\pm 0.0020)$ & $0.0054 (\pm 0.0003)$ & $0.3086 (\pm 0.0061)$ & $0.0285 (\pm 0.0006)$\\
\hline
\end{tabular}
\label{tab:cifar10results}
\end{table}

\subsection{Pytorch code for networks}
\label{sec:Pytorch.code.for.networks}
Pytorch code for neural network of Fashion-MNIST example:
\begin{verbatim}
in_channels: int = 3
num_classes: int = 2
flattened_size: int = 16384
low_rank: int = 32

self.conv_layer = nn.Sequential(
    # Conv Layer block 1
    nn.Conv2d(in_channels=in_channels, out_channels=32, kernel_size=3, padding=1),
    nn.Softplus(beta=1.0),
    nn.BatchNorm2d(32,momentum=1.0),
    nn.Conv2d(in_channels=32, out_channels=64, kernel_size=3, padding=1),
    nn.Softplus(beta=1.0),
    nn.MaxPool2d(kernel_size=2, stride=2),
    nn.BatchNorm2d(64,momentum=1.0),
    # Conv Layer block 2
    nn.Conv2d(in_channels=64, out_channels=64, kernel_size=3, padding=1),
    nn.Softplus(beta=1.0),
    nn.BatchNorm2d(64,momentum=1.0),
    nn.Conv2d(in_channels=64, out_channels=128, kernel_size=3, padding=1),
    nn.Softplus(beta=1.0),
    nn.BatchNorm2d(128,momentum=1.0),
    nn.Conv2d(in_channels=128, out_channels=128, kernel_size=3, padding=1),
    nn.MaxPool2d(kernel_size=2, stride=2),
    # Conv Layer block 3
    nn.BatchNorm2d(128,momentum=1.0),
    nn.Conv2d(in_channels=128, out_channels=256, kernel_size=3, padding=1),
    nn.Softplus(beta=1.0),
    nn.BatchNorm2d(256,momentum=1.0),
    nn.Conv2d(in_channels=256, out_channels=256, kernel_size=3, padding=1),
    nn.Softplus(beta=1.0),
    nn.MaxPool2d(kernel_size=2, stride=2),
)

self.fc_layer = nn.Sequential(
    nn.Flatten() 
    nn.BatchNorm1d(flattened_size,momentum=1.0),
    nn.Linear(flattened_size, low_rank),
    nn.BatchNorm1d(low_rank,momentum=1.0),
    nn.Linear(low_rank,512),            
    nn.Softplus(beta=1.0),
    nn.BatchNorm1d(512,momentum=1.0),
)

self.last_layer=nn.Sequential(
    nn.Linear(512, num_classes)
)
\end{verbatim}

Pytorch code for neural network of hair colour (blonde/brown) classification on Celeb-A dataset:
\begin{verbatim}
in_channels: int = 3
num_classes: int = 2
flattened_size: int = 16384
low_rank: int = 16

self.conv_layer = nn.Sequential(
    # Conv Layer block 1
    nn.Conv2d(in_channels=in_channels, out_channels=32, kernel_size=3, padding=1),
    nn.Softplus(beta=1.0),
    nn.BatchNorm2d(32,momentum=1.0),
    nn.Conv2d(in_channels=32, out_channels=64, kernel_size=3, padding=1),
    nn.Softplus(beta=1.0),
    nn.MaxPool2d(kernel_size=2, stride=2),
    nn.BatchNorm2d(64,momentum=1.0),            
    # Conv Layer block 2
    nn.Conv2d(in_channels=64, out_channels=128, kernel_size=3, padding=1),
    nn.Softplus(beta=1.0),
    nn.BatchNorm2d(128,momentum=1.0),
    nn.Conv2d(in_channels=128, out_channels=128, kernel_size=3, padding=1),
    nn.Softplus(beta=1.0),
    nn.MaxPool2d(kernel_size=2, stride=2),
    nn.BatchNorm2d(128,momentum=1.0),
    # Conv Layer block 3
    nn.Conv2d(in_channels=128, out_channels=256, kernel_size=3, padding=1),
    nn.Softplus(beta=1.0),
    nn.BatchNorm2d(256,momentum=1.0),
    nn.Conv2d(in_channels=256, out_channels=256, kernel_size=3, padding=1),
    nn.Softplus(beta=1.0),            
    nn.MaxPool2d(kernel_size=2, stride=2),
)

self.fc_layer = nn.Sequential(

    nn.BatchNorm1d(flattened_size,momentum=1.0),
    nn.Linear(flattened_size, low_rank),
    nn.BatchNorm1d(low_rank,momentum=1.0),
    nn.Linear(low_rank, 4096),                        
    nn.Softplus(beta=1.0),
    nn.BatchNorm1d(4096,momentum=1.0),
    nn.Linear(4096, low_rank),
    nn.BatchNorm1d(low_rank,momentum=1.0),
    nn.Linear(low_rank, 1024),
    nn.Softplus(beta=1.0),
    nn.BatchNorm1d(1024,momentum=1.0),
    nn.Linear(1024, low_rank),
    nn.BatchNorm1d(low_rank,momentum=1.0),
    nn.Linear(low_rank, 512),            
    nn.Softplus(beta=1.0), 
    nn.BatchNorm1d(512,momentum=1.0),
)

self.last_layer=nn.Sequential(
    nn.Linear(512, num_classes)
)
\end{verbatim}

Pytorch code for neural network of Chest X-ray example:
\begin{verbatim}
in_channels: int = 1
num_classes: int = 2
flattened_size: int = 12544
low_rank: int = 32

self.conv_layer = nn.Sequential(
    nn.Conv2d(in_channels=in_channels, out_channels=32, kernel_size=3, stride=1, padding=1),
    nn.Softplus(beta=1.0),
    nn.MaxPool2d(kernel_size=2, stride=2, padding=1),
# Second Convolutional Block
    nn.BatchNorm2d(32,momentum=1.0),
    nn.Conv2d(in_channels=32, out_channels=64, kernel_size=3, stride=1, padding=1),
    nn.Softplus(beta=1.0),
    nn.MaxPool2d(kernel_size=2, stride=2, padding=1),        
# Third Convolutional Block
    nn.BatchNorm2d(64,momentum=1.0),
    nn.Conv2d(in_channels=64, out_channels=64, kernel_size=3, stride=1, padding=1),
    nn.Softplus(beta=1.0),
    nn.MaxPool2d(kernel_size=2, stride=2, padding=1),        
# Fourth Convolutional Block
    nn.BatchNorm2d(64,momentum=1.0),
    nn.Conv2d(in_channels=64, out_channels=128, kernel_size=3, stride=1, padding=1),
    nn.Softplus(beta=1.0),
    nn.MaxPool2d(kernel_size=2, stride=2, padding=1),      
# Fifth Convolutional Block
    nn.BatchNorm2d(128,momentum=1.0),
    nn.Conv2d(in_channels=128, out_channels=256, kernel_size=3, stride=1, padding=1),
    nn.Softplus(beta=1.0),
    nn.MaxPool2d(kernel_size=2, stride=2, padding=1),
)

self.fc_layer = nn.Sequential(
    nn.Flatten()
    nn.BatchNorm1d(flattened_size,momentum=1.0),
    nn.Linear(flattened_size, low_rank),
    nn.BatchNorm1d(low_rank,momentum=1.0),
    nn.Linear(low_rank, 512),            
    nn.Softplus(beta=1.0), 
    nn.BatchNorm1d(512,momentum=1.0),
)

self.last_layer=nn.Sequential(
    nn.Linear(512, num_classes)
)
\end{verbatim}

Pytorch code for wide neural network of CIFAR-10 example (73038346 parameters):
\begin{verbatim}
in_channels: int = 3
num_classes: int = 10
flattened_size: int = 16384
low_rank: int = 512

self.conv_layer = nn.Sequential(
nn.Sequential(
    nn.Conv2d(in_channels=in_channels, out_channels=512, kernel_size=3, padding=1),
    nn.Softplus(beta=1.0),
    nn.BatchNorm2d(512,momentum=1.0),
    nn.Conv2d(in_channels=512, out_channels=512, kernel_size=3, padding=1),
    nn.Softplus(beta=1.0),
    nn.BatchNorm2d(512,momentum=1.0),
    nn.Conv2d(in_channels=512, out_channels=512, kernel_size=3, padding=1),
    nn.Softplus(beta=1.0),            
    nn.BatchNorm2d(512,momentum=1.0),
    nn.Conv2d(in_channels=512, out_channels=512, kernel_size=3, padding=1),
    nn.Softplus(beta=1.0),
    nn.MaxPool2d(kernel_size=2, stride=2),
    nn.BatchNorm2d(512,momentum=1.0),


    # Conv Layer block 2
    nn.Conv2d(in_channels=512, out_channels=768, kernel_size=3, padding=1),
    nn.Softplus(beta=1.0),
    nn.BatchNorm2d(768,momentum=1.0),
    nn.Conv2d(in_channels=768, out_channels=768, kernel_size=3, padding=1),
    nn.Softplus(beta=1.0),
    nn.BatchNorm2d(768,momentum=1.0),
    nn.Conv2d(in_channels=768, out_channels=768, kernel_size=3, padding=1),
    nn.Softplus(beta=1.0),            
    nn.BatchNorm2d(768,momentum=1.0),
    nn.Conv2d(in_channels=768, out_channels=768, kernel_size=3, padding=1),
    nn.Softplus(beta=1.0),
    nn.MaxPool2d(kernel_size=2, stride=2),
    nn.BatchNorm2d(768,momentum=1.0),

    #Conv Layer block 3
    nn.Conv2d(in_channels=768, out_channels=1024, kernel_size=3, padding=1),
    nn.Softplus(beta=1.0),
    nn.BatchNorm2d(1024,momentum=1.0),
    nn.Conv2d(in_channels=1024, out_channels=1024, kernel_size=3, padding=1),
    nn.Softplus(beta=1.0),     
    nn.BatchNorm2d(1024,momentum=1.0),
    nn.Conv2d(in_channels=1024, out_channels=1024, kernel_size=3, padding=1),
    nn.Softplus(beta=1.0),
    nn.BatchNorm2d(1024,momentum=1.0),
    nn.Conv2d(in_channels=1024, out_channels=1024, kernel_size=3, padding=1),
    nn.Softplus(beta=1.0),            
    nn.MaxPool2d(kernel_size=2, stride=2),            
)

self.fc_layer = nn.Sequential(
    nn.Flatten() 
    nn.BatchNorm1d(flattened_size,momentum=1.0),
    nn.Linear(flattened_size, low_rank),
    nn.BatchNorm1d(low_rank,momentum=1.0),
    nn.Linear(low_rank,2048),            
    nn.Softplus(beta=1.0),
    nn.BatchNorm1d(2048,momentum=1.0),
    nn.Linear(2048,low_rank),            
    nn.Softplus(beta=1.0),
    nn.BatchNorm1d(low_rank,momentum=1.0),
    nn.Linear(low_rank,1024),            
    nn.Softplus(beta=1.0),
    nn.BatchNorm1d(1024),
)

self.last_layer=nn.Sequential(
    nn.Linear(1024, num_classes)
)
\end{verbatim}

\section{Comparison with Symmetric Minibatch Splitting HMC}
\label{sec:comparsionwithSMSHMC}
In \cite{HMCBNN}, a symmetric minibatch splitting version of HMC was proposed, including a Metropolis-Hastings accept/reject step. As a comparison, we have implemented Metropolised method this for the same multinomial logistic regression example of Section 3.4, using the same variance reduction scheme on the potentials, using the same batch size and dataset. We have also tried a slightly modified version of the scheme of  \cite{HMCBNN}, stated in Algorithm \ref{alg:SMS-GHMC}. This version is different from the original scheme in terms of the type of leapfrog steps used (first a half step in position update, followed by a full step in velocity update, and then another half step in position update), and the partial velocity refreshment step (as in Generalized HMC (\cite{horowitz1991generalized})).
At the same step sizes, we found that Algorithm \ref{alg:SMS-GHMC} had significantly higher acceptance rates compared to the original scheme in \cite{HMCBNN}. We did some tuning of the parameters, and obtained the best performance $h=10^{-5}$, $L=10$, and $\alpha=0.7$. With $K=1000$ iterations, the acceptance rate was $0.844$. We discarded $20\%$ of the samples as burn-in, and computed the calibration performance 
based on the remaining ones. Accuracy was $0.8420$, ACE was $0.0195$, NLL was $0.4464$, and RPS was $0.0391$.

Comparing with Figure 1 on page 7 of the submission, we can see that the calibration results are essentially the same as for SMS-UBU and the other unadjusted methods, confirming that both have very low bias. Nevertheless, the step size $h=10^{-5}$ ($\log_2 h\approx -16.61$) for SMS-GHMC is 100 times smaller than the step size $h=10^{-3}$ at which SMS-UBU was already offering essentially the same calibration performance. Hence the Metropolized method SMS-GHMC is much more computationally expensive in this example compared to the unadjusted ones based on second-order integrators.

\setcounter{algorithm}{6}
\begin{algorithm}[h!]
    \footnotesize
    \begin{algorithmic}
		\State Initialize $\left(x_{0},v_{0}\right) \in \mathbb{R}^{2d}$, stepsize $h > 0$, number of minibatches $N_{m}$, number of sweeps per accept/reject step $L$, partial refreshment parameter $\alpha\in [0,1)$, number of iterations $K$.
            \For {$k = 1,2,...,K$}
            \State Sample $\omega_{1},...,\omega_{N_{m}} \in [N_{D}]^{N_{b}}$ uniformly without replacement
            \State $(x,v)\to (x_{k-1},v_{k-1})$
            \For {$l=1,2,\ldots, L$}
            \State \textbf{\underline{Forward Sweep}}            
            \For {$b = 1,2,...,N_{m}$}                 
                \begin{itemize}                
                \item[] \hspace{0.55cm}$x \to x+\frac{h}{2} v$
                \item[] \hspace{0.55cm}$v \to v-h\mathcal{G}(x,\omega_{b})$
                \item[] \hspace{0.55cm}$x \to x+\frac{h}{2} v$               
                \end{itemize}
            \EndFor
            \State \textbf{\underline{Backward Sweep}}
            \For {$b = 1,2,...,N_{m}$}
                \begin{itemize}                
                \item[] \hspace{0.55cm}$x \to x+\frac{h}{2}v$
                \item[] \hspace{0.55cm}$v \to v-h\mathcal{G}(x,\omega_{N_m+1-b})$
                \item[] \hspace{0.55cm}$x \to x+\frac{h}{2}v$                
                \end{itemize}
            \EndFor
        \EndFor
        \State \textbf{\underline{Metropolis-Hastings Accept/Reject Step}}         
        \State Compute difference of Hamiltonians $H(x_{k-1},v_{k-1})-H(x,v)=f(x_{k-1})+\frac{\|v_{k-1}\|^2}{2}-f(x)-\frac{\|v\|^2}{2}$.
        \State Sample $U_k\sim \mathrm{Uniform}[0,1]$.
        \If{$\log(U_k)<H(x_{k-1},v_{k-1})-H(x,v)$}
        \State $(x_{k},v_{k})=(x,v)$
        \Else
        \State $(x_{k},v_{k})=(x_{k-1},-v_{k-1})$. 
        \EndIf
        \State \textbf{\underline{Velocity Refreshment Step}}            
        \State Sample $Z_k\sim \mathcal{N}(0_d,I_d)$
        \State Update $v_k\to \alpha v_k + \sqrt{1-\alpha^2} Z_k$
        \EndFor
        \State \textbf{Output}: Samples $(x_{k})^{K}_{k=0}$.
    \end{algorithmic}
	\caption{Symmetric Minibatch Splitting Generalized HMC (SMS-GHMC)}
	\label{alg:SMS-GHMC}
\end{algorithm}

\subsection{Further details for calibration experiments}\label{sec:hyperparameterchoices}
The dataset sizes for our experiments are stated below.
\begin{table}
\centering
\begin{tabular}{|c|c|c|c|c|c|}
\hline
Dataset & Training size & Test size & Resolution & Channels & \# of param.\\
\hline
Fashion-MNIST & 60000 & 10000 & $28\times 28$ & 1 & 1265258\\ 
\hline
Celeb-A& 55400 & 6073 & $64\times 64$ & 3 & 1612306\\ 
\hline
Chest X-ray & 5217 & 600 & $180\times 180$ & 1 & 870882\\ 
\hline
\end{tabular}
\caption{Details about the datasets and neural networks used in our experiments}
\label{tab:expdetails}
\end{table}
In the Fashion-MNIST and Celeb-A examples, we did not use any data augmentation. For the chest X-ray experiments, we have used random rotations, shifts, cropping, and brightness changes to augment the dataset size to 47400, and then used those images for training. 

Batch size was chosen as 200 in each case. In all datasets, quadratic regularisation with variance 1 was applied on all weights except biases. An important implementation detail is that the batch normalization layers were not using running means, but only the current batch. This was done to ensure a well-defined loss function. To ensure good performance in the prediction task, we have evaluated the network a single larger batch consisting of 20 individual batches (4000 images) before predicting with network weights set to evaluation mode. In the Bayesian setting, no temperature parameter was used, but the potential was simply chosen as the total cross-entropy loss plus the regulariser.

Our hyperparameter choices are stated in Table \ref{tab:expdetails2}.
\begin{table}[h!]
\caption{Details about the datasets and neural networks used in our experiments}
\centering
\begin{tabular}{|c|c|c|c|c|c|c|}
\hline
Dataset & Initial L.R. & SWA L.R. & Stepsize $h$ & $\rho$ & $\rho_{\max}$ & Friction $\gamma$\\
\hline
Fashion-MNIST & $10^{-2}$ & $10^{-3}$ & $2.5 \cdot 10^{-4}$ & $50^{-1/2}$ & $6\cdot 50^{-1/2}$ & $\rho^{-1}$\\ 
\hline
Celeb-A(blonde/brun.)& $10^{-2}$ & $10^{-3}$ & $2.5 \cdot 10^{-4}$ & $0.2$ & $6\cdot 0.2$ & $\rho^{-1}$\\ 
\hline
Chest X-ray & $10^{-2}$ & $10^{-3}$ & $5 \cdot 10^{-4}$ & $50^{-1/2}$ & $6\cdot 50^{-1/2}$ & $\rho^{-1}$\\ 
\hline
\end{tabular}
\label{tab:expdetails2}
\end{table}












\vfill

\end{document}